\theoremstyle{plain}
\newtheorem{theorem}{Theorem}[section]
\newtheorem{prop}[theorem]{Proposition}
\theoremstyle{definition}
\newtheorem{assumption}[theorem]{Assumption}
\theoremstyle{remark}
\newtheorem{remark}[theorem]{Remark}
\icmltitlerunning{An Asymptotic Test for Conditional Independence using Analytic Kernel Embeddings}
\begin{document}

\twocolumn[
\icmltitle{An Asymptotic Test for Conditional Independence \\
using Analytic Kernel Embeddings}



\icmlsetsymbol{equal}{*}

\begin{icmlauthorlist}
\icmlauthor{Meyer Scetbon}{equal,crest}
\icmlauthor{Laurent Meunier}{equal,fb,dauph}
\icmlauthor{Yaniv Romano}{tech}
\end{icmlauthorlist}

\icmlaffiliation{crest}{CREST, ENSAE, France}
\icmlaffiliation{fb}{Facebook AI Research, Paris, France}
\icmlaffiliation{dauph}{Université Paris-Dauphine, France}
\icmlaffiliation{tech}{Departments of Electrical and Computer Engineering and of Computer Science, Technion, Israel}

\icmlcorrespondingauthor{Meyer Scetbon}{meyer.scetbon@ensae.fr}

\icmlkeywords{Machine Learning, ICML}

\vskip 0.3in
]



\printAffiliationsAndNotice{\icmlEqualContribution} 

\begin{abstract}
We propose a new conditional dependence measure and a statistical test for conditional independence. The measure is based on the difference between analytic kernel embeddings of two well-suited distributions evaluated at a finite set of locations. We obtain its asymptotic distribution under the null hypothesis of conditional independence and design a consistent statistical test from it. We conduct a series of experiments showing that our new test outperforms state-of-the-art methods both in terms of type-I and type-II errors even in the high dimensional setting.
\end{abstract}

\section{Introduction}
We consider the problem of testing whether two variables $X$ and $Y$ are independent given a set of confounding variables $Z$, which can be formulated as a hypothesis testing problem of the form:
\begin{align*}
    H_0:~ X\perp Y | Z\qquad\text{vs.}\qquad H_1:X\not\perp Y | Z.
\end{align*}
Testing for conditional independence (CI) is central in a wide variety of statistical learning problems. For example, it is at the core of graphical modeling~\citep{lauritzen1996graphical,koller2009probabilistic}, causal discovery~\citep{pearl2009causal,glymour2019review}, variable selection \citep{candes2018panning}, dimensionality reduction~\citep{li2018sufficient}, and biomedical studies~\citep{richardson1993bayesian,dobra2004sparse,markowetz2007inferring}. 
 
Testing for $H_0$ in such applications is known to be a highly challenging task~\citep{gcm2020,neykov2021minimax}. A large line of work has focused on the design of measures for conditional dependence based for example on kernel methods~\cite{fukumizu2008kernel,Sheng2019OnDA,NEURIPS2020_f340f1b1,huang2020kernel} and rank statistics~\cite{azadkia2021simple,shi2021azadkia}. Testing for conditional independence is even more difficult as it requires both designing a test statistic which measures the conditional dependencies and controlling its quantiles. Indeed, existing tests may fail to control the type-I error, especially when the confounding set of variables is high-dimensional with a complex dependency structure~\cite{bergsma2004testing}. Furthermore, even if the test is valid, the availability of limited data makes the problem of discriminating between the null and alternative hypotheses extremely difficult, resulting in a test of low power. These challenges has motivated the development of a series of practical methods attempting to reliably test for conditional independence. These include tests based on kernels ~\citep{zhang2012kernel,doran2014permutation,strobl2019approximate,Zhang2017FeaturetoFeatureRF}, ranks~\cite{runge2018conditional,mittag2018nonparametric}, models~\citep{sen2017modelpowered,sen2018mimic,chalupka2018fast,gcm2020}, permutations and  samplings~\citep{berrett2020conditional,candes2018panning,BellotS19, shi2021double,javanmard2021pearson},
and optimal transport~\cite{warren2021wasserstein}.

Another line of work aims at building statistical tests for different problems by computing difference of analytic kernel embeddings evaluated at a finite set of locations. Two main strategies are adopted in the literature: either the locations are chosen randomly or are learned in order to maximize the power of the test. In~\cite{epps1986omnibus,chwialkowski2015fast}, the authors propose two-sample tests where locations are chosen randomly. In~\cite{zhang2018large}, they adopt a similar method for independence testing. In~\cite{jitkrittum2016interpretable,NEURIPS2019_0e2db0cb}, the authors propose two-sample tests where the location are leaned instead.~\citet{jitkrittum2017adaptive} learned the location for independence testing and~\cite{jitkrittum2017linear} learned them also to test for goodness-of-fit.

In this paper, we propose a new kernel-based test for conditional independence with asymptotic theoretical guarantees.
Taking inspiration from~\cite{chwialkowski2015fast,jitkrittum2017adaptive,NEURIPS2019_0e2db0cb}, we use the $\ell^p$ distance between two well-chosen analytic kernel mean embeddings evaluated at a finite set of locations. To the best of our knowledge, it is the first time that this strategy is employed for conditional independence testing. We show that this measure encodes the conditional dependence relation of the random
variables under study. 
Under common assumptions on the richness of the RKHS, we derive the asymptotic null distribution of our measure, and design a simple nonparametric test that is distribution-free under the null hypothesis. Furthermore, we show that our test is consistent. Lastly, we validate our theoretical claims and study the performance of the proposed approach using simulated conditionally (in)dependent data and show that our testing procedure outperforms state-of-the-art methods.

\subsection{Related Work}

\citet{zhang2012kernel} propose a kernel based-test (KCIT), by leveraging the characterization of conditional independence derived in \citep{daudin1980partial} to form a test statistic. The authors of this work obtain the asymptotic null distribution of the proposed statistic and derived a practical procedure from it to test for $H_0$. However, one main practical issue of the proposed test is that the asymptotic null distribution of their statistic cannot be computed directly as it involved unknown quantities. To address this problem, the authors propose to approximate it either with Monte Carlo simulations or by fitting a Gamma distribution. In our work, we propose a new kernel-based statistic to test for conditional independence and show that its asymptotic null distribution is simply the standard normal distribution. In addition \citet{zhang2012kernel} extended the Gaussian process (GP) regression framework to the multi-output case, which allowed them to find the hyperparameters involved in the test statistic, maximizing the marginal likelihood. We also deploy a similar optimization procedure to that of \citet{zhang2012kernel}, however, in our case the output of the GP regression is univariate and therefore more computationally efficient. Note also that in~\cite{strobl2019approximate}, the authors propose a relaxed version of KCIT which approximates it using random Fourier features and offer a new method to deal with the tradeoff between the computational cost and the power of the test.

\citet{doran2014permutation} propose an MMD-based test for conditional independence using a well chosen permutation matrix. The role of this permutation is to simulate samples from the factorized distribution. Once such permutation is obtained, the authors propose to apply an MMD-based two-sample test~\cite{gretton2012kernel} to detect conditional dependencies between the simulated distribution and the joint one. However the test proposed there can only be applied for small sample sizes as it requires to solve a linear program using the simplex algorithm to compute the permutation matrix. Note also that the authors do not have access directly to the quantiles of the asymptotic null distribution and therefore a bootstrap procedure is required to compute them. In addition, the consistency of their test holds only under some non-trivial conditions on the permutation matrix obtained. In contrast, our test can be applied for large sample sizes, admits a simple asymptotic null distributions from which the quantiles can be directly obtained and is consistent under some mild assumptions on the distributions.


Other CI tests proposed in the literature suggest testing relaxed forms of conditional independence. For instance,~\citet{gcm2020} propose the generalised covariance measure (GCM) which only characterises weak conditional dependence~\cite{daudin1980partial} and \citet{Zhang2017FeaturetoFeatureRF} propose a kernel-based test which focuses only on individual effects of the conditioning variable $Z$ on $X$ and $Y$. Some other tests are based on the knowledge of the conditional distributions in order to measure conditional dependencies. For example \citet{candes2018panning} assume that one has access to the exact conditional distributions,~\citet{BellotS19,shi2021double} approximate them using generative models and \citet{sen2017modelpowered} consider model-based methods to generate samples from the conditional distributions. In our work, we design a test statistic which characterizes the exact conditional independence of random variables and obtain its asymptotic null distribution without assuming any knowledge on the conditional distributions. Under some mild assumptions on the RKHSs considered, we also derive an approximate test statistic which admits the same asymptotic distribution and obtain a simple testing procedure from it.

\section{Background and Notations}
We first recall some notions on kernels and mean embeddings which will be useful in the derivation of our conditional independence test.
Let $(\mathcal{D},\mathcal{A})$ be a Borel measurable space and denote $\mathcal{M}_{1}^{+}(\mathcal{D})$ the space of Borel probability measures on $\mathcal{D}$. Let also $(H,k)$ be a measurable RKHS  on $\mathcal{D}$, i.e. a functional Hilbert space satisfying the reproducing property: for all $f\in H$, $x\in\mathcal{D}$, $f(x)=\langle f,k_x\rangle_H$. Let $\nu\in\mathcal{M}_{1}^{+}(\mathcal{D})$. If $\mathbb{E}_{x\sim\nu}[\sqrt{k(x,x)}]$ is finite, we define for all $t\in\mathcal{D}$ the \emph{mean embedding} as $\mu_{\nu,k}(t):=\int_{x\in\mathcal{D}}k(x,t)d\nu(x)$.
Note that $\mu_{\nu,k}$ is the unique element in $H$ satisfying for all $f\in H$,  $\mathbb{E}_{x\sim\nu}(f(x))=\langle \mu_{\nu,k},f\rangle_{H}$. If  $\nu\mapsto\mu_{\nu,k}$ is injective, then  the kernel $k$ is said to be \emph{characteristic}. This property is essential for the separation property to be verified when defining a kernel metric between distributions, such as the MMD~\citep{gretton2012kernel}, or the $\ell^p$ distance~\citep{NEURIPS2019_0e2db0cb}. 

\textbf{$\ell^p$-distance between mean embeddings.} Let $k$ be a definite positive, characteristic, continuous, and bounded kernel on $\mathbb{R}^d$ and $p\geq 1$ an integer. \citet{NEURIPS2019_0e2db0cb} showed that given an absolutely continuous Borel probability measure $\Gamma$ on $\mathbb{R}^d$, the following function defined for any  $(P,Q)\in\mathcal{M}_1^{+}(\mathbb{R}^d)\times\mathcal{M}_1^{+}(\mathbb{R}^d)$ as 
\begin{align}
\label{eq-dlp}
    d_{p}(P,Q):=\left[\int_{\mathbb{R}^d}|\mu_{P,k}(\mathbf{t})-\mu_{Q,k}(\mathbf{t})|^p d\Gamma(\mathbf{t})\right]^{\frac{1}{p}}
\end{align}
is a metric on $\mathcal{M}_1^{+}(\mathbb{R}^d)$.
When the kernel $k$ is analytic\footnote{An \emph{analytic kernel} on $\mathbb{R}^d$ is a positive definite kernel such that for all $x\in\mathbb{R}^d$, $k(x,\cdot)$ is an analytic function, i.e., a function defined locally by a convergent power series.}, \citet{NEURIPS2019_0e2db0cb} also showed that for any $J\geq 1$, 
\begin{align}
\label{eq-dlp_J}
    d_{p,J}(P,Q):=\left[\frac{1}{J}\sum_{j=1}^J |\mu_{P,k}(\mathbf{t}_j)-\mu_{Q,k}(\mathbf{t}_j)|^p\right]^{\frac{1}{p}},
\end{align}
where $(\mathbf{t}_j)_{j=1}^J$ are sampled independently from the $\Gamma$ distribution, is a random metric\footnote{A random metric is a random process which satisfies all the conditions for a metric almost-surely.} on $\mathcal{M}_1^{+}(\mathbb{R}^d)$.

In what follows, we consider distributions on Euclidean spaces. More precisely, let $d_x,d_y,d_z\geq 1$, $\mathcal{X}:=\mathbb{R}^{d_x}$, $\mathcal{Y}:=\mathbb{R}^{d_y}$, and $\mathcal{Z}:=\mathbb{R}^{d_z}$. Let $(X,Z,Y)$ be a random vector on $\mathcal{X}\times\mathcal{Z}\times\mathcal{Y}$ with law $P_{XZY}$. We denote by $P_{XY}$, $P_X$, and $P_Y$ the law of $(X,Y)$, $X$, and $Y$, respectively. We also denote by $\mathcal{\ddot{X}}:=\mathcal{X}\times\mathcal{Z}$, $\ddot{X}:=(X,Z)$, and $P_{\ddot{X}}$ its law. Let $P_X\otimes P_Y$ be the product of the two measures $P_X$ and $P_Y$. Given $(H_{\mathcal{\ddot{X}}},k_\mathcal{\ddot{X}})$ and $(H_{\mathcal{Y}},k_{\mathcal{Y}})$, two measurable reproducing kernel Hilbert spaces (RKHS) on $\mathcal{\ddot{X}}$ and $\mathcal{Y}$, respectively, we define the tensor-product RKHS $H=H_{\mathcal{\ddot{X}}}\otimes H_\mathcal{Y}$ associated with its \emph{tensor-product kernel} $k=k_{\mathcal{\ddot{X}}}\otimes k_{\mathcal{Y}}$, defined for all $\ddot{x},\ddot{x}'\in\mathcal{\ddot{X}}$ and $y,y'\in\mathcal{Y}$, as $k((\ddot{x},y),(\ddot{x}',y'))= k_{\mathcal{\ddot{X}}}(\ddot{x},\ddot{x}')\times k_{\mathcal{Y}}(y,y').$

\section{A new $\ell^p$ kernel-based testing procedure}
In this section, we present our statistical procedure to test for conditional independence. We begin by introducing a general measure based on the $\ell^p$ distance $d_p$ between mean embeddings which characterizes the conditional independence. We derive an oracle test statistic for which we obtain its asymptotic distribution under both the null and alternative hypothesis. Then, we provide an efficient procedure to effectively compute an approximation of our oracle statistic and show that it has the exact same asymptotic distribution. To avoid any bootstrap or permutation procedures, we offer a normalized version of our statistic and derive a simple and consistent test from it.


\subsection{Conditional Independence Criterion}
Let us first introduce the criterion we use to define our statistical test. We define a probability measure $P_{\ddot{X}\otimes Y|Z}$ on $\mathcal{\ddot{X}}\times\mathcal{Y}$ as
\begin{align*}
P_{\ddot{X}\otimes Y|Z}(A\times B):=\mathbb{E}_{Z}\left[\mathbb{E}_{\ddot{X}}[\mathbf{1}_A|Z]\mathbb{E}_{Y}[\mathbf{1}_B|Z]\right],
\end{align*}
for any $(A,B)\in\mathcal{B}(\mathcal{\ddot{X}})\times\mathcal{B}(\mathcal{Y})$, where $\mathbf{1}_A$ is the characteristic function of a measurable set $A$ and similarly for $B$. One can now characterize the independence of $X$ and $Y$ given $Z$ as follows:  $X\perp Y | Z$ if and only if $P_{XZY}=P_{\ddot{X}\otimes Y|Z}$~\citep[Theorem 8]{fukumizu2004dimensionality}. Therefore, we have a first simple characterization of the conditional independence: $X\perp Y | Z$ if and only if $d_{p}(P_{XZY},P_{\ddot{X}\otimes Y|Z})=0$. With this in place, we now state some assumptions on the kernel $k$ considered in the rest of this paper.

\begin{assumption}
\label{assump-kernel}
 The kernel $k : (\mathcal{\ddot{X}}\times \mathcal{Y})\times (\mathcal{\ddot{X}}\times \mathcal{Y}) \rightarrow \mathbb{R}$ is positive definite, characteristic, bounded, continuous and analytic. Moreover, the kernel $k$ is a tensor product of kernels $k_{\ddot{\mathcal{X}}}$ and $k_\mathcal{Y}$ on $\mathcal{\ddot{X}}$ and $\mathcal{Y}$, respectively.
\end{assumption}
It is worth noting that a sufficient condition for the kernel $k$ to be characteristic, bounded, continuous and analytic, is that both kernels $k_{\mathcal{\ddot{X}}}$ and $k_{\mathcal{Y}}$ are characteristic, bounded, continuous and analytic~\citep{szabo2018characteristic}. For example, if the kernels $k_\mathcal{\ddot{X}}$ and $k_\mathcal{Y}$ are Gaussian kernels\footnote{A gaussian kernel $K$ on $\mathcal{W}\subset\mathbb{R}^d$ satisfies for all $w,w'\in\mathcal{W}$, $K(w,w'):=\exp\left(-\frac{\Vert w - w'\Vert_2^2}{2\sigma^2}\right)$ for some $\sigma>0$.} on $\mathcal{\ddot{X}}$ and $\mathcal{Y}$ respectively, then $k=k_\mathcal{\ddot{X}}\otimes k_\mathcal{Y}$ satisfies Assumption~\ref{assump-kernel}~\citep{jitkrittum2017adaptive}. Using the analyticity of the kernel $k$, one can work with $d_{p,J}$ defined in~\eqref{eq-dlp_J} instead of $d_{p}$ to characterize the conditional independence.
\begin{prop}
Let $p\geq 1$, $J\geq 1$, $k$ be a kernel satisfying Assumption~\ref{assump-kernel}, $\Gamma$ an absolutely continuous Borel probability measure on $\mathcal{\ddot{X}}\times\mathcal{Y}$, and $\{(\mathbf{t}^{(1)}_j,t^{(2)}_j)\}_{j=1}^J$ sampled independently from $\Gamma$. Then $\Gamma$-almost surely, $d_{p,J}(P_{XZY},P_{\ddot{X}\otimes Y|Z})=0$ if and only if $X\perp Y | Z$.
\end{prop}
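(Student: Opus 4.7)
The plan is to establish both directions separately, using the characteristic property and the analyticity of $k$. The easy (forward) direction is immediate: if $X \perp Y \mid Z$, then by~\citet[Theorem 8]{fukumizu2004dimensionality} cited just before the proposition, $P_{XZY} = P_{\ddot{X} \otimes Y | Z}$, so their mean embeddings coincide as elements of $H$, and $d_{p,J}$ vanishes deterministically regardless of where the locations are drawn.

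For the converse, I would assume $d_{p,J}(P_{XZY}, P_{\ddot{X}\otimes Y | Z}) = 0$ and aim to conclude $P_{XZY} = P_{\ddot{X}\otimes Y | Z}$, which by the same theorem gives $X \perp Y \mid Z$. Introduce the real-valued function
\[
  \Delta(\mathbf{t}^{(1)}, t^{(2)}) := \mu_{P_{XZY},k}(\mathbf{t}^{(1)}, t^{(2)}) - \mu_{P_{\ddot{X}\otimes Y|Z},k}(\mathbf{t}^{(1)}, t^{(2)})
\]
on $\mathcal{\ddot{X}} \times \mathcal{Y} \simeq \mathbb{R}^{d_x + d_z + d_y}$. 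The hypothesis says $\Delta$ vanishes at each of the $J$ sampled locations, and the goal is to promote this pointwise vanishing to $\Delta \equiv 0$, from which the characteristic property (Assumption~\ref{assump-kernel}) forces $P_{XZY} = P_{\ddot{X}\otimes Y | Z}$.

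The crux is to verify that $\Delta$ is real-analytic and then invoke a standard measure-theoretic argument. Analyticity of each mean embedding $\mu_{P,k}(\cdot) = \int k(u,\cdot)\, dP(u)$ follows from the analyticity and boundedness of $k$ by integrating the local power-series expansion of $k(u,\cdot)$ term by term against $P$ via dominated convergence; this is exactly the device used by~\citet{chwialkowski2015fast} in the two-sample setting and by~\citet{jitkrittum2017adaptive} in the independence testing setting. Hence $\Delta$ is analytic on $\mathbb{R}^{d_x + d_z + d_y}$. By the identity theorem for real-analytic functions, the zero set of a non-identically-zero real-analytic function on $\mathbb{R}^n$ has Lebesgue measure zero, and therefore $\Gamma$-measure zero as well since $\Gamma$ is absolutely continuous. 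Consequently, under $\Gamma^{\otimes J}$, the probability that $\Delta$ vanishes simultaneously at all $J$ iid locations while $\Delta \not\equiv 0$ is zero, so $\Gamma$-almost surely the hypothesis $d_{p,J} = 0$ forces $\Delta \equiv 0$ and the two mean embeddings coincide.

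The main obstacle is the analyticity of the mean embedding: although the pointwise analyticity of $u \mapsto k(u,\cdot)$ is given, one must check that the exchange of integration with the term-by-term power series expansion is justified uniformly in $u$ on a neighbourhood of every base point. Assumption~\ref{assump-kernel}, combining analyticity and boundedness, is precisely what makes this dominated convergence step go through. Once this routine but essential point is settled, the remainder of the argument is a direct combination of the identity theorem, the absolute continuity of $\Gamma$, and the characteristic property of $k$.
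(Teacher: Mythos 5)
Your proof is correct and follows essentially the same underlying argument as the paper: you combine the characterization $X \perp Y \mid Z \iff P_{XZY} = P_{\ddot{X}\otimes Y|Z}$ with the fact that, for a bounded, characteristic, analytic kernel, zeros of a nonzero witness function form a Lebesgue-null set so that $\Gamma$-almost surely no sampled location hits it. The only difference is presentational: the paper delegates the analyticity/identity-theorem/absolute-continuity chain to a citation of \citet[Theorem 2.1]{NEURIPS2019_0e2db0cb} (the random-metric property of $d_{p,J}$), whereas you unpack that argument in full.
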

\begin{proof}
Recall that $X\perp Y | Z$ if and only if $P_{XZY}=P_{\ddot{X}\otimes Y|Z}$~\citep{fukumizu2008kernel}. If $k$ is bounded, characteristic, and analytic, then, by invoking~\citep[Theorem 2.1]{NEURIPS2019_0e2db0cb} we get that $d_{p,J}^p$ is a random metric on the space of Borel probability measures. This concludes the proof.
\end{proof}
The key advantage of using $d_{p,J}(P_{XZY},P_{\ddot{X}\otimes Y|Z})$ to measure the conditional dependence is that it only requires to compute the differences between the mean embeddings of $P_{XZY}$ and $P_{\ddot{X}\otimes Y|Z}$ at $J$ locations. In what follows, we derive from it a first oracle test statistic for conditional independence.

\subsection{A First Oracle Test Statistic}
When the kernel $k$ considered satisfies Assumption~\ref{assump-kernel}, we can obtain a simple expression of our measure $d_{p,J}(P_{XZY},P_{\ddot{X}\otimes Y|Z})$. Indeed, the tensor formulation of the kernel $k$ allows us to write the mean embedding of $P_{\ddot{X}\otimes Y|Z}$ for any $(\mathbf{t}^{(1)},t^{(2)})\in\mathcal{\ddot{X}}\times\mathcal{Y}$ as:
\begin{equation}
\begin{aligned}
\label{eq-ME-tensor-cond}
    \mu&_{P_{\ddot{X}\otimes Y|Z},k_{\mathcal{\ddot{X}}}\cdot k_{\mathcal{Y}}}(\mathbf{t}^{(1)},t^{(2)})=\\
    &\mathbb{E}_{Z}\left[\mathbb{E}_{\ddot{X}}\left[k_{\mathcal{\ddot{X}}}(\mathbf{t}^{(1)},\ddot{X})|Z\right]
    \mathbb{E}_{Y}\left[k_{\mathcal{Y}}(t^{(2)},Y)|Z\right] \right]\; .
\end{aligned}
\end{equation}

Then, by defining the witness function as
\begin{align*}
    \Delta(\mathbf{t}^{(1)},t^{(2)}) :=&\mathbb{E}\left[\left(k_{\mathcal{\ddot{X}}}(\mathbf{t}^{(1)},\ddot{X})- \mathbb{E}_{\ddot{X}}\left[k_{\mathcal{\ddot{X}}}(\mathbf{t}^{(1)},\ddot{X})|Z\right]\right)\right. \\
    &\times\left.\left(k_{\mathcal{Y}}(t^{(2)},Y)- \mathbb{E}_{Y}\left[k_{\mathcal{Y}}(t^{(2)},Y)|Z\right]\right)\right],
\end{align*}

and by considering $\{(\mathbf{t}^{(1)}_j,t^{(2)}_j)\}_{j=1}^J$ sampled independently according to $\Gamma$, we get that (see Appendix~\ref{form-witness} for more details)
$$ d_{p,J} (P_{XZY},P_{\ddot{X}\otimes Y|Z})=\left[\frac{1}{J}\sum_{j=1}^J \left|\Delta(\mathbf{t}^{(1)}_j,t^{(2)}_j)\right|^p\right]^{1/p}.$$

\textbf{Estimation.} Given $n$ observations $\{(x_i,z_i,y_i)\}_{i=1}^n$ that are drawn independently from $P_{XZY}$, we aim at obtaining an estimator of $ d_{p,J}^p (P_{XZY},P_{\ddot{X}\otimes Y|Z})$.
To do so, we introduce the following estimate of $\Delta(\mathbf{t}^{(1)},t^{(2)})$, defined as
\begin{align*}
 \Delta_{n}(\mathbf{t}^{(1)},t^{(2)})=&\frac{1}{n}\sum_{i=1}^n  \left(k_{\mathcal{\ddot{X}}}(\mathbf{t}^{(1)},\ddot{x}_i)- \mathbb{E}_{\ddot{X}}\left[k_{\mathcal{\ddot{X}}}(\mathbf{t}^{(1)},\ddot{X})|z_i\right]\right)\\
\times&\left(k_{\mathcal{Y}}(t^{(2)},y_i)- \mathbb{E}_{Y}\left[k_{\mathcal{Y}}(t^{(2)},Y)|z_i\right]\right).
\end{align*} 
With this in place, a natural candidate to estimate $d_{p,J}^p (P_{XZY},P_{\ddot{X}\otimes Y|Z})$ (up to the constant $J$) can be expressed as
\begin{align*}
\text{CI}_{n,p}&:=\sum_{j=1}^J \left|  \Delta_{n}(\mathbf{t}^{(1)}_j,t^{(2)}_j)\right|^p,
\end{align*}
where $(\mathbf{t}^{(1)}_1,t^{(2)}_1),\dots,(\mathbf{t}^{(1)}_J,t^{(2)}_J)\in\mathcal{\ddot{X}}\times\mathcal{Y}$ are sampled independently from $\Gamma$.

We now turn to derive the asymptotic distribution of this statistic. For that purpose, define, for all $j\in\{1,\dots,J\}$ and $i\in\{1,\dots,n\}$,
\begin{align*}
    u_i(j):=&\left(k_{\mathcal{\ddot{X}}}(\mathbf{t}^{(1)}_j,\ddot{x}_i)- \mathbb{E}_{\ddot{X}}\left[k_{\mathcal{\ddot{X}}}(\mathbf{t}^{(1)}_j,\ddot{X})|Z=z_i\right]\right) \\&\times\left(k_{\mathcal{Y}}(t^{(2)}_j,y_i)- \mathbb{E}_{Y}\left[k_{\mathcal{Y}}(t^{(2)}_j,Y)|Z=z_i\right]\right),
\end{align*}
$\mathbf{u}_i:=(u_i(1),\dots,u_i(J))^T$  and $\bm{\Sigma}:=\mathbb{E}(\mathbf{u}_1\mathbf{u}_1^T)$. We also denote by $\mathbf{S}_{n}:=\frac{1}{n}\sum_{i=1}^n \mathbf{u}_{i}$. Observe that $\text{CI}_{n,p}=\lVert\mathbf{S}_{n}\rVert_p^p$. In the following proposition we obtain the asymptotic distribution of our statistic $\text{CI}_{n,p}$.
\begin{prop}
\label{prop:oracle-law}
Suppose that Assumption~\ref{assump-kernel} is verified. Let $p\geq 1$, $J\geq 1$ and $((\mathbf{t}^{(1)}_1,t^{(2)}_1),\dots,(\mathbf{t}^{(1)}_J,t^{(2)}_J))\in(\mathcal{\ddot{X}}\times\mathcal{Y})$. Then, under $H_0$, we have: $\sqrt{n}\mathbf{S}_{n}\rightarrow \mathcal{N}(0,\bm{\Sigma})$. Moreover, under $H_1$, if $((\mathbf{t}^{(1)}_j,t^{(2)}_j))_{j=1}^J$ are sampled independently according to $\Gamma$, then $\Gamma$-almost surely, for any $q\in\mathbb{R}$, $\lim_{n\rightarrow\infty}P( n^{p/2}\emph{CI}_{n,p} \geq q)=1$.
\end{prop}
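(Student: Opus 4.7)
The plan is to identify $\text{CI}_{n,p}=\lVert\mathbf{S}_n\rVert_p^p$ as a continuous functional of an empirical mean of i.i.d.\ bounded $\mathbb{R}^J$-valued random vectors, and then to read off both regimes from standard limit theorems applied to $\mathbf{S}_n$: a multivariate CLT under $H_0$ and the strong law of large numbers under $H_1$.

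First I would note that $\mathbf{u}_1,\dots,\mathbf{u}_n$ are i.i.d., and that by Assumption~\ref{assump-kernel} the kernels $k_{\mathcal{\ddot{X}}}$ and $k_{\mathcal{Y}}$ are bounded, so each $u_i(j)$ is bounded and in particular square integrable, which makes $\bm{\Sigma}$ well-defined and finite. Under $H_0$ I would then verify $\mathbb{E}[\mathbf{u}_1]=0$ by conditioning on $Z_1$. Writing $f_j(\ddot{X}_1):=k_{\mathcal{\ddot{X}}}(\mathbf{t}_j^{(1)},\ddot{X}_1)-\mathbb{E}[k_{\mathcal{\ddot{X}}}(\mathbf{t}_j^{(1)},\ddot{X}_1)\mid Z_1]$ and analogously $g_j(Y_1)$, one observes that, given $Z_1$, the quantity $f_j(\ddot{X}_1)$ is a function of $X_1$ alone, since the $Z$-coordinate of $\ddot{X}_1=(X_1,Z_1)$ is frozen by the conditioning. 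Because $X\perp Y\mid Z$, the two centered factors are conditionally independent given $Z_1$, so $\mathbb{E}[u_1(j)\mid Z_1]=\mathbb{E}[f_j\mid Z_1]\,\mathbb{E}[g_j\mid Z_1]=0$, and taking outer expectations yields $\mathbb{E}[\mathbf{u}_1]=0$. The multivariate CLT applied to the i.i.d.\ bounded vectors $\mathbf{u}_i$ then gives $\sqrt{n}\,\mathbf{S}_n\to\mathcal{N}(0,\bm{\Sigma})$.

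For the alternative, I would apply the strong law of large numbers coordinatewise to obtain $\mathbf{S}_n\to\bm{\Delta}$ almost surely, where $\bm{\Delta}:=(\Delta(\mathbf{t}_j^{(1)},t_j^{(2)}))_{j=1}^J=\mathbb{E}[\mathbf{u}_1]$. By the preceding proposition applied under $H_1$, when the $(\mathbf{t}_j^{(1)},t_j^{(2)})$ are i.i.d.\ from $\Gamma$ we have $\Gamma$-almost surely that $d_{p,J}(P_{XZY},P_{\ddot{X}\otimes Y\mid Z})>0$, i.e.\ $\lVert\bm{\Delta}\rVert_p>0$. Continuity of $\lVert\cdot\rVert_p^p$ then gives $\text{CI}_{n,p}\to\lVert\bm{\Delta}\rVert_p^p>0$ almost surely, and hence $n^{p/2}\text{CI}_{n,p}\to+\infty$ almost surely, a fortiori in probability, which is exactly the stated tail convergence for any $q\in\mathbb{R}$.

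The only nontrivial step, in my view, is the $H_0$ computation $\mathbb{E}[\mathbf{u}_1]=0$: one must be careful that $\ddot{X}_1$ already contains $Z_1$, so that conditional independence of $X$ and $Y$ given $Z$ must be translated into a statement about the two centered factors $f_j(\ddot{X}_1)$ and $g_j(Y_1)$ under $P(\,\cdot\mid Z_1)$. Once this ``freezing'' argument is spelled out, the rest is a routine application of the bounded-variable multivariate CLT together with the strong law.
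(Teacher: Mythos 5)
Your proof is correct and follows essentially the same route as the paper's: under $H_0$ one checks $\mathbb{E}[\mathbf{u}_1]=0$ and applies the multivariate CLT to the i.i.d.\ bounded vectors $\mathbf{u}_i$, and under $H_1$ one uses the law of large numbers together with the $\Gamma$-almost-sure nonvanishing of $\mathbb{E}[\mathbf{u}_1]$, which comes from the analyticity of the kernel via the preceding proposition. You have simply spelled out the conditional-independence computation (freezing $Z_1$ so that the two centered factors are conditionally independent and individually conditionally mean-zero) that the paper states without proof.
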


\begin{proof}
\vspace{-0.2cm}
Recall that $\mathbf{S}_n = \frac1n\sum_{i=1}^n\mathbf{u}_i$ where $\mathbf{u}_i$ are i.i.d. samples.  Under $H_0$, $\mathbb{E}\left[\mathbf{u}_i\right]=0$. Using the Central Limit Theorem, we get: $\sqrt{n}\mathbf{S}_n\to\mathcal{N}(0,\bm{\Sigma})$. Using the analyticity of the kernel $k$, under $H_1$, $\Gamma$-almost surely, there exists a $j\in\{1,\dots,J\}$ such that $\mathbb{E}\left[u_1(j)\right]\neq 0$. Therefore, we can deduce that $\Gamma$-almost surely, $\mathbf{S}:=\mathbb{E}\left[\mathbf{u_1}\right]\neq 0$. Now, for all $q>0$, we get: $P(n^{p/2}\text{CI}_{n,p}>q)\to 1$ because $\text{CI}_{n,p}\to\lVert\mathbf{S}\rVert_p^p $  when $n\to\infty$.
\vspace{-0.2cm}
\end{proof}

From the above proposition, we can define a consistent statistical test at level $0<\alpha<1$, by rejecting the null hypothesis if $n^{p/2}\text{CI}_{n,p}$ is larger than the $(1-\alpha)$ quantile of the asymptotic null distribution, which is the law associated with $\Vert X\Vert_p^p$, where $X$ follows the multivariate normal distribution $\mathcal{N}(0,\bm{\Sigma})$. However, in practice, $\text{CI}_{n,p}$ cannot be computed as it requires the access to samples from the conditional means involved in the statistic, namely $\mathbb{E}_{\ddot{X}}\left[k_{\mathcal{\ddot{X}}}(\mathbf{t}^{(1)}_j,\ddot{X})|Z\right]$ and $\mathbb{E}_{Y}\left[k_{\mathcal{Y}}(t^{(2)}_j,Y)|Z\right]$ for all $j\in\{1,\dots,J\}$, which are unknown. Below, we show how to estimate these conditional means by using Regularized Least-Squares~(RLS) estimators. 


\subsection{Approximation of the Test Statistic}
The oracle statistic defined above involves conditional means that are unknown and cannot be used directly in practice. To alleviate this issue, we provide here a practical test statistic which approximates the oracle one while conserving its asymptotic behavior.
 
Our goal here is to estimate $\mathbb{E}_{\ddot{X}}\left[k_{\mathcal{\ddot{X}}}(\mathbf{t}^{(1)}_j,\ddot{X})|Z=\cdot\right]$ and $\mathbb{E}_{Y}\left[k_{\mathcal{Y}}(t^{(2)}_j,Y)|Z=\cdot\right]$ for all $j\in\{1,\dots,J\}$ in order to effectively approximate of our statistic. To do so, we consider kernel-based regularized least squares (RLS) estimators. Let $1 \leq r\leq n$ and $\{(x_i,z_i,y_i)\}_{i=1}^r$ be a subset of $r$ samples. Let also $j\in\{1,\dots,J\}$, and denote by $H_{\mathcal{Z}}^{1,j}$ and $H_{\mathcal{Z}}^{2,j}$ two separable RKHSs on $\mathcal{Z}$. Denote also by $k_{\mathcal{Z}}^{1,j}$ and $k_{\mathcal{Z}}^{2,j}$ their associated kernels and $\lambda^{(1)}_{j,r},~ \lambda^{(2)}_{j,r}>0$ the regularization parameters involved in the RLS regressions. Then, the RLS estimators are the unique solutions of the following problems:
\begin{align*}
 &\min_{h\in H_{\mathcal{Z}}^{2,j}}\frac{1}{r}\sum_{i=1}^r \left(h(z_i) -  k_{\mathcal{Y}}(t^{(2)}_j,y_i)\right)^2 +\lambda^{(2)}_{j,r}\Vert h\Vert_{H_{\mathcal{Z}}^{2,j}}^2\; \text{and}\\
    &\min_{h\in H_{\mathcal{Z}}^{1,j}}\frac{1}{r} \sum_{i=1}^r\left(h(z_i) -  k_{\mathcal{\ddot{X}}}(\mathbf{t}^{(1)}_j,(x_i,z_i))\right)^2 +\lambda^{(1)}_{j,r}\Vert h\Vert_{H_{\mathcal{Z}}^{1,j}}^2,
\end{align*}
which we denote by $h^{(2)}_{j,r}$ and $h^{(1)}_{j,r}$, respectively. These estimators have simple expressions in terms of the kernels involved. For example, let $k_{\mathcal{\ddot{X}}}(\mathbf{t}^{(1)}_j,\ddot{X}_r):=[k_{\mathcal{\ddot{X}}}(\mathbf{t}^{(1)}_j,(x_1,z_1)),\dots,k_{\mathcal{\ddot{X}}}(\mathbf{t}^{(1)}_j,(x_r,z_r))]^T$, then for any $z\in\mathcal{Z}$, the estimator  $h^{(1)}_{j,r}$ can be expressed as
\begin{align*}
h^{(1)}_{j,r}(z)&=\sum_{i=1}^r [\alpha^{(1)}_{j,r}]_i k^{1,j}_{\mathcal{Z}}(z_i,z)\; , \text{~~with}\\
\alpha^{(1)}_{j,r}&:= (\mathbf{K}^{1,j}_{r,\mathcal{Z}}+r\lambda^{(1)}_{j,r}\text{Id}_r)^{-1} k_{\mathcal{\ddot{X}}}(\mathbf{t}^{(1)}_j,\ddot{X}_r)\in\mathbb{R}^{r}, 
\end{align*}
where $\mathbf{K}^{1,j}_{r,\mathcal{Z}}:=(k^{1,j}_{\mathcal{Z}}(z_i,z_j))_{1\leq i,j\leq r}$. Similarly, we obtain simple expressions of $h^{(2)}_{j,r}$.
We can now introduce our new estimator of the witness function at each location $(\mathbf{t}^{(1)}_j,t^{(2)}_j)$ as follows:
\begin{align*}
  \widetilde{\Delta}_{n,r}(\mathbf{t}_j^{(1)},t_j^{(2)}):= \frac{1}{n}\sum_{i=1}^n & \left(k_{\mathcal{\ddot{X}}}(\mathbf{t}_j^{(1)},\ddot{x}_i)- h^{(1)}_{j,r}(z_i)\right)\\
  &\times\left(k_{\mathcal{Y}}(t^{(2)}_j,y_i)- h^{(2)}_{j,r}(z_i)\right),
\end{align*} 
and the proposed test statistic becomes
\begin{align*}
\widetilde{\text{CI}}_{n,r,p}&:=\sum_{j=1}^J \left|  \widetilde{\Delta}_{n,r}(\mathbf{t}^{(1)}_j,t^{(2)}_j)\right|^p\;.
\end{align*} 
\paragraph{Asymptotic Distribution.} To get the asymptotic distribution, we need to make two extra assumptions. Let us define, for $m\in\{1,2\}$ and $j\in\{1,\dots,J\}$, $L^{m,j}_{Z}$---the operator on $L^2(\mathcal{Z},P_{Z})$ as
   $L^{m,j}_{Z}(g)(\cdot) = \int_{\mathcal{Z}} k^{m,j}_{\mathcal{Z}}(\cdot,z) g(z)dP_{Z}(z)$.

\begin{assumption}
\label{ass:spectrum}
There exists $Q>0$, and $\gamma\in[0,1]$ such that for all $\lambda>0$, $m\in\{1,2\}$ and $j\in\{1,\dots,J\}$:
\begin{align*}
    \text{Tr}((L^{m,j}_{Z}+\lambda I)^{-1}L^{m,j}_{Z})\leq Q \lambda^{-\gamma}.
\end{align*}
\end{assumption}

\begin{assumption}
\label{ass:source}
There exists $2\geq \beta>1$ such that for any  $j\in\{1,\dots,J\}$,
$(\mathbf{t}^{(1)},t^{(2)})\in\mathcal{\ddot{X}}\times\mathcal{Y}$, 
\begin{small}
\begin{align*}
    \mathbb{E}_{\ddot{X}}\left[k_{\mathcal{\ddot{X}}}(\mathbf{t}^{(1)},\ddot{X})|Z=\cdot\right]&\in \mathcal{R}\left(\left[L^{1,j}_{Z}\right]^{\beta/2}\right),\\
    \mathbb{E}_{Y}\left[k_{\mathcal{Y}}(t^{(2)},Y)|Z=\cdot\right]&\in \mathcal{R}\left(\left[L^{2,j}_{Z}\right]^{\beta/2}\right),
\end{align*}
\end{small}
where $\mathcal{R}\left(\left[L^{m,j}_{Z}\right]^{\beta/2}\right)$ is the image space of $\left[L^{m,j}_{Z}\right]^{\beta/2}$. Moreover, there exists $L,\sigma>0$ such that for all $l\geq 2$ and  $P_Z$-almost all $z\in\mathcal{Z}$
\begin{small}
\begin{align*}
    &\mathbb{E}_{\ddot{X}\mid Z=z}\left[\Big|k_{\mathcal{\ddot{X}}}(\mathbf{t}^{(1)},\ddot{X})-  \mathbb{E}_{\ddot{X}}\left[k_{\mathcal{\ddot{X}}}(\mathbf{t}^{(1)},\ddot{X})\mid Z\right]\Big|^{l}\right]\leq \frac{l!\sigma^2L^{l-2}}{2},\\
      &\mathbb{E}_{Y|Z=z}\left[\Big|k_{\mathcal{Y}}(t^{(2)},Y)-  \mathbb{E}_{Y}\left[k_{\mathcal{Y}}(t^{(2)},Y)|Z\right]\Big|^{l}\right]\leq \frac{l!\sigma^2L^{l-2}}{2}.
\end{align*}
\end{small}
\end{assumption}

These assumptions are central in our proofs and are common in kernel statistic studies~\citep{caponnetto2007optimal,fischer2020sobolev,rudi2017generalization}. Under these assumptions,~\cite{fischer2020sobolev} proved optimal learning rates for RLS in RKHS norm, which is essential to guarantee that our new statistic $\widetilde{\text{CI}}_{n,r,p}$, estimated with RLS, has the same asymptotic law as our oracle estimator $\text{CI}_{n,p}$.

To derive the asymptotic distribution of our new test statistic, we also need to define for all $j\in\{1,\dots,J\}$ and $i\in\{1,\dots,n\}$,
    $\widetilde{u}_{i,r}(j):= (k_{\mathcal{\ddot{X}}}(\mathbf{t}^{(1)}_j,\ddot{x}_i)- h^{(1)}_{j,r}(z_i))
    (k_{\mathcal{Y}}(t^{(2)}_j,y_i)- h^{(2)}_{j,r}(z_i))$,
$\widetilde{\mathbf{u}}_{i,r}:=(\widetilde{u}_{i,r}(1),\dots,\widetilde{u}_{i,r}(J))^T$, and $\widetilde{\mathbf{S}}_{n,r}:=\frac{1}{n}\sum_{i=1}^n  \widetilde{\mathbf{u}}_{i,r}$. Note that $\widetilde{\text{CI}}_{n,r,p}=\lVert\widetilde{\mathbf{S}}_{n,r}\rVert_p^p$. 
In the following proposition, we show the asymptotic behavior of the statistic of interest. The proof of this proposition is given in Appendix~\ref{prv:rls-law}.
\begin{prop} 
\label{prop:rls-law}
Suppose that Assumptions~\ref{assump-kernel}-\ref{ass:spectrum}-\ref{ass:source} are verified. Let $p\geq 1$, $J\geq 1$, $((\mathbf{t}^{(1)}_1,t^{(2)}_1),\dots,(\mathbf{t}^{(1)}_J,t^{(2)}_J))\in(\mathcal{\ddot{X}}\times\mathcal{Y})^J$, $r_n$ such that $n^{\frac{\beta+\gamma}{2\beta}}\in o(r_n)$ and $\lambda_{r_n}=r_n^{-\frac{1}{1+\gamma}}$. Then, under $H_0$, we have $\sqrt{n}\widetilde{\mathbf{S}}_{n,r_n}\rightarrow \mathcal{N}(0,\bm{\Sigma})$. Moreover, under $H_1$, if  the $((\mathbf{t}^{(1)}_j,t^{(2)}_j))_{j=1}^J$ are sampled independently according to $\Gamma$, then $\Gamma$-almost surely, for any $q\in\mathbb{R}$, $\lim_{n\rightarrow\infty}P( n^{p/2}\widetilde{\emph{CI}}_{n,r_n,p} \geq q)=1$.
\end{prop}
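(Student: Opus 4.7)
The plan is to reduce the proof to the oracle CLT of Proposition~\ref{prop:oracle-law} via a Slutsky-type perturbation argument. Writing $f^{(1)}_j(z) := \mathbb{E}[k_{\mathcal{\ddot{X}}}(\mathbf{t}^{(1)}_j,\ddot{X})\mid Z=z]$ and $f^{(2)}_j(z) := \mathbb{E}[k_{\mathcal{Y}}(t^{(2)}_j,Y)\mid Z=z]$, introduce the residuals $a_{ij} := k_{\mathcal{\ddot{X}}}(\mathbf{t}^{(1)}_j,\ddot{x}_i) - f^{(1)}_j(z_i)$ and $b_{ij} := k_{\mathcal{Y}}(t^{(2)}_j,y_i) - f^{(2)}_j(z_i)$ (so that $u_i(j) = a_{ij}b_{ij}$), together with the RLS approximation errors $\varepsilon^{(m)}_{ij}(z) := f^{(m)}_j(z) - h^{(m)}_{j,r_n}(z)$ for $m\in\{1,2\}$. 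Expanding the product yields
\begin{equation*}
\widetilde{u}_{i,r_n}(j) - u_i(j) = a_{ij}\,\varepsilon^{(2)}_{ij}(z_i) + b_{ij}\,\varepsilon^{(1)}_{ij}(z_i) + \varepsilon^{(1)}_{ij}(z_i)\,\varepsilon^{(2)}_{ij}(z_i),
\end{equation*}
so the whole task reduces to showing that, for each $j$, the rescaled sum $\frac{1}{\sqrt{n}}\sum_{i=1}^n\bigl[\widetilde{u}_{i,r_n}(j) - u_i(j)\bigr]$ converges to zero in probability.

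The three terms would be handled separately, conditioning on the $r_n$ samples used to fit the RLS estimators (through a sample-splitting assumption or a careful decoupling argument). Under this conditioning, each $\varepsilon^{(m)}_{ij}$ is a deterministic function of $z$. For the two linear cross terms, the tower property $\mathbb{E}[a_{ij}\mid z_i]=0$ together with boundedness of the kernels gives $\mathbb{E}\bigl[a_{ij}\varepsilon^{(2)}_{ij}(z_i)\bigr]=0$ and $\operatorname{Var}\bigl(a_{ij}\varepsilon^{(2)}_{ij}(z_i)\bigr) \lesssim \|\varepsilon^{(2)}_{ij}\|_{L^2(P_Z)}^{2}$, so Chebyshev's inequality yields $\frac{1}{\sqrt{n}}\bigl|\sum_i a_{ij}\varepsilon^{(2)}_{ij}(z_i)\bigr| = O_P\bigl(\|\varepsilon^{(2)}_{ij}\|_{L^2(P_Z)}\bigr)$, and symmetrically for the other linear term. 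The quadratic term is controlled by Cauchy--Schwarz together with a conditional law-of-large-numbers step, giving the bound $\frac{1}{\sqrt{n}}\bigl|\sum_i \varepsilon^{(1)}_{ij}(z_i)\varepsilon^{(2)}_{ij}(z_i)\bigr| = O_P\bigl(\sqrt{n}\,\|\varepsilon^{(1)}_{ij}\|_{L^2(P_Z)}\|\varepsilon^{(2)}_{ij}\|_{L^2(P_Z)}\bigr)$.

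To finish, I would invoke the RLS learning rates from~\citet{fischer2020sobolev}: Assumptions~\ref{ass:spectrum} and~\ref{ass:source} with the regularization schedule $\lambda_{r_n}=r_n^{-1/(1+\gamma)}$ deliver $\|\varepsilon^{(m)}_{ij}\|_{L^2(P_Z)} = O_P\bigl(r_n^{-\beta/(2(\beta+\gamma))}\bigr)$. The growth condition $n^{(\beta+\gamma)/(2\beta)}=o(r_n)$ is exactly calibrated so that $\sqrt{n}\,r_n^{-\beta/(\beta+\gamma)}\to 0$, which kills the quadratic term; the linear terms vanish a fortiori. Slutsky's theorem combined with Proposition~\ref{prop:oracle-law} then gives $\sqrt{n}\,\widetilde{\mathbf{S}}_{n,r_n}\to \mathcal{N}(0,\bm{\Sigma})$ under $H_0$. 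Under $H_1$, the same error decomposition shows that $\widetilde{\mathbf{S}}_{n,r_n} - \mathbf{S}_n \to 0$ in probability, and since $\mathbf{S}_n \to \mathbf{S}:=\mathbb{E}[\mathbf{u}_1]\neq 0$ $\Gamma$-almost surely by Proposition~\ref{prop:oracle-law}, we obtain $n^{p/2}\widetilde{\mathrm{CI}}_{n,r_n,p}\to\infty$ as required.

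The main obstacle is twofold. First, there is the interplay between the $r_n$ samples used to train the RLS regressors and the $n$ samples on which the statistic is averaged: either sample splitting is assumed and the conditioning argument is clean, or a more delicate $U$-statistic--style decoupling is needed to keep $\varepsilon^{(m)}_{ij}$ independent of $(\ddot{x}_i,y_i)$. Second, matching the specific schedule $\lambda_{r_n}=r_n^{-1/(1+\gamma)}$ to a rate strong enough for the quadratic term: this most likely requires using the reproducing property $|\varepsilon^{(m)}_{ij}(z)| \leq \|h^{(m)}_{j,r_n}-f^{(m)}_j\|_{H^{m,j}_\mathcal{Z}}\sqrt{k^{m,j}_\mathcal{Z}(z,z)}$ to convert an RKHS-norm rate from~\citet{fischer2020sobolev} into a uniform pointwise bound, together with a careful accounting of the exponents appearing in Assumption~\ref{ass:source}.
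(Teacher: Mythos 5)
Your decomposition is identical to the paper's: expand $\widetilde{\Delta}_{n,r}-\Delta_n$ into two linear cross terms and one quadratic term, kill each after $\sqrt n$ scaling using RLS learning rates, then apply Slutsky with Proposition~\ref{prop:oracle-law}. Where you diverge is in how each correction term is bounded, and that divergence matters because it determines whether the argument survives the reuse of samples.

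You propose conditioning on the $r_n$ samples that trained the RLS regressors, turning $\varepsilon^{(m)}_{ij}$ into a fixed function, and then applying Chebyshev via $\mathbb{E}[a_{ij}\mid z_i]=0$ and a variance bounded by $\|\varepsilon^{(2)}_{ij}\|_{L^2(P_Z)}^2$. As you yourself flag, this conditioning argument is only valid under sample splitting or some decoupling device, because in the paper's construction $\{(x_i,z_i,y_i)\}_{i=1}^{r_n}$ are a \emph{subset} of the $n$ averaging samples: for $i\le r_n$, $\varepsilon^{(m)}_{ij}(z_i)$ is a nontrivial function of $(x_i,z_i,y_i)$ and the conditional-mean-zero step for $a_{ij}$ fails. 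The paper's proof avoids this entirely by never taking a conditional expectation. It instead rewrites each sum as an inner product in the appropriate tensor RKHS and applies a \emph{pathwise} Cauchy--Schwarz: a cross term factors into $\|e^{(2)}_j-h^{(2)}_{j,r}\|_{H_\mathcal{Z}}$ (which is $o_P(1)$ by Fischer--Steinwart, no rate needed) times the norm of an empirical mean-embedding fluctuation $\hat\mu_{\ddot X Z}-\mu_{\ddot X Z}$ (which is $O_P(n^{-1/2})$ by Markov). The product of an $o_P(1)$ and an $O_P(1)$ term is $o_P(1)$ regardless of any dependence between them, so no splitting is required. Similarly the quadratic term is split against $\mu_{ZZ}$, where the $L^2$ rate and the $r_n$ schedule are used (this is where $n^{(\beta+\gamma)/(2\beta)}\in o(r_n)$ enters, exactly as you computed), and against $\hat\mu_{ZZ}-\mu_{ZZ}$, where RKHS-norm consistency and an $O_P(n^{-1/2})$ concentration suffice.

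So your calibration of the rates is right, the high-level structure is right, and you correctly identified the decoupling issue as the main obstacle. What you are missing is that the paper's RKHS-Cauchy--Schwarz factorization \emph{is} the resolution of that obstacle: it makes the bounds deterministic in the data, so the product-of-$o_P/O_P$ lemma applies directly and no sample splitting or $U$-statistic decoupling is needed. Your suggested workaround via the reproducing property $|\varepsilon^{(m)}_{ij}(z)|\le\|h^{(m)}_{j,r_n}-f^{(m)}_j\|_{H^{m,j}_\mathcal{Z}}\sqrt{k^{m,j}_\mathcal{Z}(z,z)}$ points in the right direction but is a cruder, uniform bound; the paper's inner-product manipulation is sharper and is what actually closes the argument without additional assumptions.
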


From the above proposition, we can derive a consistent test at level $\alpha$ for $0<\alpha<1$. Indeed, we obtain the asymptotic null distribution of $n^{p/2}\widetilde{\text{CI}}_{n,r_n,p}$ and we show that under the alternative hypothesis $H_1$, $\Gamma$-almost surely, $n^{p/2}\widetilde{\text{CI}}_{n,r_n,p}$ is arbitrarily large as $n$ goes to infinity. For a fixed level $\alpha$, the test rejects $H_0$ if  $n^{p/2}\widetilde{\text{CI}}_{n,r_n,p}$ exceeds the ($1 - \alpha$)-quantile of its asymptotic null distribution and this test is therefore consistent. For example, when $p\in\{1,2\}$, the asymptotic null distribution of $n^{p/2}\widetilde{\text{CI}}_{n,r_n,p}$ is either a sum of correlated Nakagami variables\footnote{the probability density function of a Nakagami distribution of parameters $m\geq \frac{1}{2}$ and $\omega>0$ is  for all $ x\geq 0$,\\ $f(x,m,\omega)=\frac{2m^m }{G(m)\omega^m}x^{2m-1}\exp(\frac{-m}{\omega}x^2)$ where $G$ is the Euler Gamma function.} ($p=1$) or a sum of correlated chi square variables ($p=2$). 
However, computing the quantiles of these asymptotic null distributions can be computationally expensive as it requires a bootstrap or permutation procedure. In the following, we consider a different approach in which we normalize the statistic to obtain a simple asymptotic null distribution.

\subsection{Normalization of the Test Statistic} 
Herein, we consider a normalized variant of our statistic $\widetilde{\text{CI}}_{n,r,p}$ in order to obtain a tractable asymptotic null distribution. Denote $\bm{\Sigma}_{n,r}:=\frac{1}{n}\sum_{i=1}^n \widetilde{\bm{u}}_{i,r}\widetilde{\bm{u}}_{i,r}^T$ and let $\delta_n>0$, then the normalized statistic considered is given by
\begin{align*}
    \widetilde{\text{NCI}}_{n,r,p}:=\Vert  (\bm{\Sigma}_{n,r}+\delta_n\text{Id}_J)^{-1/2}\widetilde{\mathbf{S}}_{n,r}\Vert_{p}^p.
\end{align*}
In the next proposition, we show that our normalized approximate statistic converges in law to the standard multivariate normal distribution. The proof is given in Appendix~\ref{prv:norm-law}.
\begin{prop}
\label{prop:norm-law}
Suppose that Assumptions~\ref{assump-kernel}-\ref{ass:spectrum}-\ref{ass:source} are verified. Let $p\geq 1$, $J\geq 1$, $((\mathbf{t}^{(1)}_1,t^{(2)}_1),\dots,(\mathbf{t}^{(1)}_J,t^{(2)}_J))\in(\mathcal{\ddot{X}}\times\mathcal{Y})^J$, $r_n$ such that $n^{\frac{\beta+\gamma}{2\beta}}\in o(r_n)$, $\lambda_n=r_n^{-\frac{1}{1+\gamma}}$ and $(\delta_n)_{n\geq 0}$ a sequence of positive real numbers such that $\lim_{n\rightarrow\infty}\delta_n=0$. Then, under $H_0$, we have $\sqrt{n}(\bm{\Sigma}_{n,r}+\delta_n\text{Id}_J)^{-1/2}\mathbf{S}_{n,r_n}\rightarrow \mathcal{N}(0,\text{Id}_J)$. Moreover, under $H_1$, if the $((\mathbf{t}^{(1)}_j,t^{(2)}_j))_{j=1}^J$ are sampled independently according to $\Gamma$, then $\Gamma$-almost surely, for any $q\in\mathbb{R}$, $\lim_{n\rightarrow\infty}P(n^{p/2}\widetilde{\emph{NCI}}_{n,r_n,p} \geq q)=1$.
\end{prop}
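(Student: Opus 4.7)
The plan is to reduce to Proposition~\ref{prop:rls-law} via Slutsky's theorem and the continuous mapping theorem, once we have shown that the empirical second-moment matrix $\bm{\Sigma}_{n,r_n}$ is a consistent estimator of $\bm{\Sigma} = \mathbb{E}[\mathbf{u}_1\mathbf{u}_1^T]$ under $H_0$ (and converges to a bounded PSD matrix under $H_1$).

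The main intermediate step I would prove is that $\bm{\Sigma}_{n,r_n} \to \bm{\Sigma}$ in probability. Writing $\widetilde{u}_{i,r_n}(j) = u_i(j) + e_{i,r_n}(j)$, where $e_{i,r_n}(j)$ collects the RLS approximation errors, and expanding the product $\widetilde{u}_{i,r_n}(j)\widetilde{u}_{i,r_n}(k)$ produces four terms. The leading one, $\frac{1}{n}\sum_i u_i(j)u_i(k)$, converges almost surely to $\bm{\Sigma}(j,k)$ by the Law of Large Numbers. For the cross and quadratic error terms I would apply Cauchy--Schwarz and reduce everything to controlling the empirical mean-squared errors $\frac{1}{n}\sum_i (f_j^{(m)}(z_i) - h^{(m)}_{j,r_n}(z_i))^2$ of the two RLS regressions, where $f_j^{(m)}(z)$ denotes the true conditional mean. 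Under Assumptions~\ref{ass:spectrum}--\ref{ass:source} and the schedule $n^{(\beta+\gamma)/(2\beta)} \in o(r_n)$, $\lambda_{r_n} = r_n^{-1/(1+\gamma)}$, these terms vanish in probability by the same RKHS-norm rates from~\citet{fischer2020sobolev} that drive the proof of Proposition~\ref{prop:rls-law} (combined with boundedness of the kernels to pass from RKHS to sup-norm errors). Since $\delta_n \to 0$, this gives $\bm{\Sigma}_{n,r_n} + \delta_n \text{Id}_J \to \bm{\Sigma}$ in probability.

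Once this is in hand, the null case follows easily: Proposition~\ref{prop:rls-law} yields $\sqrt{n}\widetilde{\mathbf{S}}_{n,r_n} \xrightarrow{d} \mathcal{N}(0,\bm{\Sigma})$; assuming $\bm{\Sigma}$ is positive definite (generic $\Gamma$-a.s. when $k$ is characteristic and analytic), the continuous mapping theorem gives $(\bm{\Sigma}_{n,r_n} + \delta_n \text{Id}_J)^{-1/2} \to \bm{\Sigma}^{-1/2}$ in probability, and Slutsky's theorem yields $\sqrt{n}(\bm{\Sigma}_{n,r_n} + \delta_n \text{Id}_J)^{-1/2} \widetilde{\mathbf{S}}_{n,r_n} \xrightarrow{d} \bm{\Sigma}^{-1/2}\mathcal{N}(0,\bm{\Sigma}) = \mathcal{N}(0,\text{Id}_J)$. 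For the alternative, the same consistency argument (now without needing invertibility of the limit) shows $\lambda_{\max}(\bm{\Sigma}_{n,r_n}+\delta_n\text{Id}_J)$ stays bounded in probability, so the smallest singular value of $(\bm{\Sigma}_{n,r_n}+\delta_n\text{Id}_J)^{-1/2}$ is bounded away from zero. Combined with $\widetilde{\mathbf{S}}_{n,r_n} \to \mathbf{S} := \mathbb{E}[\mathbf{u}_1] \neq 0$ (which also follows from the proof of Proposition~\ref{prop:rls-law}), this gives $\|(\bm{\Sigma}_{n,r_n}+\delta_n\text{Id}_J)^{-1/2}\sqrt{n}\widetilde{\mathbf{S}}_{n,r_n}\|_p \geq c\sqrt{n}\|\mathbf{S}\|_p \to \infty$ for some positive random constant $c$, i.e., $n^{p/2}\widetilde{\text{NCI}}_{n,r_n,p}\to\infty$ in probability.

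The main obstacle is the first step: showing $\bm{\Sigma}_{n,r_n}\to\bm{\Sigma}$ in probability in spite of the RLS plug-in. A naive LLN does not apply since $\widetilde{\mathbf{u}}_{i,r_n}$ is not an i.i.d.\ sum (the regressors $h^{(m)}_{j,r_n}$ are fit on the data), and controlling the residual contribution is precisely where the sharp RKHS-norm rates from Assumptions~\ref{ass:spectrum}--\ref{ass:source} and the chosen $(r_n,\lambda_{r_n})$ schedule are needed. Everything that comes after is essentially Slutsky-type plumbing.
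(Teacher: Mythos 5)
Your proposal follows the same route as the paper's proof: establish that $\bm{\Sigma}_{n,r_n}\to\bm{\Sigma}$ in probability by expanding $\widetilde{\mathbf{u}}_{i,r_n}$ around the oracle $\mathbf{u}_i$ and killing the cross and quadratic residual terms with the same RLS-norm rates used in the proof of Proposition~\ref{prop:rls-law}, then invoke Slutsky's theorem together with Proposition~\ref{prop:rls-law} for the null limit and the divergence of $\widetilde{\mathbf{S}}_{n,r_n}$ for the alternative. Your explicit flagging of the need for $\bm{\Sigma}\succ 0$ (without which $\bm{\Sigma}^{-1/2}\mathcal{N}(0,\bm{\Sigma})=\mathcal{N}(0,\text{Id}_J)$ fails even as $\delta_n\to 0$) is a condition the paper leaves implicit, but otherwise the argument is essentially identical.
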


\begin{remark}
We emphasize that $J$ need not increase with $n$ for test consistency. Note also that the regularization parameter $\delta_n$ allows to ensure that $(\bm{\Sigma}_{n,r}+\delta_n\text{Id}_J)^{-1/2}$ can be stably computed. In practice, $\delta_n$ requires no tuning, and can be set to be a very small constant.
\end{remark}

Our normalization procedure allows us to derive a simple statistical test, which is distribution-free under the null hypothesis. 

\paragraph{Statistical test at level $\alpha$:} Compute $n^{p/2}\widetilde{\text{NCI}}_{n,r,p}$, choose the threshold $\tau$ corresponding to the $(1-\alpha)$ quantile of the asymptotic null distribution, and reject the null hypothesis whenever  $n^{p/2}\widetilde{\text{NCI}}_{n,r,p}$ is larger than $\tau$. For example, if $p=2$,  the threshold $\tau$ is the $(1 -\alpha)$-quantile of $\chi^2(J)$, i.e., a sum of $J$ \emph{independent} standard $\chi^2 $ variables.

\paragraph{Total Complexity:} Our normalized statistic $\widetilde{\text{NCI}}_{n,r,p}$ requires first to compute $\alpha^{(1)}_{j,r}$ and $\alpha^{(2)}_{j,r}$. These quantities can be evaluated in at most $\mathcal{O}(r^2d+r^3)$ algebraic operations where $d$ corresponds to the computational cost of evaluating the kernels involved in the RLS regressions. We will use the above for the complexity analysis of our method, although one can consider the theoretical estimation given by the Coppersmith–Winograd algorithm~\citep{coppersmith1987matrix} that reduces the computational cost to $\mathcal{O}(r^2d+r^{2.376})$. Once $\alpha^{(1)}_{j,r}$ and  $\alpha^{(2)}_{j,r}$ are available, evaluating the RLS estimators $h_{j,r}^{(1)}$ and $h_{j,r}^{(2)}$ requires only $\mathcal{O}(rd)$ operations. Then $\widetilde{\Delta}_{n,r}$ can be evaluated in $\mathcal{O}(nrd + r^2d + r^3)$ operations and $\widetilde{\text{CI}}_{n,r,p}$ has therefore a computational complexity of $\mathcal{O}(J(nrd + r^2d+r^3))$. The computation of $\widetilde{\text{NCI}}_{n,r,p}$ requires inverting a $J \times J$ matrix $\bm{\Sigma}_{n,r}+\delta_n\text{Id}_J$, but this is fast and numerically stable: we empirically observe that only a small value of J is required (see Section~\ref{sec-experiments-main}), e.g. less than 10.  Finally the total computational cost to evaluate $\widetilde{\text{NCI}}_{n,r,p}$ is $\mathcal{O}(J(nrd + r^2d + r^3) + nJ^2 + J^3)$.

\subsection{Hyperparameters}
The hyperparameters of our statistics $\widetilde{\text{NCI}}_{n,r,p}$ fall into two categories: those directly involved with the test and those of the regression. We assume from now on that all the kernels involved in the computation of our statistics are \emph{Gaussian kernels}, and consider $n$ i.i.d. observations $\{(x_i,z_i,y_i)\}_{i=1}^n$.

The first category includes both the choice of the locations $((t_x,t_z)_j,(t_y)_j))_{j=1}^J$ on which differences between the mean embeddings are computed and the choice of the kernels $k_{\mathcal{\ddot{X}}}$ and $k_{\mathcal{Y}}$. Each location $t_x,t_y,t_z$  is randomly chosen according to a Gaussian variable with mean and covariance of $\{x_i\}_{i=1}^n$, $\{y_i\}_{i=1}^n$, and $\{z_i\}_{i=1}^n$, respectively. As we consider Gaussian kernels, we should also choose the bandwidths. Here, we restrict ourselves to one-dimensional kernel bandwidths $\sigma_{\mathcal{{X}}}$, $\sigma_{\mathcal{{Y}}}$, and $\sigma_{\mathcal{{Z}}}$ for the kernels $k_{\mathcal{X}}$, $k_{\mathcal{Y}}$, and $k_{\mathcal{Z}}$, respectively. More precisely, we select the median of $\{\lVert x_i-x_j\rVert_2\}_{1 \leq i<j\leq n}$, $\{\lVert y_i-y_j\rVert_2\}_{1\leq i<j\leq n}$, and $\{\lVert z_i-z_j\rVert_2\}_{1\leq i<j\leq n}$ for $\sigma_{\mathcal{X}}$, $\sigma_{\mathcal{Y}}$,  and $\sigma_{\mathcal{Z}}$, respectively.


The other category contains all the kernels $k^{m,j}$ and the regularization parameters $\lambda^{(m)}_{j,r}$ involved in the RLS problems. These parameters should be selected carefully to avoid either underfitting of the regressions, which may increase the type-I error, or overfitting, which may result in a large type-II error. To optimize these, similarly to~\cite{zhang2012kernel}, we consider a GP regression that maximizes the likelihood of the observations. While carrying out a precise GP regression can be prohibitive, in practice, we run this method only on a batch of size $200$ observations randomly selected and we perform only $10$ iterations for choosing the hyperparameters involved in the RLS problems. Hence, our optimization procedure does not affect the total computational cost as it is independent of the number of observations $n$. 

\begin{remark}
Note that here we select the locations $((t_x,t_z)_j,(t_y)_j))_{j=1}^J$ randomly. If one wants to choose the locations by maximizing the power the test, then a bi-level optimization problem appears as the RLS estimators depend on the locations chosen and we believe that it is out of the scope of this paper.
\end{remark}

\section{Experiments}
\label{sec-experiments-main}
\begin{figure}[ht]
\begin{tabular}{cc} 
\includegraphics[height=2.4cm]{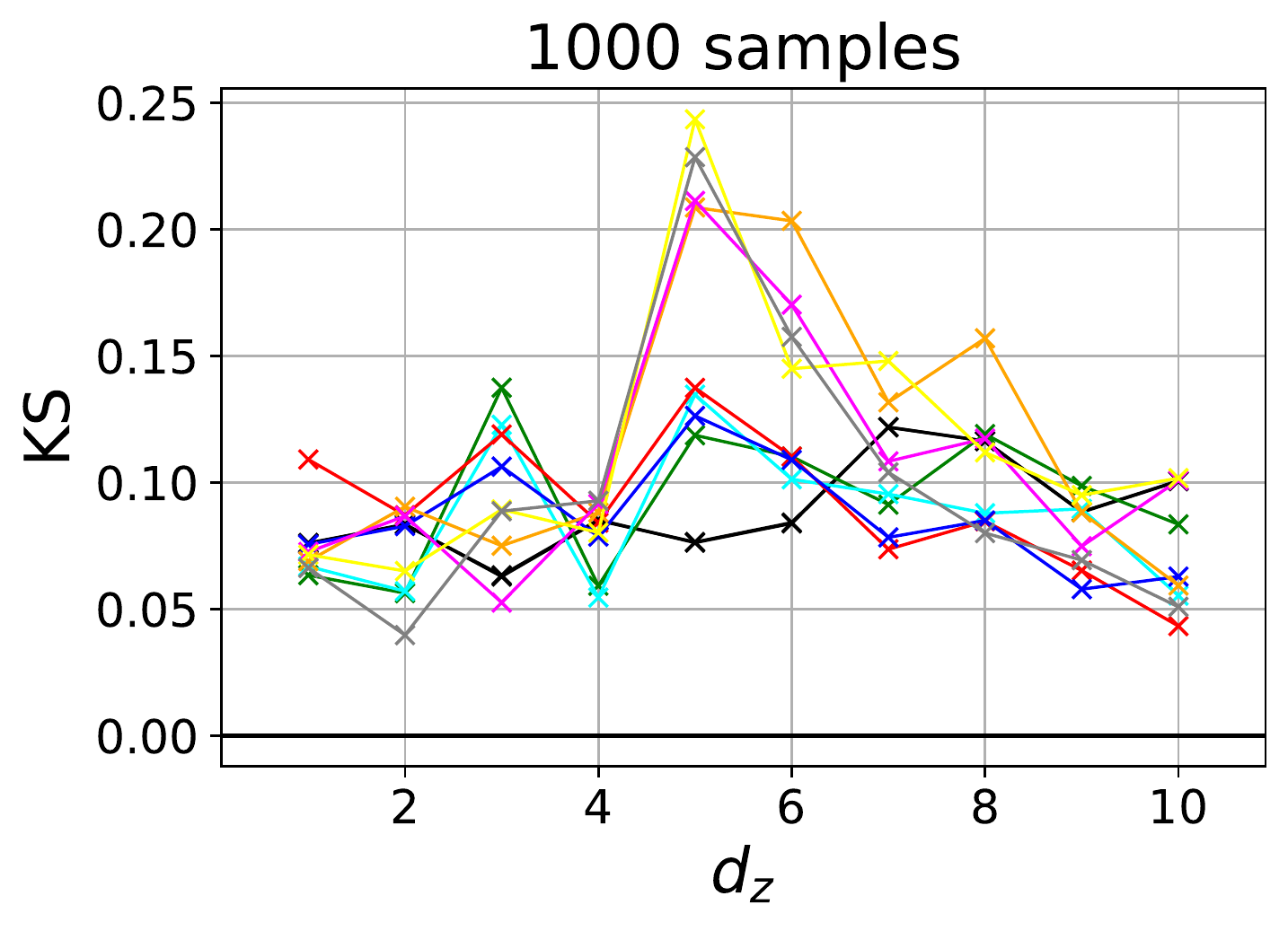}& 
\includegraphics[height=2.4cm]{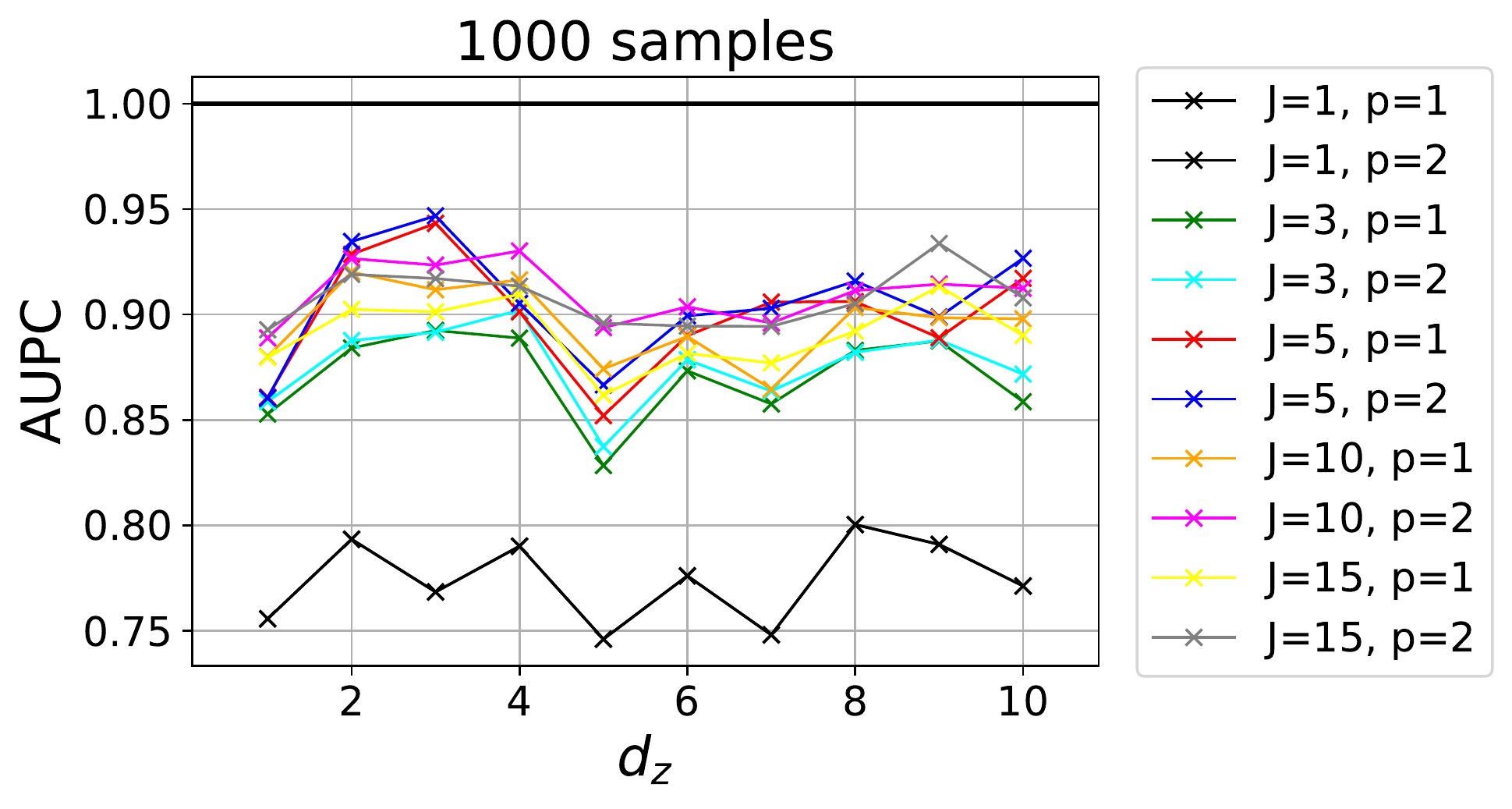}
\end{tabular}
\caption{Comparison of the KS statistic (\emph{left}) and the AUPC (\emph{right}) of our test statistic $\widetilde{\text{NCI}}_{n,r,p}$ when the data is generated respectively from the models defined in~\eqref{exp-strobl-h0} and~\eqref{exp-strobl-h1} with Gaussian noises for multiple $p$ and $J$. For each problem, we draw $n=1000$ samples and repeat the experiment 100 times. We set $r=1000$ and report the results obtained when varying the dimension $d_z$ of each problem from 1 to 10. Observe that when $J=1$, for all $p\geq 1$ $\widetilde{\text{NCI}}_{n,r,1}=\widetilde{\text{NCI}}_{n,r,p}$, therefore there is only one  common black curve.
\label{fig-exp-param}}
\vspace{-0.4cm}
\end{figure}

\begin{figure}[ht]
\begin{tabular}{cc} 
\includegraphics[width=0.22\textwidth]{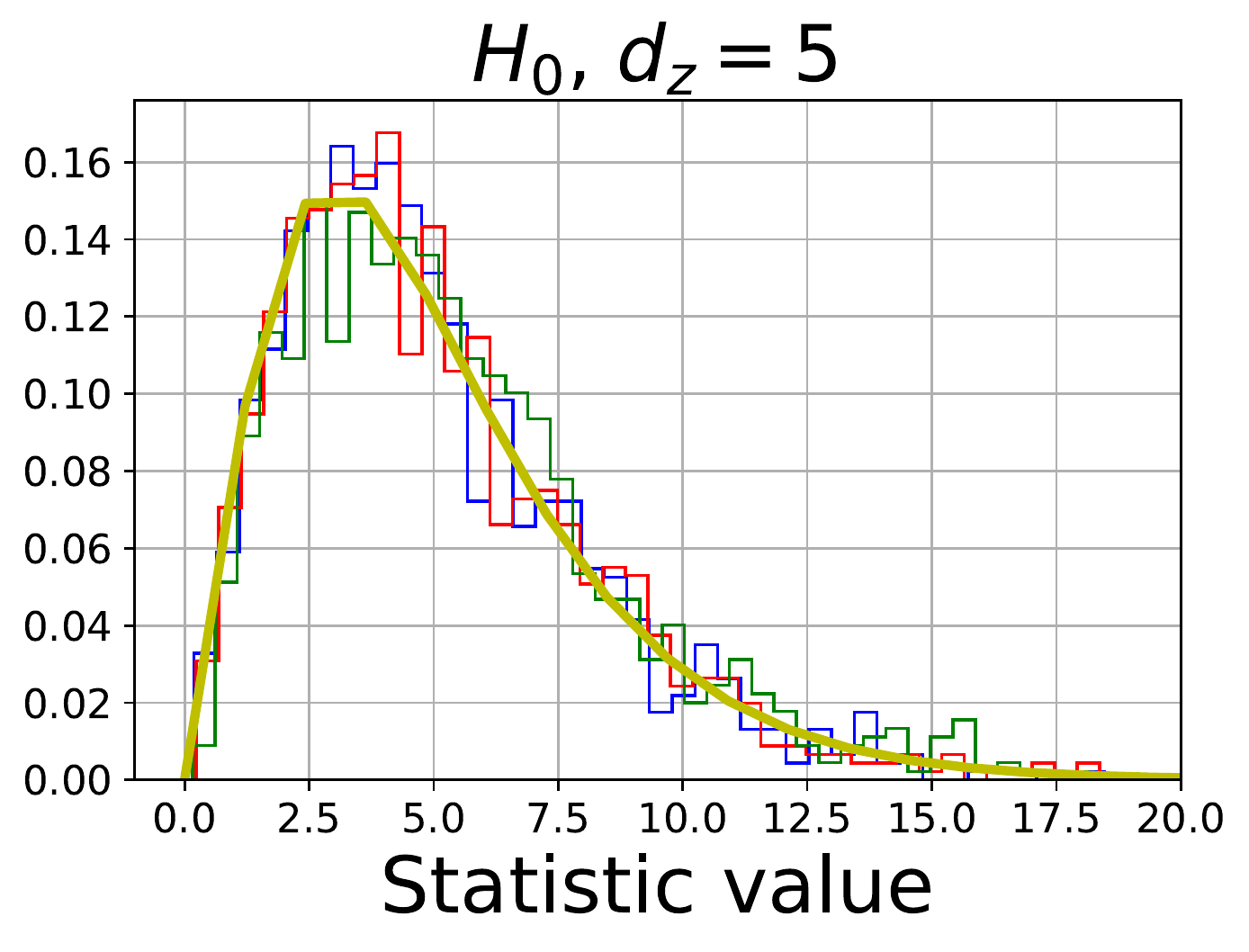}& \includegraphics[width=0.22\textwidth]{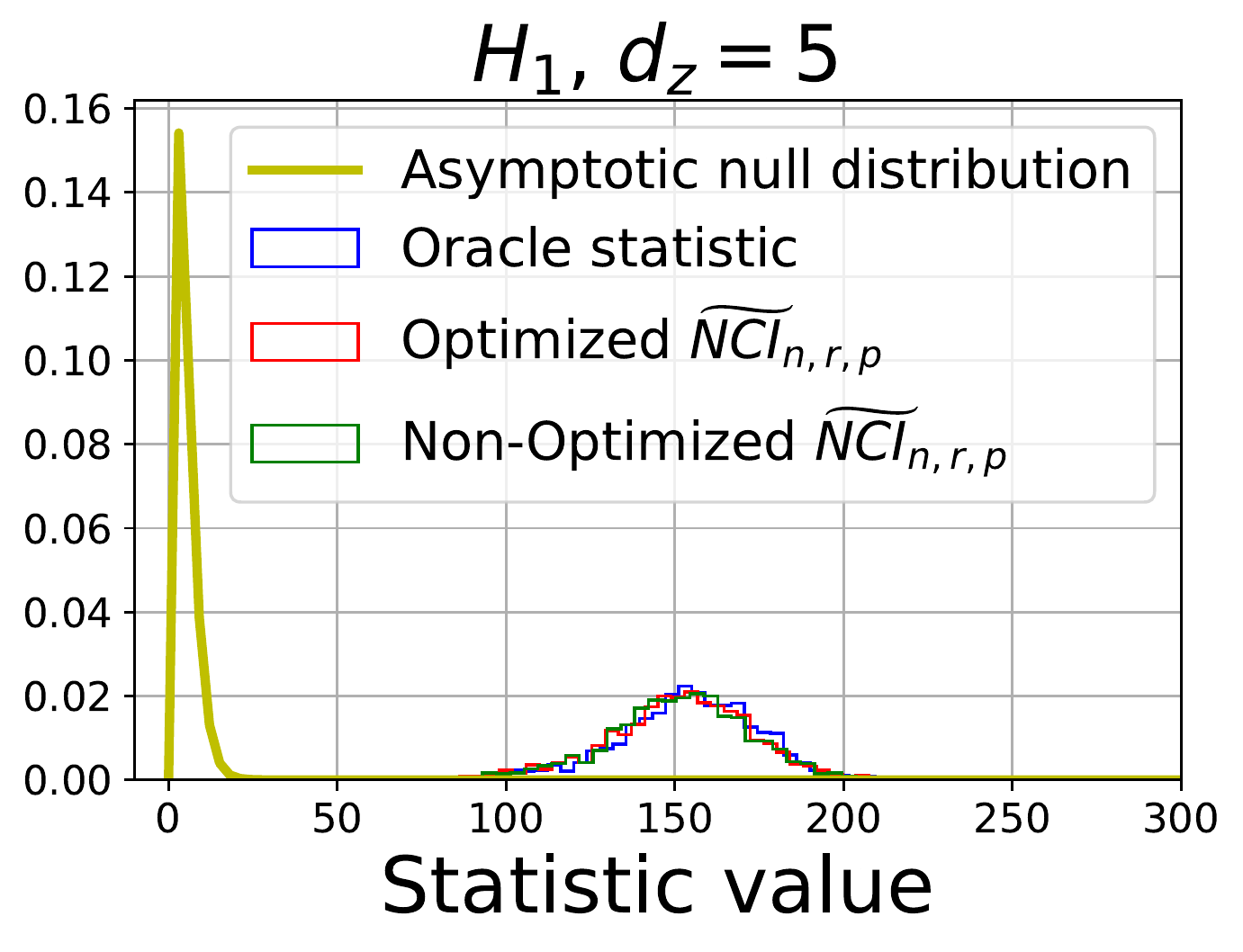} \\
\includegraphics[width=0.22\textwidth]{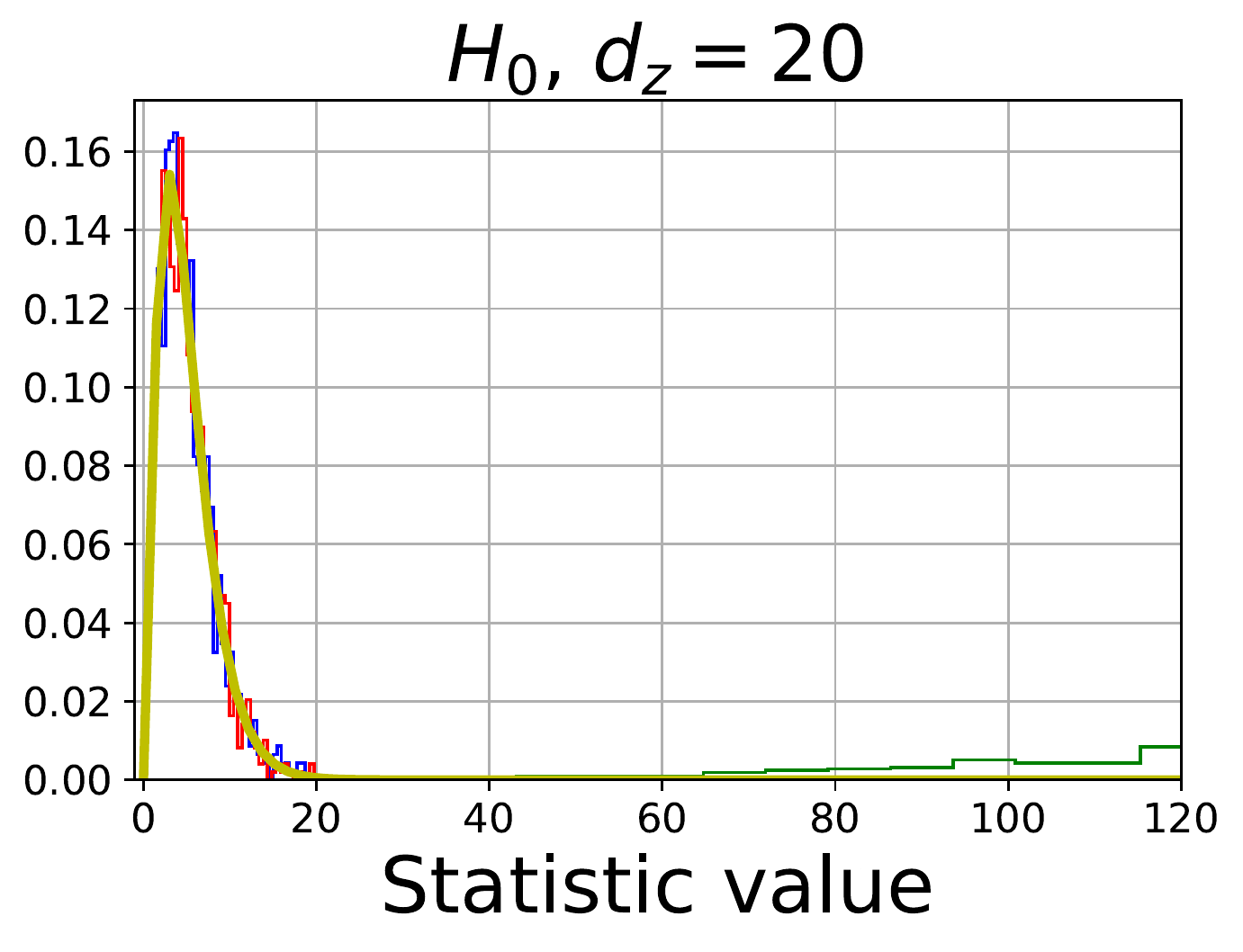}& \includegraphics[width=0.22\textwidth]{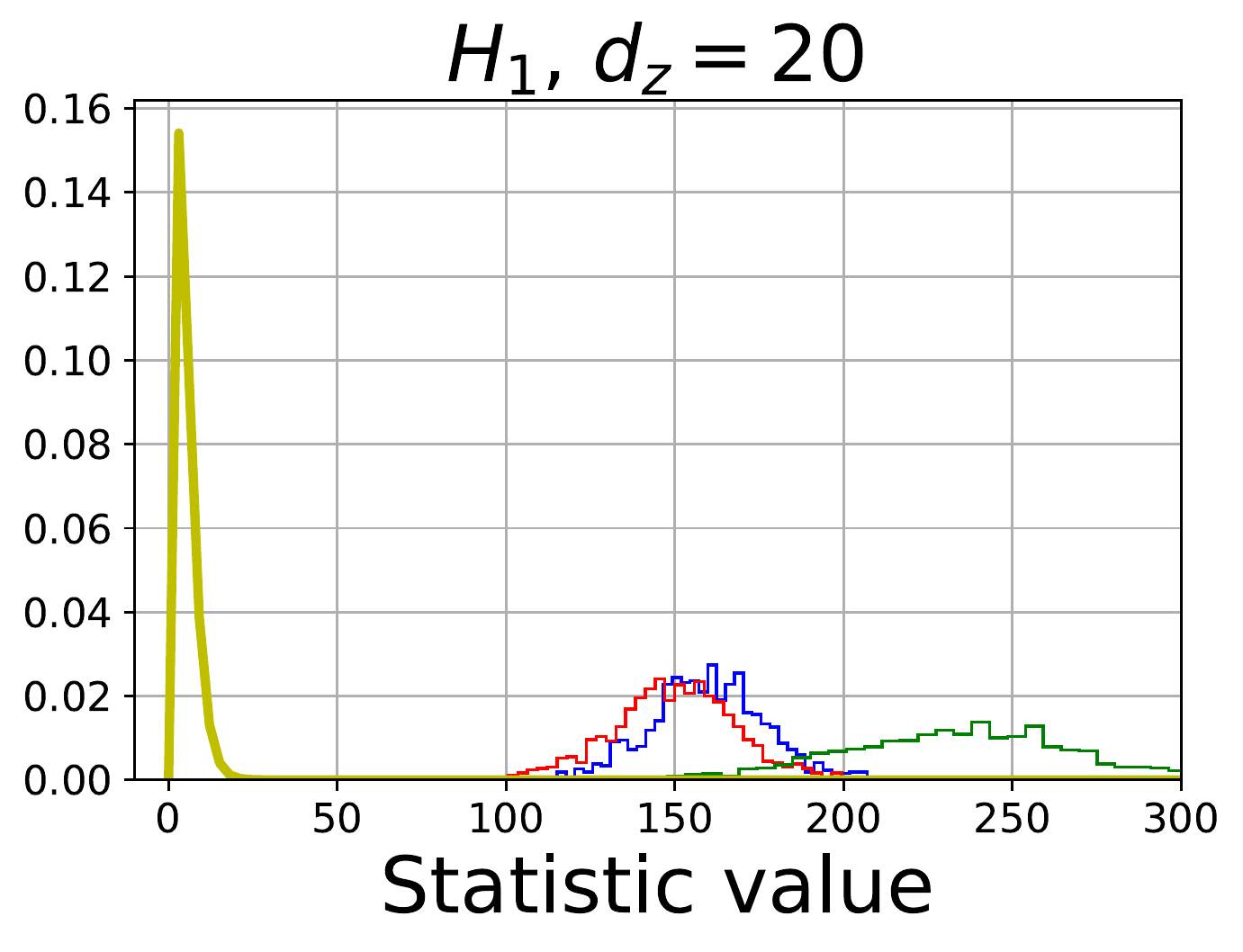} \\ 
\end{tabular}
\caption{Comparisons between the empirical distributions of the normalized version of the oracle statistic  $\widehat{\text{CI}}_{n,p}$ and the approximate normalized statistic $\widetilde{\text{NCI}}_{n,r,p}$, with the theoretical asymptotic null distribution when the data is generated either from the model defined in~\eqref{exp-illustration-h0} (\emph{left}) or the one defined in~\eqref{exp-illustration-h1} (\emph{right}). We set the dimension of $Z$ to be either $d_z=5$ (\emph{top row}) or $d_z=20$ (\emph{bottom row}). For each problem, we draw $n=1000$ samples and repeat the experiment 1000 times. In all the experiments, we set $J=5$ and $p=2$, thus the asymptotic null distribution follows a $\chi^2(5)$. Observe that both the oracle statistic and the approximated one recover the true asymptotic distribution under the null hypothesis. When $H_1$ holds, we can see that the two statistics manage to reject the null hypothesis. This figure also illustrates the empirical distribution of our approximate statistic when we do not optimize the hyperparameters involved in the RLS estimators: in this case we do not control the type-I error in the high dimensional setting.\label{fig-illustation-theory}}
\vspace{-0.4cm}
\end{figure}

\begin{figure*}[h]
\begin{tabular}{cccc} 
\includegraphics[height=2.9cm]{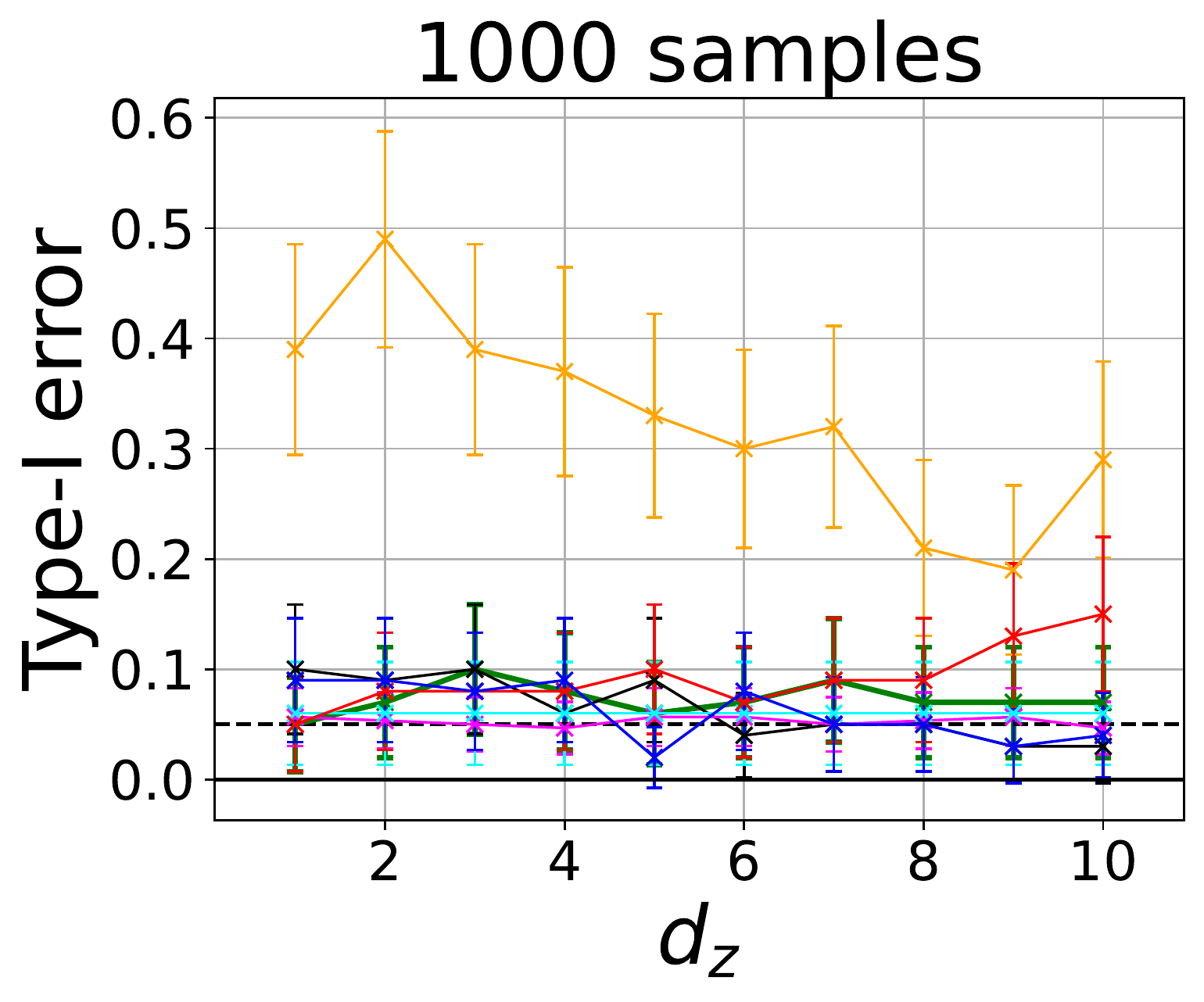}& \includegraphics[height=2.9cm]{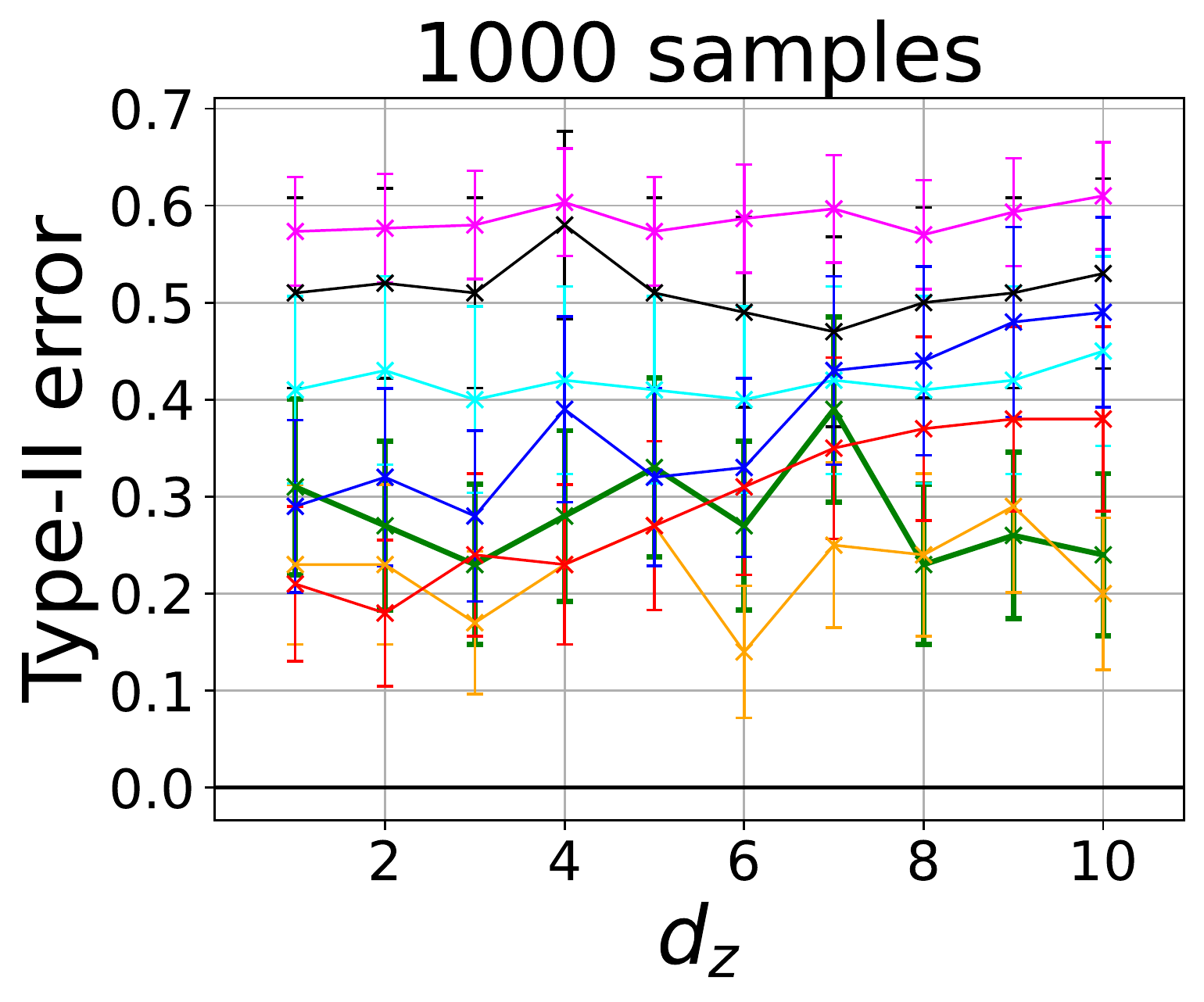} & 
\includegraphics[height=2.9cm]{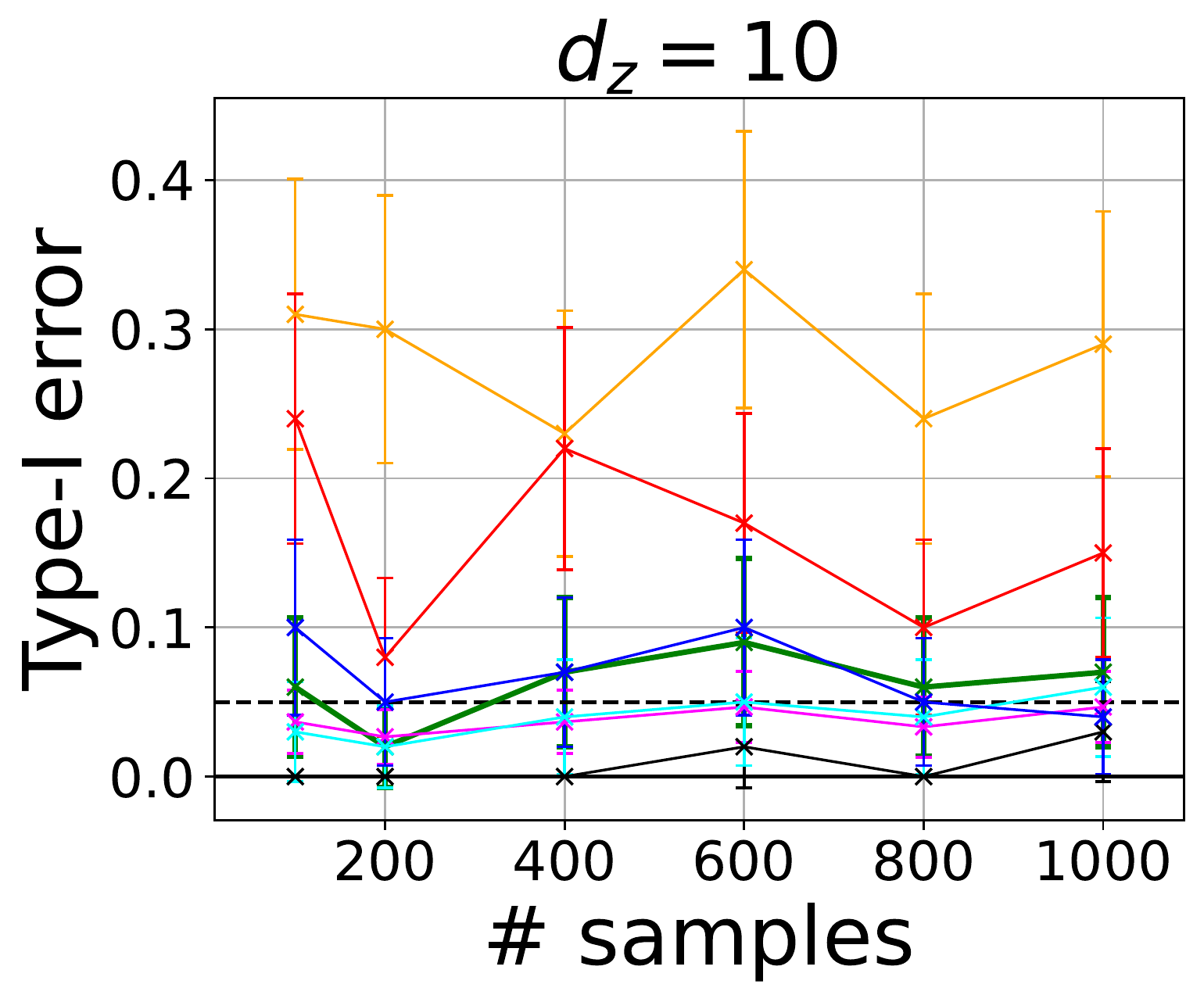}& \includegraphics[height=2.9cm]{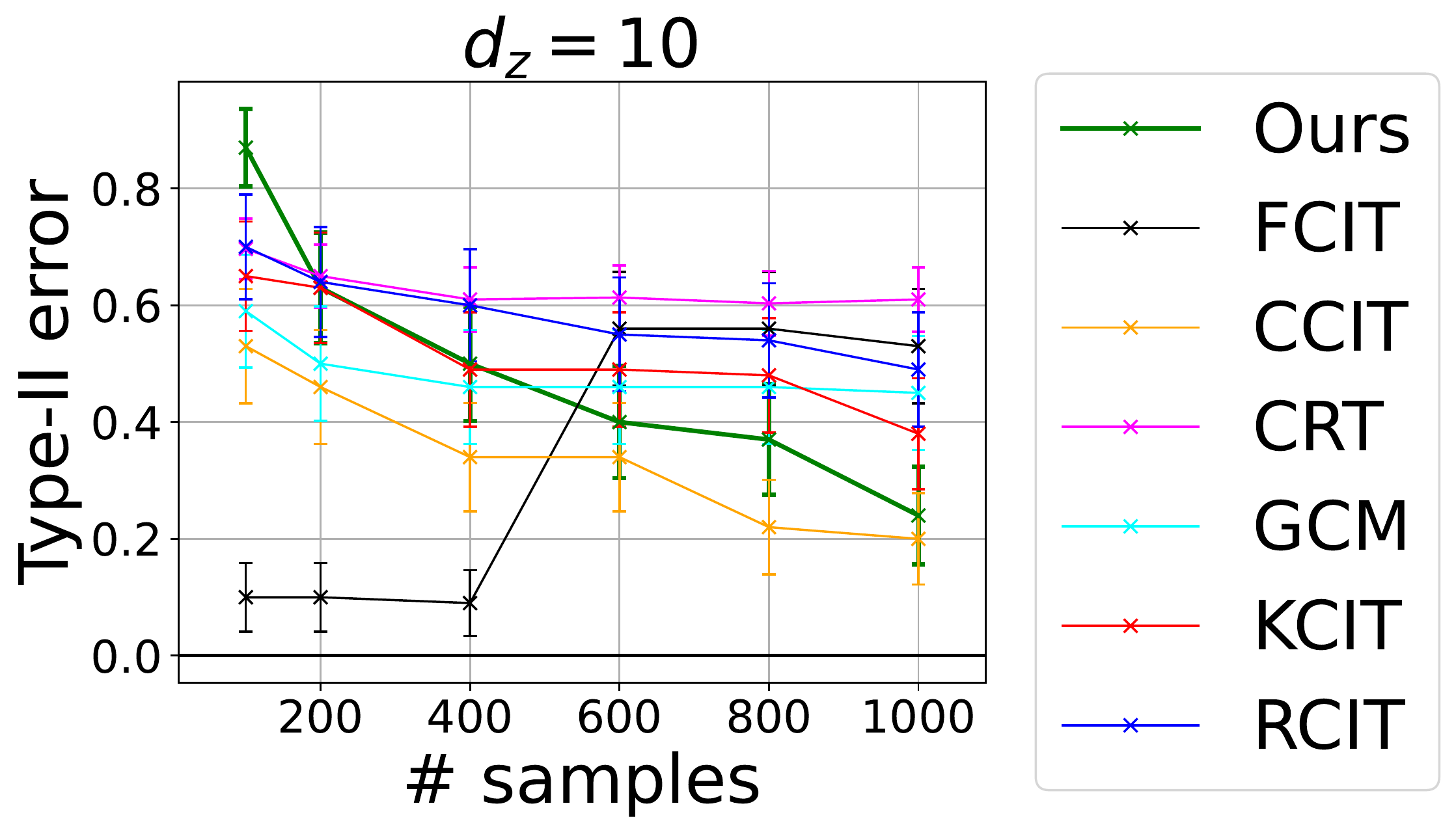}
\end{tabular}
\caption{Comparison of the type-I error at level $\alpha=0.05$ (dashed line) and the type-II error (lower is better) of our test procedure with other SoTA tests on the two problems presented in~\eqref{exp-strobl-h0} and~\eqref{exp-strobl-h1}  with Gaussian noises. Each point in the figures is obtained by repeating the experiment for 100 independent trials. (\emph{Left, middle-left}): type-I and type-II errors obtained by each test when varying the dimension $d_z$ from 1 to 10; here, the number of samples $n$ is fixed and equals $1000$. (\emph{Middle-right, right}): type-I and type-II errors obtained by each test when varying the number of samples $n$ from 100 to 1000; here, the dimension $d_z$ is fixed and equals $10$. 
\label{fig-exp-strobl-type}}
\vspace{-0.3cm}
\end{figure*}

The goal of this section is three fold: (i) to investigate the effects of the parameters $J$ and $p$ on the performances of our method, (ii) to validate our theoretical results depicted in Propositions~\ref{prop:oracle-law} and \ref{prop:norm-law}, and (iii) to compare our method with those proposed in the literature. In more detail, we first compare the performance of our method, both in terms of both power and type-I error, by varying the hyperparameters $J$ and $p$. We show that our method is robust to the choice of $p$, and also show that the power increases as $J$ increases. Then, we explore synthetic toy problems where one can derive an explicit formulation of the conditional means involved in our test statistic. In these cases, we can compute our proposed oracle statistic $\widehat{\text{CI}}_{n,p}$ and its normalized version, allowing us to show that under the null hypothesis we recover the theoretical asymptotic null distribution obtained in Proposition~\ref{prop:oracle-law}. We also reach similar conclusions regarding our approximate normalized test statistic, $\widetilde{\text{NCI}}_{n,r,p}$. In addition, in this experiment, we investigate the effect of the proposed optimization procedure for choosing the hyperparameters involved in the RLS estimators of $\widetilde{\text{NCI}}_{n,r,p}$, and show its benefits. Finally, we demonstrate on several synthetic experiments that our proposed testing procedure outperforms state-of-the-art (SoTA) methods both in terms of statistical power and type-I error, even in the high dimensional setting. The code is available at~
\href{https://github.com/meyerscetbon/lptest}{https://github.com/meyerscetbon/lp-ci-test}\footnote{Our code requires a slight modification of the Gaussian Process Regression implemented in scikit-learn~\cite{scikit-learn} to limit the number of iterations involved in the optimization procedure.}.

\textbf{Benchmarks.} We consider 6 synthetic data sets and compare the power and type-I error of our test $\widetilde{\text{NCI}}_{n,r,p}$ to the following 6 existing CI methods: \textbf{KCIT}~\citep{zhang2012kernel}, \textbf{RCIT}~\citep{strobl2019approximate}, \textbf{CCIT}~\citep{sen2017modelpowered}, \textbf{CRT}~\citep{candes2018panning} using correlation statistic from \citep{BellotS19}, \textbf{FCIT}~\citep{chalupka2018fast} and \textbf{GCM}~\citep{gcm2020}. Software packages of all the above tests are freely available online and each experiment was run on a single CPU.

\textbf{Evaluation.} To evaluate the performance of the tests, we consider four metrics. Under $H_0$, we report either the Kolmogorov-Smirnov (KS) test statistic between the distribution of p-values returned by the tests and the uniform distribution on $[0,1]$, or the type-I errors at level $\alpha=0.05$. Note that a valid conditional independence test should control the type-I error rate at any level $\alpha$. Here, a test that generates a p-value that follows the uniform distribution over $[0,1]$ will achieve this requirement. The latter property of the p-values translates to a small KS statistic value. Under $H_1$, we compute either the area under the power curve (AUPC) of the empirical cumulative density function of the p-values returned by the tests, or the resulting type-II error. A conditional test has higher power when its AUPC is closer to one. Alternatively, the smaller the type-II error is, the more powerful the test is.


\begin{figure*}[h]
\begin{tabular}{cccc} 
\includegraphics[height=2.9cm]{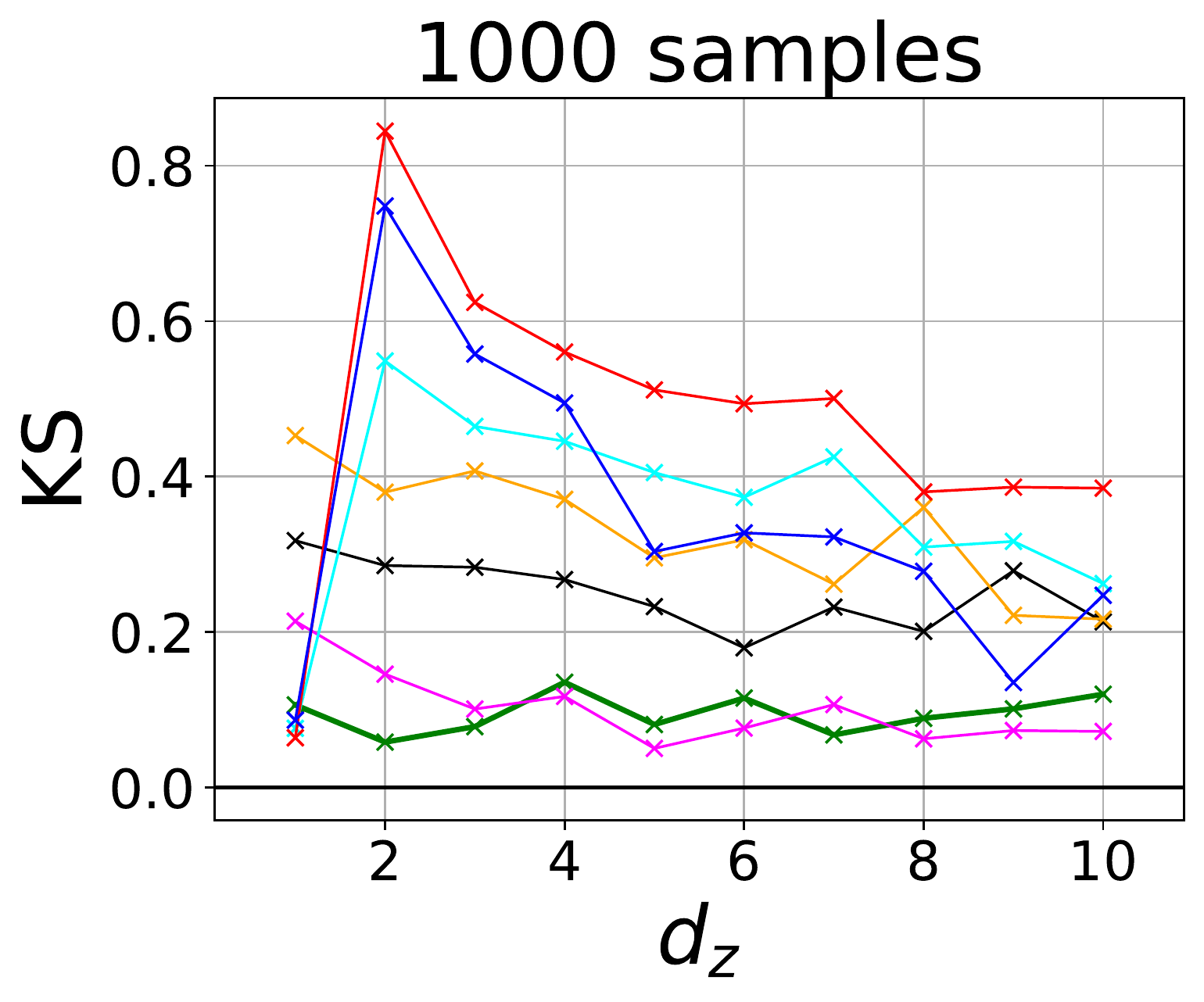}& \includegraphics[height=2.9cm]{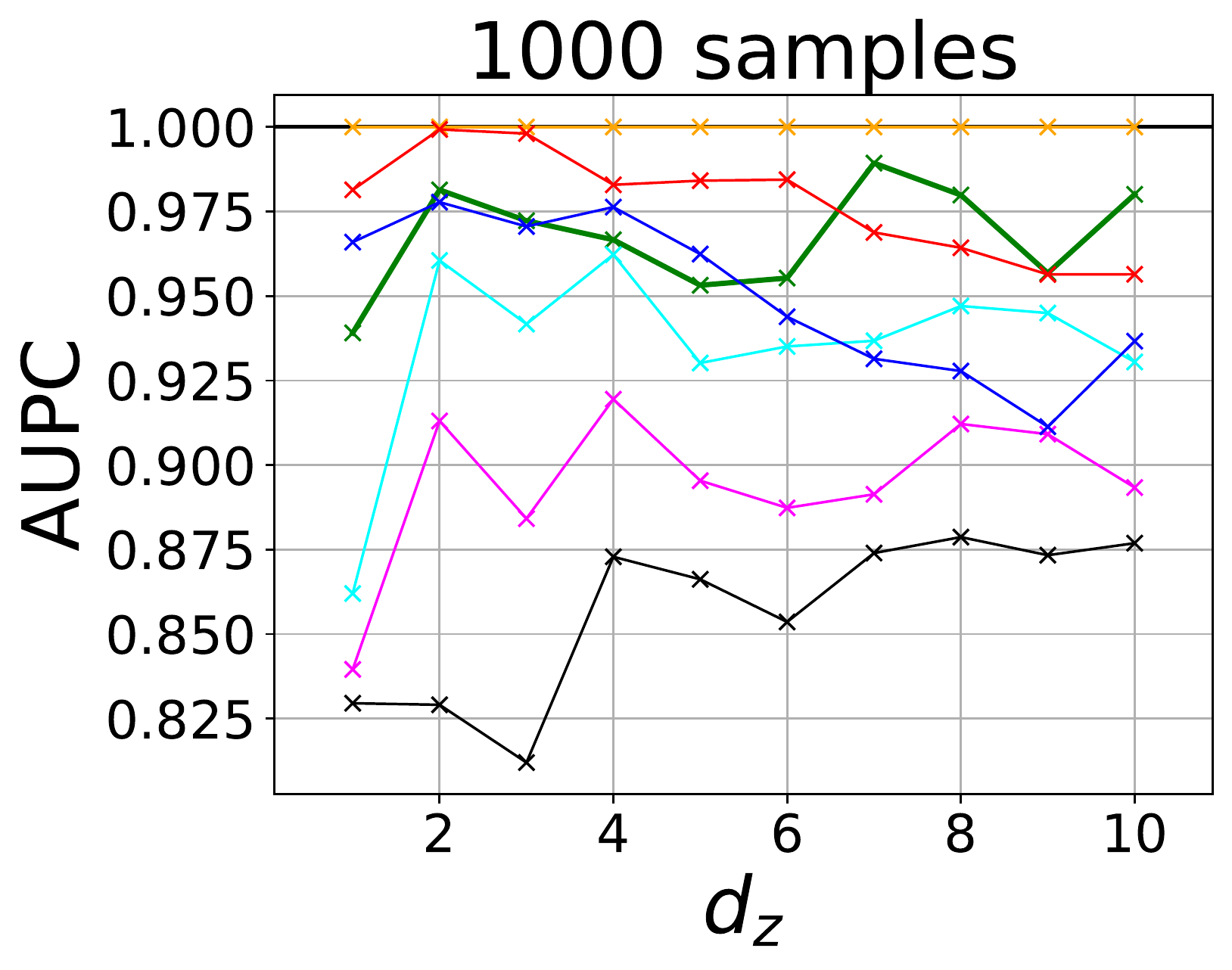} & 
\includegraphics[height=2.9cm]{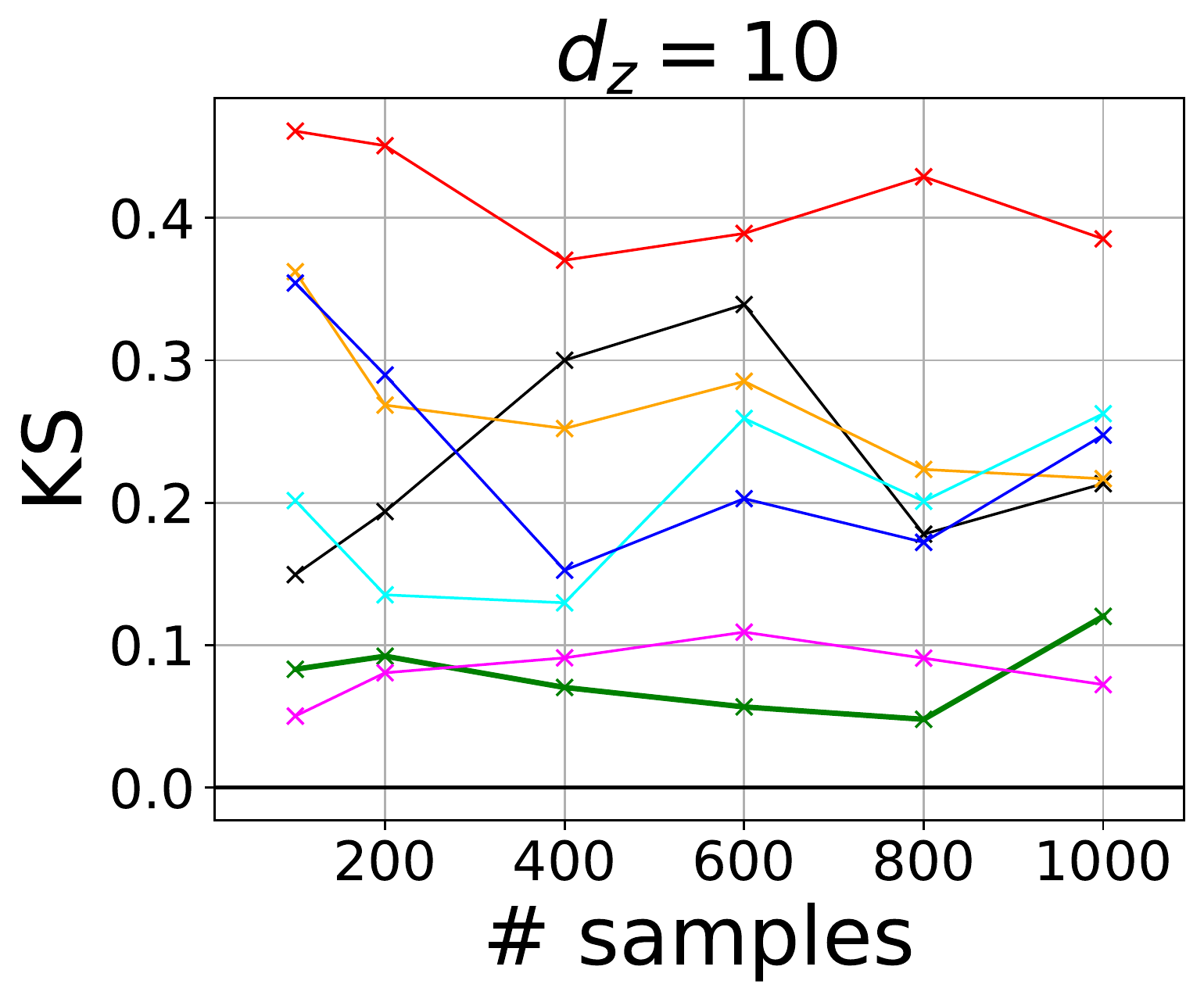}& \includegraphics[height=2.9cm]{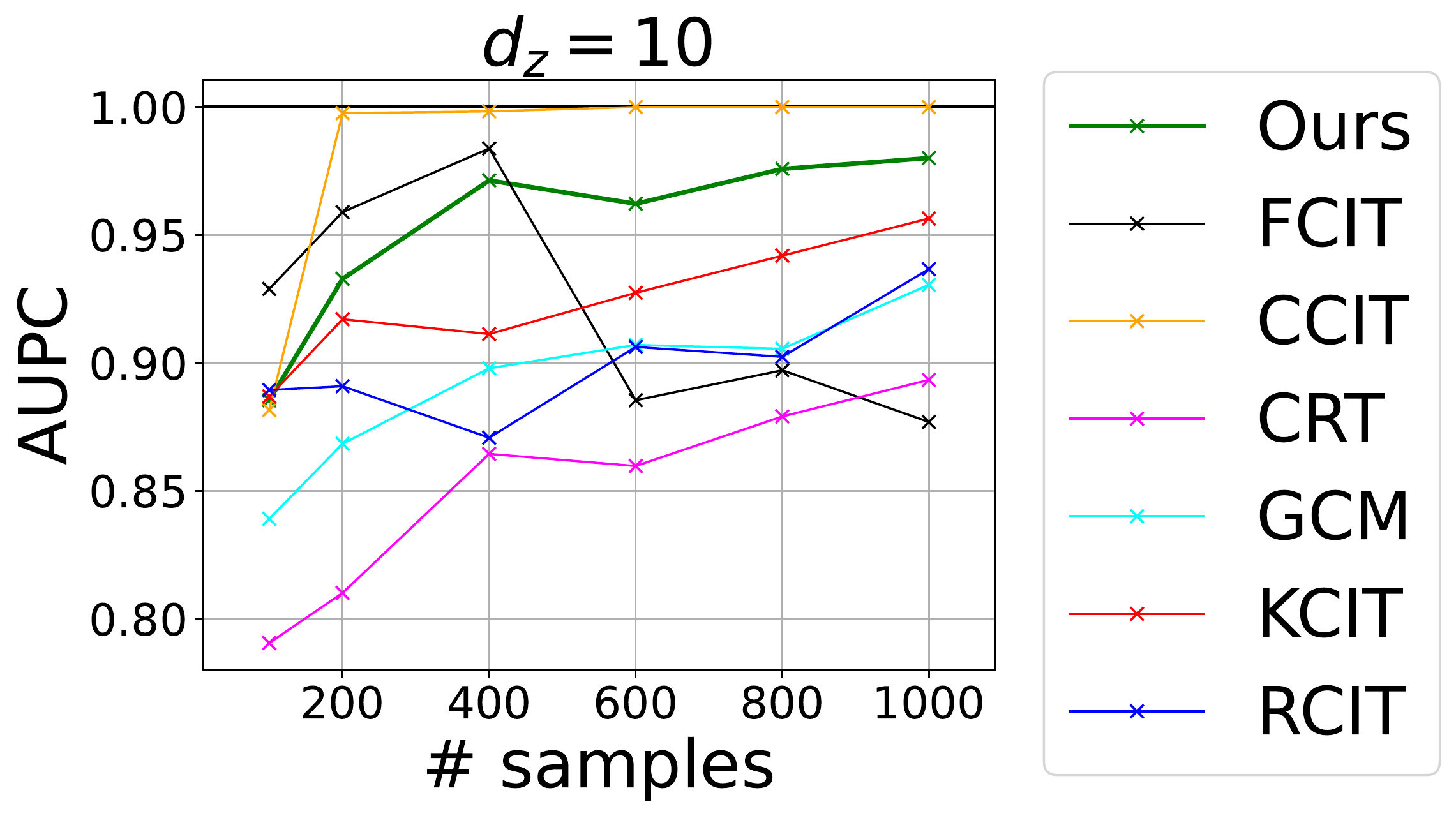} 
\end{tabular}
\caption{Comparison of the KS statistic and the AUPC of our testing procedure with other SoTA tests on the two problems presented in Eq.~\eqref{li-exp-h0} and Eq.~\eqref{li-exp-h1}  with Laplace noises. Each point in the figures is obtained by repeating the experiment for 100 independent trials. (\emph{Left, middle-left}): the KS statistic and AUPC (respectively) obtained by each test when varying the dimension $d_z$ from 1 to 10; here, the number of samples $n$ is fixed and equals $1000$. (\emph{Middle-right, right}): the KS and AUPC (respectively), obtained by each test when varying the number of samples $n$ from 100 to 1000; here, the dimension $d_z$ is fixed and equals $10$.
\label{fig-exp-li-ks-laplace}}
\vspace{-0.4cm}
\end{figure*}

\textbf{Effects of $p$ and $J$.} Our first experiment studies the effects of $p$ and $J$ on our proposed method. To do so, we follow the synthetic experiment proposed in~\cite{strobl2019approximate}. To evaluate the type-I error, we generate data that follows the model:
\begin{align}
\label{exp-strobl-h0}
    X=f_1(\varepsilon_x), \ Y=f_2(\varepsilon_y),~\text{and Z}\sim\mathcal{N}(0_d,I_{d_z}),
\end{align}
where $Z$, $\varepsilon_x$, and $\varepsilon_y$ are samples from jointly independent standard Gaussian or Laplace distributions, and $f_1$ and $f_2$ are smooth
functions chosen uniformly from the set $\{(\cdot), (\cdot)^2, (\cdot)^3, \tanh(\cdot), \exp(-|\cdot|)\}$. To compare the power of the tests, we also consider the model:
\begin{align}
\label{exp-strobl-h1}
    X=f_1(\varepsilon_x +0.8\varepsilon_b), Y=f_2(\varepsilon_y+0.8\varepsilon_b),
\end{align}
where $\varepsilon_b$ is sampled from a standard Gaussian or Laplace distribution. In Figure~\ref{fig-exp-param}, we compare the KS statistic and the AUPC of our method when varying $p$ and $J$. That figure shows that (i) our method is robust to the choice of $p$, and (ii) the performances of the test do not necessarily increase as $J$ increases. Armed with theses observations, in the following experiments, we always set $p=2$ and $J=5$ for our method.

\textbf{Effect of the rank $r$.} In this experiment, we investigate the effect of the rank regression $r$ on our proposed method both in terms of performance and time. For that purpose, in Figure~\ref{fig-rn-dependence}, we consider the two problems presented in~\eqref{exp-strobl-h0} and~\eqref{exp-strobl-h1}  with Gaussian noises and show the type-I and type-II when varying the ratio $r/n$ for multiple sample size $n$. We observe that the rank $r$ does not affect the power of the method, however we observe that the type-I error decreases as the ratio increases. Therefore the rank $r$ allows in practice to deal with the tradeoff between the computational time and the control of the type-I error. In the following experiment we always set $r=n$ for simplicity.

\textbf{Illustrations of our theoretical findings.} 
The following experiment confirms that validity of our theoretical results from Propositions~\ref{prop:oracle-law} and \ref{prop:norm-law}. For that purpose, we generate two synthetic data sets for which either $H_0$ or $H_1$ holds. Concretely, we define a first triplet $(X,Y,Z)$ as follows:
\begin{align}
\label{exp-illustration-h0}
X = \text{P}_1(Z) + \varepsilon_x,~~Y = \text{P}_1(Z) + \varepsilon_y.
\end{align}
Above, $\varepsilon_x$ and $\varepsilon_y$ follow two independent standard normal distributions, $Z\sim \mathcal{N}(0_{d_z},\Sigma)$ with $\Sigma\in\mathbb{R}^{d_z\times d_z}$. The covariance matrix $\Sigma$  is obtained by multiplying a random matrix whose entries are independent and follow standard normal distribution, by its transpose, and $\text{P}_1$ is a projection onto the first coordinate. As a result, in this case, we have that $X\perp Y \mid Z$. We also consider a modification of the above data generating function for which $H_1$ holds. This is done by adding a noise component $\varepsilon_b$ that is shared across $X$ and $Y$ as follows:
\begin{align}
\label{exp-illustration-h1}
X = \text{P}_1(Z) + \varepsilon_x + \varepsilon_b,~~ Y = \text{P}_1(Z) + \varepsilon_y + \varepsilon_b,
\end{align}
where $\varepsilon_b$ follows the standard normal distribution. Since we consider \emph{Gaussian kernels}, we can obtain an explicit formulation of $\mathbb{E}_{\ddot{X}}\left[k_{\mathcal{\ddot{X}}}(\mathbf{t}^{(1)}_j,\ddot{X})|Z=\cdot\right]$ and $\mathbb{E}_{Y}\left[k_{\mathcal{Y}}(t^{(2)}_j,Y)|Z=\cdot\right]$ for both data generation functions. See Appendix~\ref{sec-theoritical-findings} for more details. 
Consequently, we are able to compute both the normalized version of our oracle statistic $\widehat{\text{CI}}_{n,p}$ and our approximate normalized statistic $\widetilde{\text{NCI}}_{n,r,p}$. In Figure~\ref{fig-illustation-theory}, we show that both statistics manage to recover the asymptotic distribution under $H_0$, and reject the null hypothesis under $H_1$. In addition, we show that in the high dimensional setting, only our optimized version of $\widetilde{\text{NCI}}_{n,r,p}$---obtained by optimizing the hyperparameters involved in the RLS estimators of our statistic---manages to recover the asymptotic distribution under $H_0$.

\begin{figure*}[t!]
\begin{tabular}{cccc} 
\includegraphics[height=3.6cm]{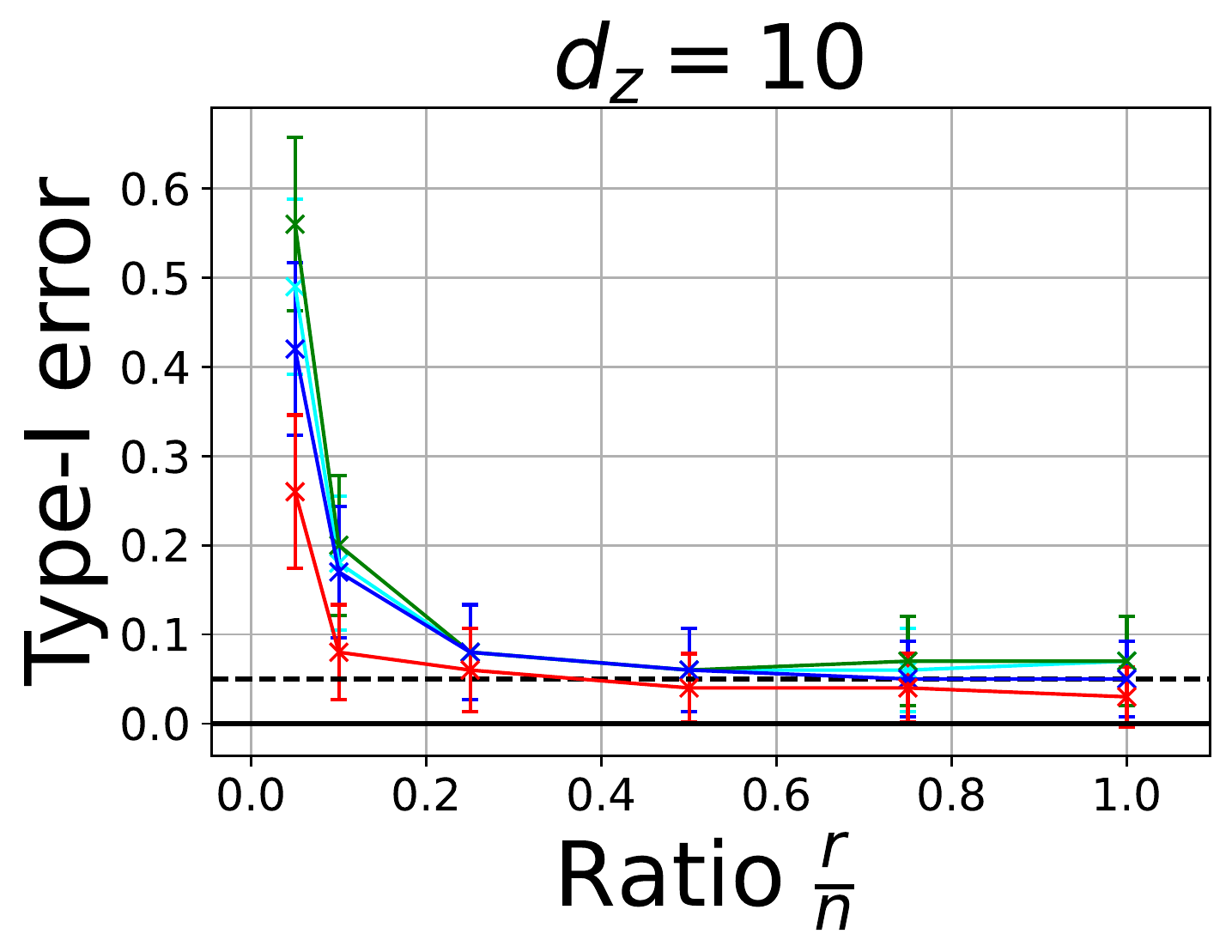}& \includegraphics[height=3.6cm]{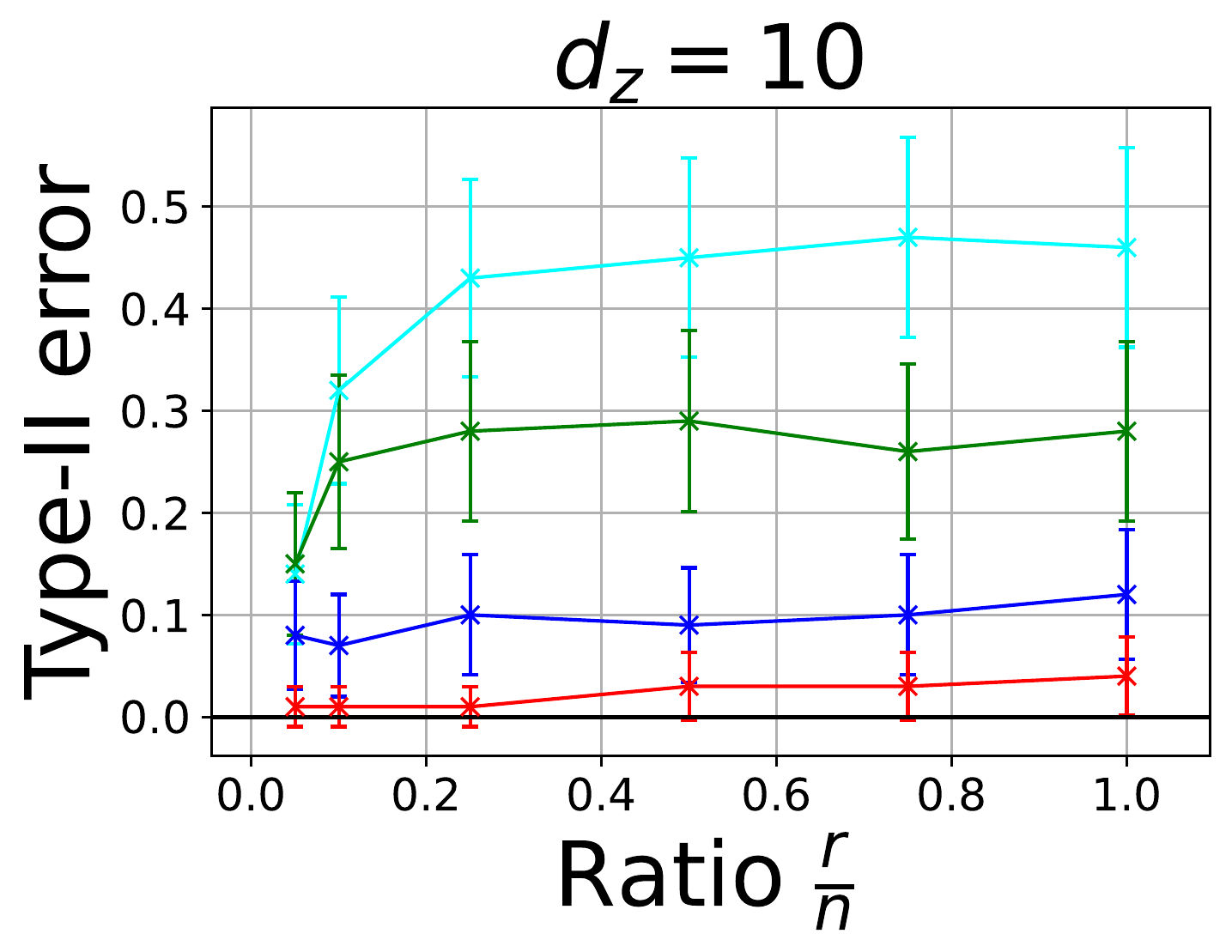}&
\includegraphics[height=3.6cm]{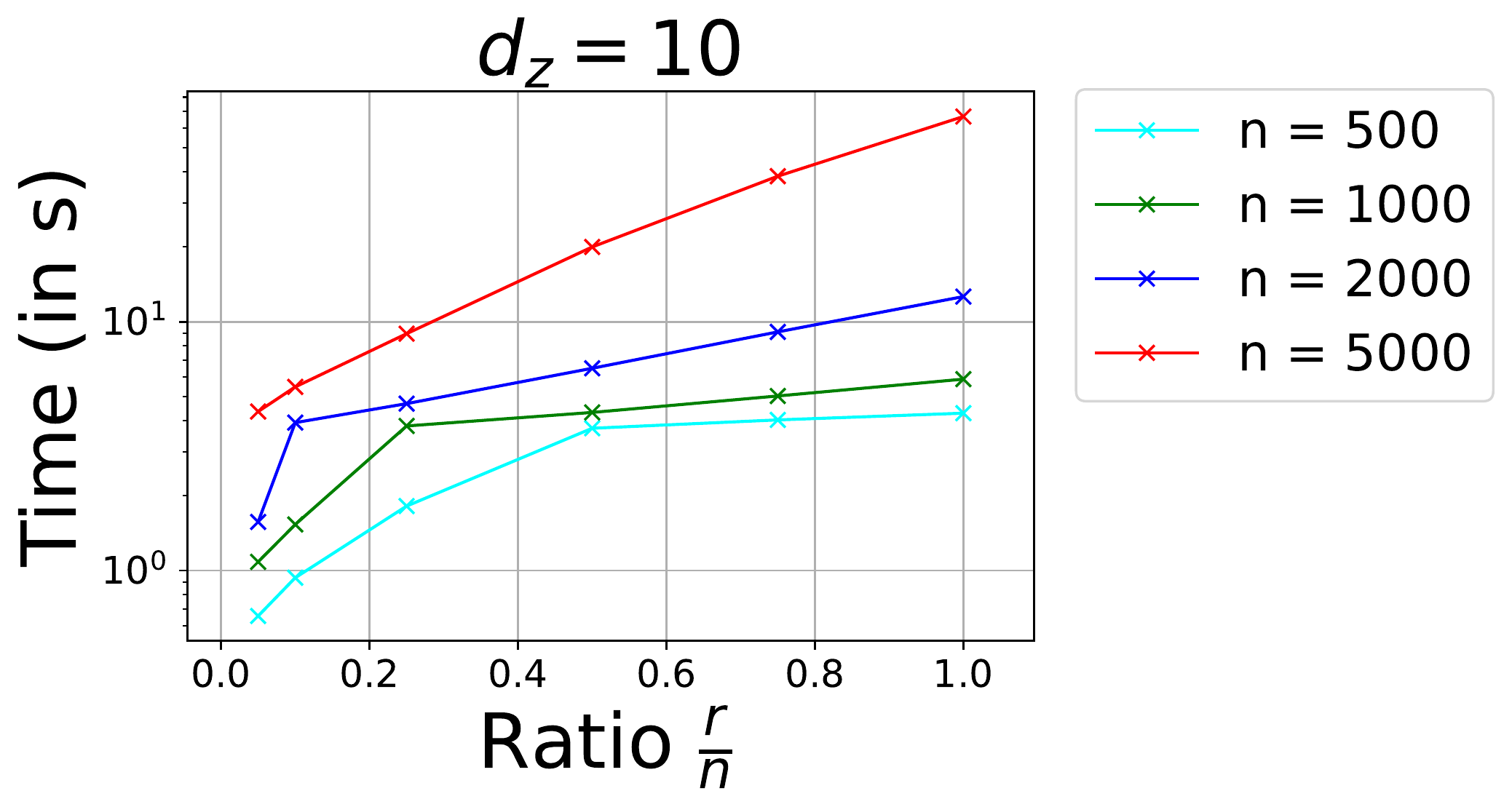}\\
\end{tabular}
\caption{Comparison of the type-I error at level $\alpha=0.05$ (dashed line) and the type-II error (lower is better) of our test procedure on the two problems presented in~\eqref{exp-strobl-h0} and~\eqref{exp-strobl-h1}  with Gaussian noises. Each point in the figures is obtained by repeating the experiment for 100 independent trials. (\emph{Left, Middle}): type-I and type-II errors obtained by each test when varying the ratio regression rank/total numbers of samples for different numbers of samples. (\emph{Right}): time in seconds (log-scale) to compute the statistic  when varying the ratio regression rank/total number of samples for different number of samples.
\label{fig-rn-dependence}}
\vspace{-0.3cm}
\end{figure*}

\textbf{Comparisons with existing tests.} In our next experiments, we compare the performance of our method (implemented with the optimized version of our statistic) with state-of-the-art techniques for conditional independence testing. We first study the two data generating functions from \eqref{exp-strobl-h0} and \eqref{exp-strobl-h1}. For each of these problems, we consider two settings. In the first, we fix the dimension $d_z$ while varying the number of samples $n$. In the second, we fix the number of samples while varying the dimension of the problem. To evaluate the performance of the tests, we compare the type-I errors at level $\alpha=0.05$ under the first model~\eqref{exp-strobl-h0}, and, for the second model~\eqref{exp-strobl-h1}, we evaluate the power of the test by presenting the type-II error. Figures~\ref{fig-exp-strobl-type} (Gaussian case) and \ref{fig-exp-strobl-laplace-supp} (Laplace case) demonstrate that our method consistently controls the type-I error and obtains a power similar to the best SoTA tests. In Figures~\ref{fig-exp-strobl-ks-supp} and \ref{fig-exp-strobl-ks-laplace-supp}, we compare the KS statistic and the AUPC of the different tests, and obtain similar conclusions. In addition, in Figure~\ref{fig-exp-strobl-type-mixture}, we consider the same setting as in Figure~\ref{fig-exp-strobl-ks-supp} where we samples noises randomly according to a non-symmetric mixture of Gaussians and obtain the same results. In Figure~\ref{fig-exp-strobl-highdim-gaussian-supp} and \ref{fig-exp-strobl-highdim-laplace-supp}, we investigate the high dimensional regime and show that our test is the only one which manages to control the type-I error while being competitive in term of power with other methods. See Appendix~\ref{sec-exp-storbl} for more details. 

\begin{figure}[h!]
\centering
\includegraphics[width=0.18\textwidth]{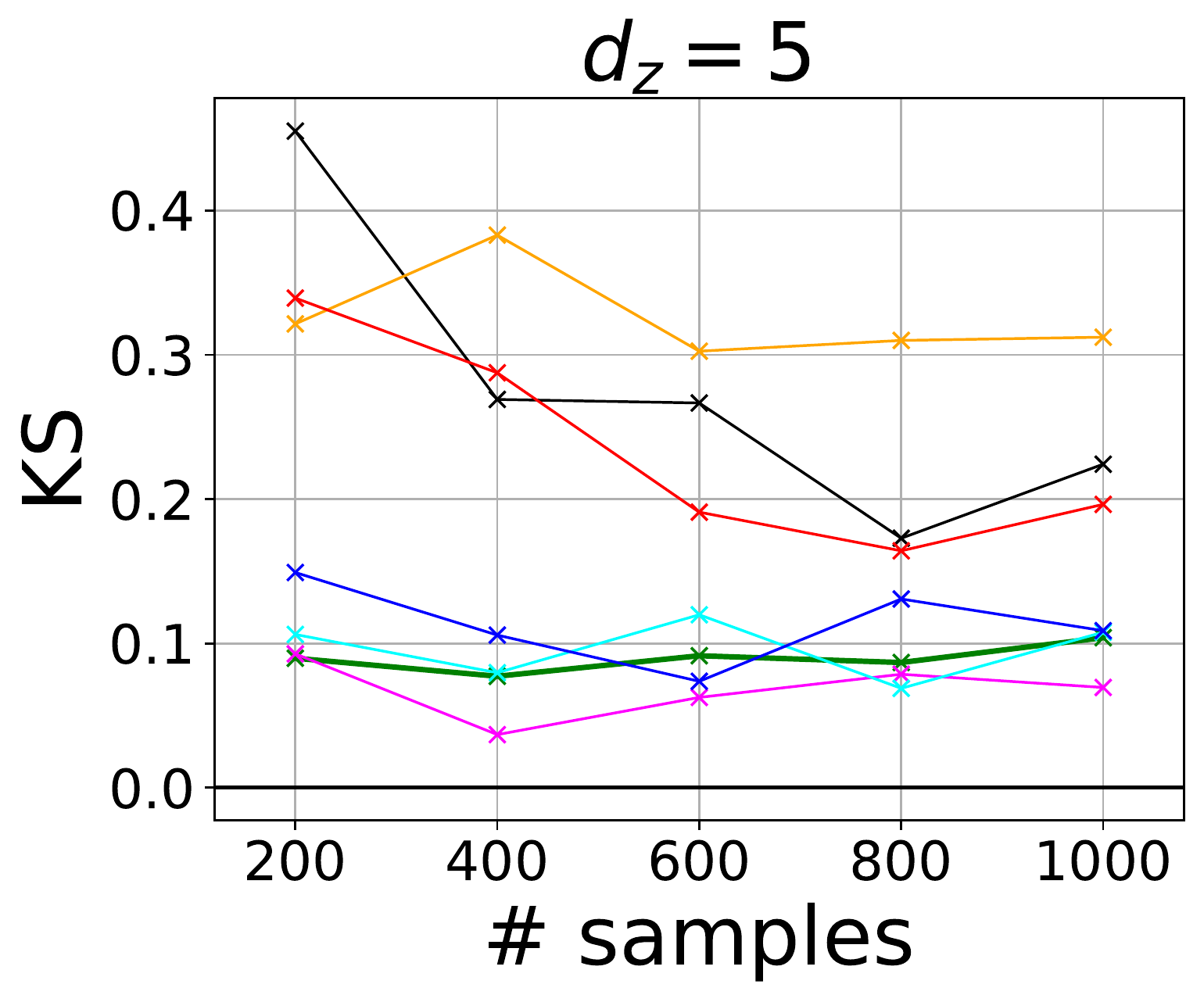}
\includegraphics[width=0.26\textwidth]{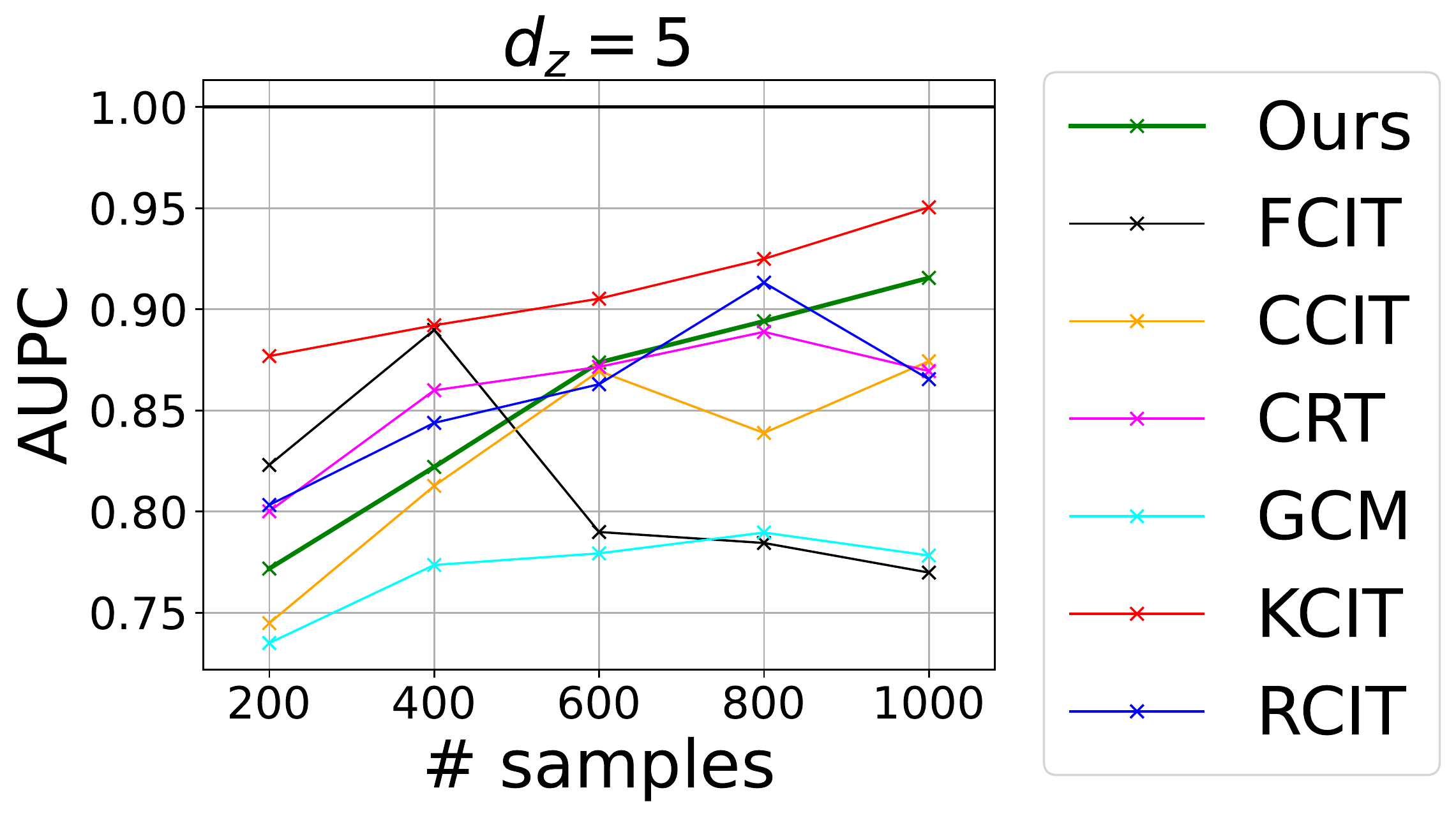}
\caption{In this experiment we compare the KS statistic and the AUPC of our test procedure with other SoTA tests on the two problems presented in~\eqref{exp-strobl-h0} and~\eqref{exp-strobl-h1} where noises are randomly sampled according to a non-symmetric mixture of Gaussians. Each point in the figures is obtained by repeating the experiment for 100 independent trials. Here we fix the dimension to be $d=5$ and we vary the number of samples $n$.
\label{fig-exp-strobl-type-mixture}}
\end{figure}

We now conduct another series of experiments that build upon the synthetic data sets presented in~\citep{zhang2012kernel,li2020nonparametric,doran2014permutation,BellotS19}.
To compare type-I error rates, we generate simulated data for which $H_0$ is true:
\begin{align}
\label{li-exp-h0}
X=f_1\left(\bar{Z}+\varepsilon_x\right),Y=f_2\left(\bar{Z}+\varepsilon_y\right).
\end{align}
Above, $\bar{Z}$ is the average of $Z=(Z_1,\cdots,Z_{d_z})$, $\varepsilon_x$ and $\varepsilon_y$ are sampled independently from a standard Gaussian or Laplace distribution, and $f_1$ and $f_2$ are smooth
functions chosen uniformly from the set $\{(\cdot), (\cdot)^2, (\cdot)^3, \tanh(\cdot), \exp(-|\cdot|)\}$. To evaluate the power, we consider the following data generating function:
\begin{align}
\begin{aligned}
\label{li-exp-h1}
 X=f_1\left( \bar{Z}+\varepsilon_x\right)+\varepsilon_b,Y=f_2\left(\bar{Z}+\varepsilon_y\right)+\varepsilon_b,
\end{aligned}
\end{align}
where $\varepsilon_b$ is a standard Gaussian or Laplace distribution.

As in the previous experiment, for each model, we study two settings by either fixing the dimension $d_z$, or the sample size $n$. In Figure~\ref{fig-exp-li-ks-laplace} (Laplace case) and~\ref{fig-exp-li-ks-gauss-supp} (Gaussian case), we compare the KS and the AUPC of our method with the SoTA tests and demonstrate that our procedure manages to be powerful while controlling the type-I error. In Figures~\ref{fig-exp-li-type-supp} and \ref{fig-exp-li-laplace-supp}, we also compare the type-I and type-II errors of the different tests, and obtain similar conclusions. In addition, we investigate the high dimensional regime and show in Figure~\ref{fig-exp-li-highdim-gauss-supp} and \ref{fig-exp-li-highdim-laplace-supp} that our test outperforms all the other proposed methods in most of the settings. See Appendix~\ref{sec-exp-li} for more details.

\paragraph{Conclusion.} We introduced a new kernel-based statistic for testing CI. We derived its asymptotic null distribution and designed a simple testing procedure that emerges from it. To our knowledge, we are the first article to propose an asymptotic test for CI with a tractable null distribution. Using various synthetic experiments, we demonstrated that our approach is competitive with other SoTA methods both in terms of type-I and type-II errors, even in the high dimensional setting.

\clearpage
\newpage
\section*{Acknowledgements}
M.S.~was supported by a "Chaire d’excellence de l’IDEX Paris Saclay". Y.R.~was supported by the Israel Science Foundation (grant 729/21). Y.R.~also thanks the Career Advancement Fellowship, Technion, for providing research support.
\bibliography{biblio}
\bibliographystyle{plainnat}
\clearpage
\appendix
    \onecolumn
\section*{Supplementary Material}
\section{Proofs}
\subsection{On the Formulation of the Witness Function}
\label{form-witness}
Let $(\mathbf{t}_j)_{j=1}^J$ sampled independently from the $\Gamma$ distribution, then by definition of $d_{p,J}(\cdot,\cdot)$, we have that
\begin{align*}
    d_{p,J}(P_{XZY},P_{\ddot{X}\otimes Y|Z}):=\left[\frac{1}{J}\sum_{j=1}^J \left|\mu_{P_{XZY},k_{\mathcal{\ddot{X}}}\cdot k_{\mathcal{Y}}}(\mathbf{t}_j)-\mu_{P_{\ddot{X}\otimes Y|Z},k_{\mathcal{\ddot{X}}}\cdot k_{\mathcal{Y}}}(\mathbf{t}_j)\right|^p\right]^{\frac{1}{p}},
\end{align*}
Moreover thanks to Assumption~\ref{assump-kernel}, we have that for any $(\mathbf{t}^{(1)},t^{(2)})\in\mathcal{\ddot{X}}\times\mathcal{Y}$ 
\begin{align*}
    \mu_{P_{\ddot{X}\otimes Y|Z},k_{\mathcal{\ddot{X}}}\cdot k_{\mathcal{Y}}}(\mathbf{t}^{(1)},t^{(2)})&=
    \mathbb{E}_{Z}\left[\mathbb{E}_{\ddot{X}}\left[k_{\mathcal{\ddot{X}}}(\mathbf{t}^{(1)},\ddot{X})|Z\right]
    \mathbb{E}_{Y}\left[k_{\mathcal{Y}}(t^{(2)},Y)|Z\right] \right]\; ,~~\text{and}\\
    \mu_{P_{XZY},k_{\mathcal{\ddot{X}}}\cdot k_{\mathcal{Y}}}(\mathbf{t}^{(1)},t^{(2)})&=
    \mathbb{E}\left[k_{\mathcal{\ddot{X}}}(\mathbf{t}^{(1)},\ddot{X})
 k_{\mathcal{Y}}(t^{(2)},Y) \right]\; .
\end{align*}

Let us now introduce the following witness function
\begin{align*}
    \Delta(\mathbf{t}^{(1)},t^{(2)}) :=\mathbb{E}\left[\left(k_{\mathcal{\ddot{X}}}(\mathbf{t}^{(1)},\ddot{X})- \mathbb{E}_{\ddot{X}}\left[k_{\mathcal{\ddot{X}}}(\mathbf{t}^{(1)},\ddot{X})|Z\right]\right)\times\left(k_{\mathcal{Y}}(t^{(2)},Y)- \mathbb{E}_{Y}\left[k_{\mathcal{Y}}(t^{(2)},Y)|Z\right]\right)\right]\;.
\end{align*}
Therefore we obtain that

\begin{align*}
    \Delta(\mathbf{t}^{(1)},t^{(2)}) &=\mathbb{E}\left[k_{\mathcal{\ddot{X}}}(\mathbf{t}^{(1)},\ddot{X})(k_{\mathcal{Y}}(t^{(2)},Y)\right]\\ 
    &- \mathbb{E}\left[k_{\mathcal{\ddot{X}}}(\mathbf{t}^{(1)},\ddot{X})\mathbb{E}_{Y}\left[k_{\mathcal{Y}}(t^{(2)},Y)|Z\right]\right]\\
    &+ \mathbb{E}\left[\mathbb{E}_{\ddot{X}}\left[k_{\mathcal{\ddot{X}}}(\mathbf{t}^{(1)},\ddot{X})|Z\right] \mathbb{E}_{Y}\left[k_{\mathcal{Y}}(t^{(2)},Y)|Z\right] \right]\\
   &- \mathbb{E}\left[\mathbb{E}_{\ddot{X}}\left[k_{\mathcal{\ddot{X}}}(\mathbf{t}^{(1)},\ddot{X})|Z\right]k_{\mathcal{Y}}(t^{(2)},Y)\right]\;.
\end{align*}
Now remark that
\begin{align*}
    \mathbb{E}\left[k_{\mathcal{\ddot{X}}}(\mathbf{t}^{(1)},\ddot{X})\mathbb{E}_{Y}\left[k_{\mathcal{Y}}(t^{(2)},Y)|Z\right]\right] &= \mathbb{E}\left[\mathbb{E}\left[k_{\mathcal{\ddot{X}}}(\mathbf{t}^{(1)},\ddot{X})\mathbb{E}_{Y}\left[k_{\mathcal{Y}}(t^{(2)},Y)|Z\right]\big|Z\right]\right]\\
    &=\mathbb{E}\left[\mathbb{E}_{Y}\left[k_{\mathcal{Y}}(t^{(2)},Y)|Z\right]  \mathbb{E}_{\ddot{X}}\left[k_{\mathcal{\ddot{X}}}(\mathbf{t}^{(1)},\ddot{X})|Z\right]\right]\;.
\end{align*}
Similarly, we have that 
\begin{align*}
\mathbb{E}\left[\mathbb{E}_{\ddot{X}}\left[k_{\mathcal{\ddot{X}}}(\mathbf{t}^{(1)},\ddot{X})|Z\right]k_{\mathcal{Y}}(t^{(2)},Y)\right] = \mathbb{E}\left[\mathbb{E}_{Y}\left[k_{\mathcal{Y}}(t^{(2)},Y)|Z\right]  \mathbb{E}_{\ddot{X}}\left[k_{\mathcal{\ddot{X}}}(\mathbf{t}^{(1)},\ddot{X})|Z\right]\right]
\end{align*}
from which follows that 
\begin{align*}
    \Delta(\mathbf{t}^{(1)},t^{(2)}) & = \mathbb{E}\left[k_{\mathcal{\ddot{X}}}(\mathbf{t}^{(1)},\ddot{X})(k_{\mathcal{Y}}(t^{(2)},Y)\right] - \mathbb{E}\left[\mathbb{E}_{Y}\left[k_{\mathcal{Y}}(t^{(2)},Y)|Z\right]  \mathbb{E}_{\ddot{X}}\left[k_{\mathcal{\ddot{X}}}(\mathbf{t}^{(1)},\ddot{X})|Z\right]\right]\\
    &=  \mu_{P_{XZY},k_{\mathcal{\ddot{X}}}\cdot k_{\mathcal{Y}}}(\mathbf{t}^{(1)},t^{(2)}) - \mu_{P_{\ddot{X}\otimes Y|Z},k_{\mathcal{\ddot{X}}}\cdot k_{\mathcal{Y}}}(\mathbf{t}^{(1)},t^{(2)})\;.
\end{align*}

\subsection{Proof of Proposition~\lowercase{\ref{prop:rls-law}}}
\label{prv:rls-law}

\begin{proof} For all $j\in[J]$:
    
\begin{align}
  \sqrt{n}\widetilde{\Delta}_{n,r}(\mathbf{t}^{(1)}_j,t^{(2)}_j)&= \sqrt{n}\frac{1}{n}\sum_{i=1}^n  \left(k_{\mathcal{\ddot{X}}}(\mathbf{t}^{(1)}_j,\ddot{x}_i)- h^{(1)}_{j,r}(z_i)\right)\left(k_{\mathcal{Y}}(t^{(2)}_j,y_i)- h^{(2)}_{j,r}(z_i)\right)\nonumber\\
  &= \sqrt{n}\Delta_{n}(\mathbf{t}^{(1)}_j,t^{(2)}_j)\label{eq:term-tcl}\\
  &+\sqrt{n} \frac1n\sum_{i=1}^n\left(k_{\mathcal{\ddot{X}}}(\mathbf{t}^{(1)}_j,\ddot{x}_i)-\mathbb{E}_{\ddot{X}}\left[k_{\mathcal{\ddot{X}}}(\mathbf{t}^{(1)}_j,\ddot{X})|Z=z_i\right]\right)\left(\mathbb{E}_{Y}\left[k_{\mathcal{Y}}(t^{(2)}_j,Y)|Z=z_i\right]- h^{(2)}_{j,r}(z_i)\right)\label{eq:term-cross1}\\
  &+\sqrt{n}\frac1n\sum_{i=1}^n\left(\mathbb{E}_{\ddot{X}}\left[k_{\mathcal{\ddot{X}}}(\mathbf{t}^{(1)}_j,\ddot{X})|Z=z_i\right]-h^{(1)}_{j,r}(z_i)\right)\left(k_{\mathcal{Y}}(t^{(2)}_j,y_i)-\mathbb{E}_{Y}\left[k_{\mathcal{Y}}(t^{(2)}_j,Y)|Z=z_i\right]\right)\label{eq:term-cross2}\\
  &+\sqrt{n}\frac1n\sum_{i=1}^n\left(\mathbb{E}_{\ddot{X}}\left[k_{\mathcal{\ddot{X}}}(\mathbf{t}^{(1)}_j,\ddot{X})|Z=z_i\right]-h^{(1)}_{j,r}(z_i)\right)\left(\mathbb{E}_{Y}\left[k_{\mathcal{Y}}(t^{(2)}_j,Y)|Z=z_i\right]-h^{(2)}_{j,r}(z_i)\right)\label{eq:term-cross3}
\end{align} 

Let us treat the four terms of this decomposition. The term~\eqref{eq:term-tcl} has been treated by Propostion~\ref{prop:oracle-law}, and satisfies, under the null hypothesis $H_0$
\begin{align*}
\sqrt{n}\Delta_{n}(\mathbf{t}_j^{(1)},t_j^{(2)})\to_{n\to\infty} \mathcal{N}\left(0,\mathbb{E}\left[\left(k_{\mathcal{\ddot{X}}}(\mathbf{t}^{(1)}_j,\ddot{X})-\mathbb{E}_{\ddot{X}}\left[k_{\mathcal{\ddot{X}}}(\mathbf{t}^{(1)}_j,\ddot{X})|Z\right]\right)\left(k_{\mathcal{Y}}(t^{(2)}_j,Y)-\mathbb{E}_{Y}\left[k_{\mathcal{Y}}(t^{(2)}_j,Y)|Z\right]\right)\right]\right)\;.
\end{align*}

Let us now show that the last term~\eqref{eq:term-cross3} converges towards $0$ in probability. Let us denote for all $j$, $e^{(1)}_{j}:z\to\mathbb{E}_{\ddot{X}}\left[k_{\mathcal{\ddot{X}}}(\mathbf{t}^{(1)}_j,\ddot{X})|Z=z\right] $ and $e^{(2)}_{j}:z\to\mathbb{E}_{Y}\left[k_{\mathcal{\ddot{X}}}(t^{(2)}_j,Y)|Z=z\right]$, both elements of $ H_{\mathcal{Z}}$ by Assumption~\ref{ass:source}. Then we have, for all $i\in[n]$:

\begin{align*}
   \left(e^{(1)}_{j}(z_i)-h^{(1)}_{j,r}(z_i)\right)\left(e^{(2)}_{j}(z_i)-h^{(2)}_{j,r}(z_i)\right)=\langle\left(e^{(1)}_{j}-h^{(1)}_{j,r}\right)\otimes\left(e^{(2)}_{j}-h^{(2)}_{j,r}\right), k_{\mathcal{Z}}(z_i,\cdot)\otimes k_{\mathcal{Z}}(z_i,\cdot)\rangle.
\end{align*}
Then we deduce, by denoting: $\mu_{ZZ} := \mathbb{E}\left[ k_{\mathcal{Z}}(Z,\cdot)k_{\mathcal{Z}}(Z,\cdot)\right]$ and $\hat{\mu}_{ZZ}:=\frac1n\sum_{i=1}^n k_{\mathcal{Z}}(z_i,\cdot)k_{\mathcal{Z}}(z_i,\cdot)$, that
\begin{align*}
    \frac1n\sum_{i=1}^n\left(\mathbb{E}_{\ddot{X}}\left[k_{\mathcal{\ddot{X}}}(\mathbf{t}^{(1)}_j,\ddot{X})|Z=z_i\right]-h^{(1)}_{j,r}(z_i)\right)\left(\mathbb{E}_{Y}\left[k_{\mathcal{Y}}(t^{(2)}_j,Y)|Z=z_i\right]-h^{(2)}_{j,r}(z_i)\right)\\
    = \langle\left(e^{(1)}_{j}-h^{(1)}_{j,r}\right)\otimes\left(e^{(2)}_{j}-h^{(2)}_{j,r}\right),\frac{1}{n}\sum_{i=1}^n k_{\mathcal{Z}}(z_i,\cdot)\otimes k_{\mathcal{Z}}(z_i,\cdot)\rangle\\
    =\langle\left(e^{(1)}_{j}-h^{(1)}_{j,r}\right)\otimes\left(e^{(2)}_{j}-h^{(2)}_{j,r}\right),\mu_{ZZ}\rangle+\langle\left(e^{(1)}_{j}-h^{(1)}_{j,r}\right)\otimes\left(e^{(2)}_{j}-h^{(2)}_{j,r}\right),\hat{\mu}_{ZZ}-\mu_{ZZ}\rangle\;.
\end{align*}
Then remark that:
\begin{align*}
    \lvert\langle\left(e^{(1)}_{j}-h^{(1)}_{j,r}\right)\otimes\left(e^{(2)}_{j}-h^{(2)}_{j,r}\right),\mu_{ZZ}\rangle\rvert&=\lvert\mathbb{E}_Z\left[\left(e^{(1)}_{j}(Z)-h^{(1)}_{j,r}(Z)\right)\left(e^{(2)}_{j}(Z)-h^{(2)}_{j,r}(Z)\right)\right]\rvert\\
    &\leq \lVert e^{(1)}_{j}-h^{(1)}_{j,r}\rVert_{L^2(P_Z)} \lVert  e^{(2)}_{j}-h^{(2)}_{j,r}\rVert_{L^2(P_Z)} \;.
\end{align*}
Under the Assumptions~\ref{ass:spectrum}-\ref{ass:source}, for $\lambda_{r} = \frac{1}{r^{\beta+\gamma}}$, we have, using the results from~\citep[Theorem 1]{fischer2020sobolev}: $\lVert  e^{(1)}_{j}-h^{(1)}_{j,r}\rVert_{L^2(P_Z)}^2\leq \frac{C\tau^2}{r^{\frac{\beta}{\beta+\gamma}}}$ with probability $1-4e^{-\tau}$ and  $ \lVert e^{(2)}_{j}-h^{(2)}_{j,r}\rVert_{L^2(P_Z)}^2 \leq \frac{C\tau^2}{r^{\frac{\beta}{\beta+\gamma}}}$ with probability $1-4e^{-\tau}$, for some constant $C$ independent from $n$ and $\tau$. then by union bound, we deduce with probability $1-8e^{-\tau}$ we have:
\begin{align*}
    \sqrt{n}\lvert\langle\left(e^{(1)}_{j}-h^{(1)}_{j,r}\right)\otimes\left(e^{(2)}_{j}-h^{(2)}_{j,r}\right),\mu_{ZZ}\rangle\rvert\leq \sqrt{n}\frac{C^2\tau^4}{r^{\frac{\beta}{\beta+\gamma}}}\;.
\end{align*}

Then, if $\sqrt{n}\in o(r^{\frac{\beta}{\beta+\gamma}})$, we have:
$ \sqrt{n}\lvert\langle\left(e^{(1)}_{j}-h^{(1)}_{j,r}\right)\otimes\left(e^{(2)}_{j}-h^{(2)}_{j,r}\right),\mu_{ZZ}\rangle\rvert\to 0$ in probability when $n\to\infty$. Moreover:
\begin{align*}
    \lvert \left(e^{(1)}_{j}-h^{(1)}_{j,r}\right)\otimes\left(e^{(2)}_{j}-h^{(2)}_{j,r}\right),\hat{\mu}_{ZZ}-\mu_{ZZ}\rangle\rvert\leq \lVert e^{(1)}_{j}-h^{(1)}_{j,r}\rVert_{ H_\mathcal{Z}} \lVert e^{(2)}_{j}-h^{(2)}_{j,r}\rVert_{ H_\mathcal{Z}}\lVert\hat{\mu}_{ZZ}-\mu_{ZZ}\rVert_{ H_\mathcal{Z}\otimes H_\mathcal{Z}}\; ,
\end{align*}
and by Markov inequality, $\lVert\hat{\mu}_{ZZ}-\mu_{ZZ}\rVert_{H_\mathcal{Z}\otimes H_\mathcal{Z}} \leq \sqrt{\frac{C'}{n\delta}}$ with probability $1-\delta$ for some constant $C'$. Moreover, under Assumption~\ref{ass:spectrum}-\ref{ass:source}, we have $\lVert e^{(1)}_{j}-h^{(1)}_{j,r}\rVert_{ H_\mathcal{Z}}\to 0$ and $\lVert e^{(2)}_{j}-h^{(2)}_{j,r}\rVert_{ H_\mathcal{Z}}\to 0$ in probability. Then, we deduce that $\sqrt{n}\lvert \langle\left(e^{(1)}_{j}-h^{(1)}_{j,r}\right)\otimes\left(e^{(2)}_{j}-h^{(2)}_{j,r}\right),\hat{\mu}_{ZZ}-\mu_{ZZ}\rangle\rvert\to 0$ in probability. Finally, the term~\eqref{eq:term-cross3} goes to $0$ in probability.

The terms~\eqref{eq:term-cross1} and~\eqref{eq:term-cross2} are similar and can be treated the same way. We only focus on the term~\eqref{eq:term-cross1}. For all $i\in[n]$:
\begin{align*}
\lvert\frac1n\sum_{i=1}^n\left(k_{\mathcal{\ddot{X}}}(\mathbf{t}^{(1)}_j,\ddot{x}_i)-\mathbb{E}_{\ddot{X}}\left[k_{\mathcal{\ddot{X}}}(\mathbf{t}^{(1)}_j,\ddot{X})|Z=z_i\right]\right)\left(\mathbb{E}_{Y}\left[k_{\mathcal{Y}}(t^{(2)}_j,Y)|Z=z_i\right]- h^{(2)}_{j,r}(z_i)\right)\rvert\\
  =\lvert\frac1n\sum_{i=1}^n\langle k_{\mathcal{\ddot{X}}}(t^{(1)}_j,\cdot),k_{\mathcal{\ddot{X}}}(\ddot{x}_i,\cdot)-\mathbb{E}_{\ddot{X}}\left[k_{\mathcal{\ddot{X}}}(\ddot{X},\cdot)|Z=z_i\right]\rangle_{ H_\mathcal{\ddot{X}}}
  \langle e^{(2)}_{j}-h^{(2)}_{j,r},k_{\mathcal{Z}}(z_i,\cdot)\rangle_{ H_\mathcal{Z}}\rvert\\
  =\lvert\frac1n\sum_{i=1}^n\langle k_{\mathcal{\ddot{X}}}(t^{(1)},\cdot)\otimes \left(e^{(2)}_{j}-h^{(2)}_{j,r}\right),\left(k_{\mathcal{\ddot{X}}}(\ddot{x}_i,\cdot)-\mathbb{E}_{\ddot{X}}\left[k_{\mathcal{\ddot{X}}}(\ddot{X},\cdot)|Z=z_i\right]\right)\otimes k_{\mathcal{Z}}(z_i,\cdot)\rangle_{ H_\mathcal{\ddot{X}}\otimes H_\mathcal{Z}}\rvert\\
  =\lvert\langle k_{\mathcal{\ddot{X}}}(t^{(1)},\cdot)\otimes \left(e^{(2)}_{j}-h^{(2)}_{j,r}\right),\frac1n\sum_{i=1}^n\left(k_{\mathcal{\ddot{X}}}(\ddot{x}_i,\cdot)-\mathbb{E}_{\ddot{X}}\left[k_{\mathcal{\ddot{X}}}(\ddot{X},\cdot)|Z=z_i\right]\right)\otimes k_{\mathcal{Z}}(z_i,\cdot)\rangle_{ H_\mathcal{\ddot{X}}\otimes H_\mathcal{Z}}\rvert\\
  \leq \lVert k_{\mathcal{\ddot{X}}}(t^{(1)},\cdot)\rVert_{ H_\mathcal{\ddot{X}}}\lVert e^{(2)}_{j}-h^{(2)}_{j,r}\rVert_{H_\mathcal{Z}}\left(\lVert \hat{\mu}^1_{\ddot{X}Z}-\mu_{\ddot{X}Z}\rVert_{ H_\mathcal{\ddot{X}}\otimes H_\mathcal{Z}}+ \lVert\hat{\mu}^2_{\ddot{X}}-\mu_{\ddot{X}Z}\rVert_{ H_\mathcal{\ddot{X}}\otimes H_\mathcal{Z}}\right)
\end{align*}
where: $\hat{\mu}^1_{\ddot{X}Z} := \frac1n\sum_{i=1}^nk_{\mathcal{\ddot{X}}}(\ddot{x}_i,\cdot)\otimes k_{\mathcal{Z}}(z_i,\cdot)$, $\hat{\mu}^2_{\ddot{X}Z}:=\frac1n\sum_{i=1}^n\mathbb{E}_{\ddot{X}}\left[k_{\mathcal{\ddot{X}}}(\ddot{X},\cdot)|Z=z_i\right]\otimes k_{\mathcal{Z}}(z_i,\cdot)$, and $\mu_{\ddot{X}Z}:=\mathbb{E}\left[k_{\mathcal{Y}}(y,\cdot)k_{\mathcal{Z}}(z,\cdot)\right]$.

By the law of large numbers, we have: $\hat{\mu}^1_{\ddot{X}Z}$ and $\hat{\mu}^2_{\ddot{X}Z}$ converge almost surely towards $\mu_{\ddot{X}Z}$. Moreover by Markov inequality, $\lVert\hat{\mu}^1_{\ddot{X}Z}-\mu_{\ddot{X}Z}\rVert_{ H_\mathcal{\ddot{X}}\otimes H_\mathcal{Z}} \leq \sqrt{\frac{C}{n\delta}}$ with probability $1-\delta$, and  $\lVert\hat{\mu}^2_{\ddot{X}Z}-\mu_{\ddot{X}Z}\rVert_{ H_\mathcal{\ddot{X}}\otimes H_\mathcal{Z}} \leq \sqrt{\frac{C}{n\delta}}$ with probability $1-\delta$. Then with probability $1-2\delta$, $\sqrt{n}\left(\lVert\hat{\mu}^1_{\ddot{X}Z}-\mu_{\ddot{X}Z}\rVert_{ H_\mathcal{\ddot{X}}\otimes H_\mathcal{Z}}+\lVert\hat{\mu}^2_{\ddot{X}Z}-\mu_{\ddot{X}Z}\rVert_{ H_\mathcal{\ddot{X}}\otimes H_\mathcal{Z}} \right)\leq2\sqrt{\frac{C}{\delta}}$. Moreover, under Assumption~\ref{ass:spectrum}-\ref{ass:source}, using the results from~\cite{fischer2020sobolev}, we have that $\lVert e^{(2)}_{j}-h^{(2)}_{j,r}\rVert_{H_\mathcal{Z}}$ converges towards $0$ in probability. Then the term~\eqref{eq:term-cross1}
converges in probability towards $0$. The same reasoning holds for~\eqref{eq:term-cross2}. 

Finally, by Slutsky's Lemma:
\begin{align*}
    \sqrt{n}\widetilde{\Delta}_{n,r}(\mathbf{t}^{(1)}_j,t^{(2)}_j)\to_{n\to\infty} \mathcal{N}\left(0,\mathbb{E}\left[\left(k_{\mathcal{\ddot{X}}}(\mathbf{t}^{(1)}_j,\ddot{X})-\mathbb{E}_{\ddot{X}}\left[k_{\mathcal{\ddot{X}}}(\mathbf{t}^{(1)}_j,\ddot{X})|Z\right]\right)\left(k_{\mathcal{Y}}(t^{(2)}_j,Y)-\mathbb{E}_{Y}\left[k_{\mathcal{Y}}(t^{(2)}_j,Y)|Z\right]\right)\right]\right).
\end{align*}
Now we have $\widetilde{\mathbf{S}}_{n,r}=\left(\widetilde{\Delta}_{n,r}(\mathbf{t}^{(1)}_j,t^{(2)}_j)\right)_{j\in[J]}=\left(\Delta_{n}(\mathbf{t}^{(1)}_j,t^{(2)}_j)\right)_{j\in[J]}+ \left(\widetilde{\Delta}_{n,r}(\mathbf{t}^{(1)}_j,t^{(2)}_j)-\Delta_{n}(\mathbf{t}^{(1)}_j,t^{(2)}_j)\right)_{j\in[J]}$ and  we have shown that $\sqrt{n}\left(\widetilde{\Delta}_{n,r_n}(\mathbf{t}^{(1)}_j,t^{(2)}_j)-\Delta_{n}(\mathbf{t}^{(1)}_j,t^{(2)}_j)\right)_{j\in[J]}$ goes to $0$ in probability. Then by Slutsky's Lemma and Proposition~\ref{prop:oracle-law}, we get: $\widetilde{\mathbf{S}}_{n,r_n}\to\mathcal{N}\left(0,\bm{\Sigma}\right)$.

Let $r>0$. Under $H_1$, $\mathbf{\mathbf{S}}_{n,r_n}\to \mathbf{S}\neq 0$. Let us consider a realization of $(\mathbf{t}^{(1)}_j,t_j^{(2)})_{j\in[J]}$ such that $\lVert \mathbf{S}\rVert_p\neq 0$. So $P(n^{p/2}\lVert \mathbf{S}_{n,r_n}\rVert_p\geq r)\to 1$ as $n\to\infty$ because $\lVert \mathbf{S}\rVert_p\neq 0$.
\end{proof}

\subsection{Proof of Proposition~\lowercase{\ref{prop:norm-law}}}
\label{prv:norm-law}
\begin{proof}
First notice that:
\begin{align*}
  \bm{\widetilde{\Sigma}}_{n,r}&:=\frac{1}{n}\sum_{i=1}^n \widetilde{\mathbf{u}}_{i,r}\widetilde{\mathbf{u}}_{i,r}^T+\delta_n \text{Id}_J  \\
  &=  \bm{\widehat{\Sigma}}_{n}+\frac1n\sum_{i=1}^n \widehat{\mathbf{u}}_{i}\left(\widetilde{\mathbf{u}}_{i,r}-\widehat{\mathbf{u}}_{r}\right)^T+\frac1n\sum_{i=1}^n \left(\widetilde{\mathbf{u}}_{i,r}-\widehat{\mathbf{u}}_{r}\right)\widehat{\mathbf{u}}_{i}^T+\frac1n\sum_{i=1}^n\left(\widetilde{\mathbf{u}}_{i,r}-\widehat{\mathbf{u}}_{r}\right) \left(\widetilde{\mathbf{u}}_{i,r}-\widehat{\mathbf{u}}_{r}\right)^T+\delta_n \text{Id}_J\;.
\end{align*}

By the law of large numbers, we get that under $H_0$: $\bm{\widehat{\Sigma}}_{n}\to\bm{\Sigma}$. Moreover:
\begin{align*}
    \left[\frac1n\sum_{i=1}^n \widehat{\mathbf{u}}_{i}\left(\widetilde{\mathbf{u}}_{i,r}-\widehat{\mathbf{u}}_{r}\right)^T\right]_{kl} = \frac{1}{n}\sum_{i=1}^n\left(k_{\mathcal{Y}}(t_k^{(2)},y_i)-\mathbb{E}_{Y}\left[k_{\mathcal{Y}}(t_k^{(2)},Y)|Z=z_i\right]\right)\left(\mathbb{E}_{\ddot{X}}\left[k_{\mathcal{\ddot{X}}}(\mathbf{t}_l^{(1)},\ddot{X})|Z=z_i\right]-h^{(1)}_{l,r}(z_i)\right)
\end{align*}
which has been proven to converge in probability to $0$ in the proof of Proposition~\ref{prop:rls-law}. Then $\frac1n\sum_{i=1}^n \widehat{\mathbf{u}}_{i}\left(\widetilde{\mathbf{u}}_{i,r}-\widehat{\mathbf{u}}_{r}\right)^T$ converges in probability to $0$. Similarly $\frac1n\sum_{i=1}^n \left(\widetilde{\mathbf{u}}_{i,r}-\widehat{\mathbf{u}}_{r}\right)\widehat{\mathbf{u}}_{i}^T$ and  $\frac1n\sum_{i=1}^n\left(\widetilde{\mathbf{u}}_{i,r}-\widehat{\mathbf{u}}_{r}\right) \left(\widetilde{\mathbf{u}}_{i,r}-\widehat{\mathbf{u}}_{r}\right)^T$ also converge in probability to $0$. Then by Slutsky's Lemma, $ \bm{\widetilde{\Sigma}}_{n,r}$ converges in probability to $\bm{\Sigma}$. By Slutsky's Lemma (again) and by Propostion~\ref{prop:rls-law}, we have that:
$\bm{\widetilde{\Sigma}}_{n,r}^{-1}\bm{\widetilde{S}}_{n,r}$ converges to a standard gaussian distribution $\mathcal{N}(0,\text{Id})$. The second part of the proposition is the same as the proof of Proposition~\ref{prop:rls-law}.
\end{proof}
\section{On the computation of Oracle statistic in Figure~\ref{fig-illustation-theory}}
\label{sec-theoritical-findings}

To compute the oracle statistic we needed to compute exactly the conditional expectation implied in our statistic. In the case of gaussian kernels and gaussian distributed data for $Z$, the computation of this conditional expectation is reduced to the computation of moment-generating function of a non-centered $\chi^2$ distribution.

\label{sec-rank-rn}

\newpage
\subsection{Additional experiments on Problems~\eqref{exp-strobl-h0} and~\eqref{exp-strobl-h1}}
\label{sec-exp-storbl}

\subsubsection{Gaussian Case}

\begin{figure*}[h]
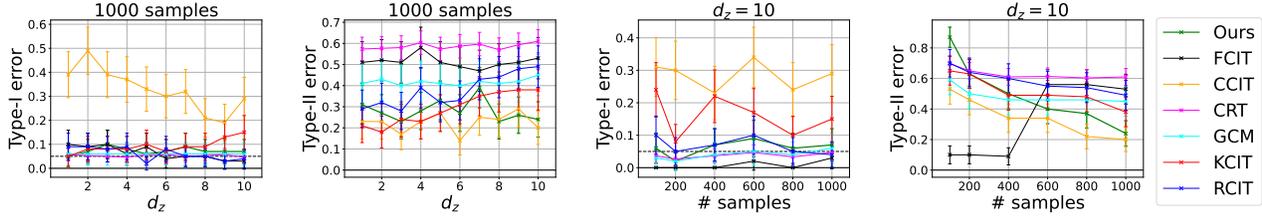

\begin{tabular}{cccc} 
\includegraphics[height=2.9cm]{figures_strobl_gaussian/nsamples_fixed_1000_strobl_dim_1_10_typeI.pdf}& \includegraphics[height=2.9cm]{figures_strobl_gaussian/nsamples_fixed_1000_strobl_dim_1_10_typeII.pdf} & 
\includegraphics[height=2.9cm]{figures_strobl_gaussian/dim_fixed_10_strobl_typeI.pdf}& \includegraphics[height=2.9cm]{figures_strobl_gaussian/dim_fixed_10_strobl_typeII.pdf}
\end{tabular}
\caption{Comparison of the type-I error at level $\alpha=0.05$ (dashed line) and the type-II error (lower is better) of our test procedure with other SoTA tests on the two problems presented in~\eqref{exp-strobl-h0} and~\eqref{exp-strobl-h1}  with Gaussian noises. Each point in the figures is obtained by repeating the experiment for 100 independent trials. (\emph{Left, middle-left}): type-I and type-II errors obtained by each test when varying the dimension $d_z$ from 1 to 10; here, the number of samples $n$ is fixed and equals to $1000$. (\emph{Middle-right, right}): type-I and type-II errors obtained by each test when varying the number of samples $n$ from 100 to 1000; here, the dimension $d_z$ is fixed and equals to $10$. 
\label{fig-exp-strobl-type-supp}}
\vspace{-0.3cm}
\end{figure*}

\begin{figure*}[ht]
\begin{tabular}{cccc} 
\includegraphics[height=2.9cm]{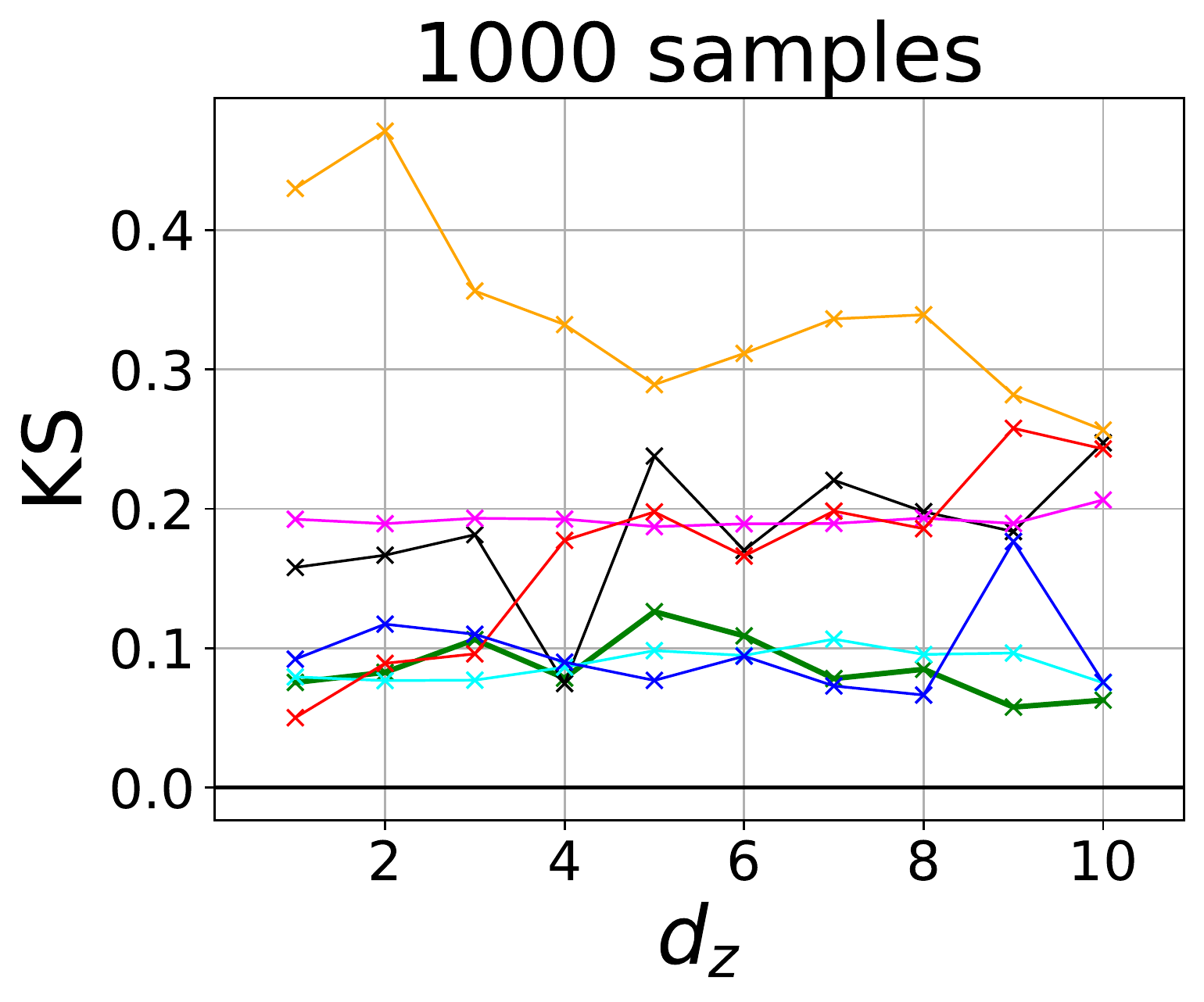}& \includegraphics[height=2.9cm]{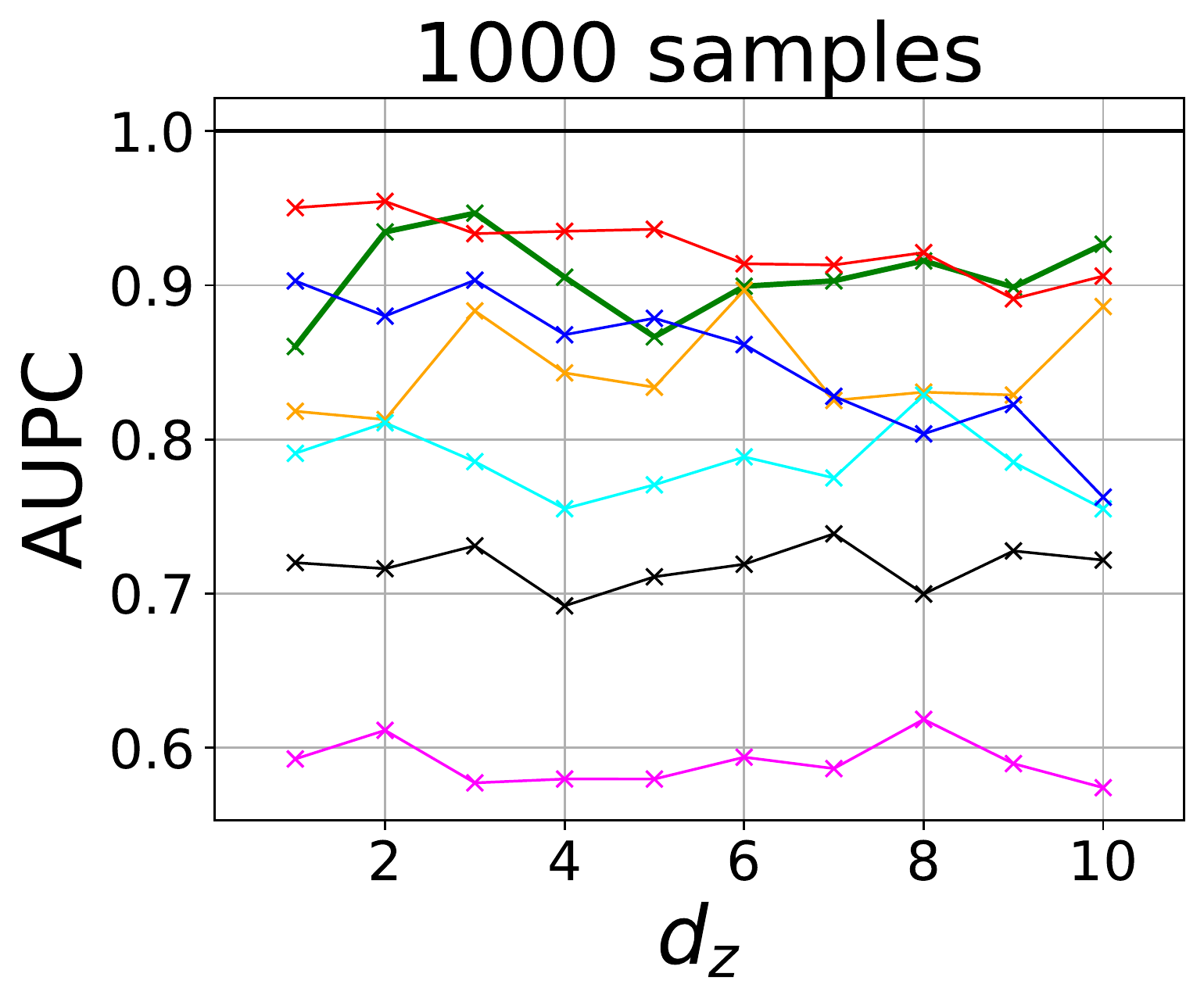} & 
\includegraphics[height=2.9cm]{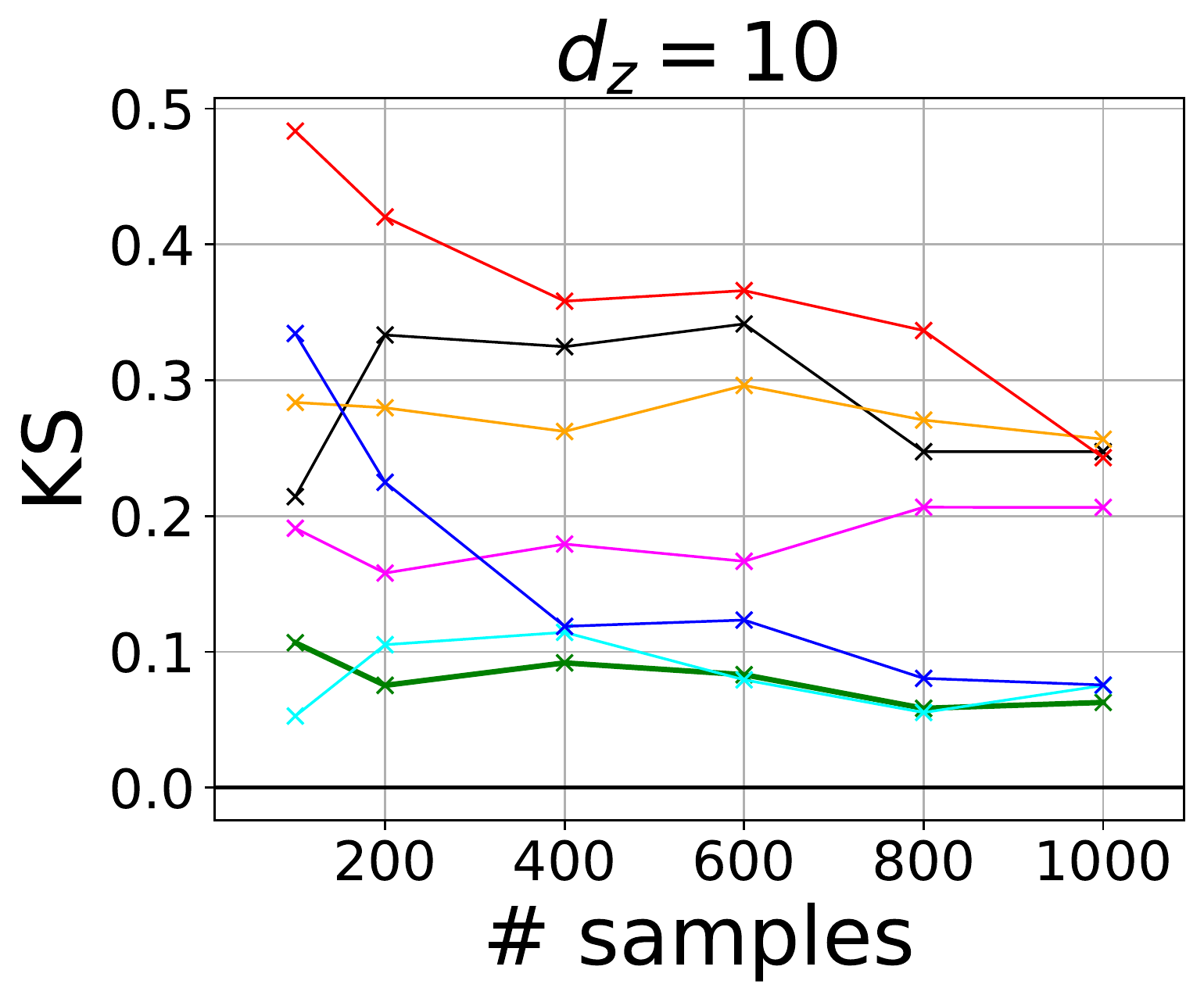}& \includegraphics[height=2.9cm]{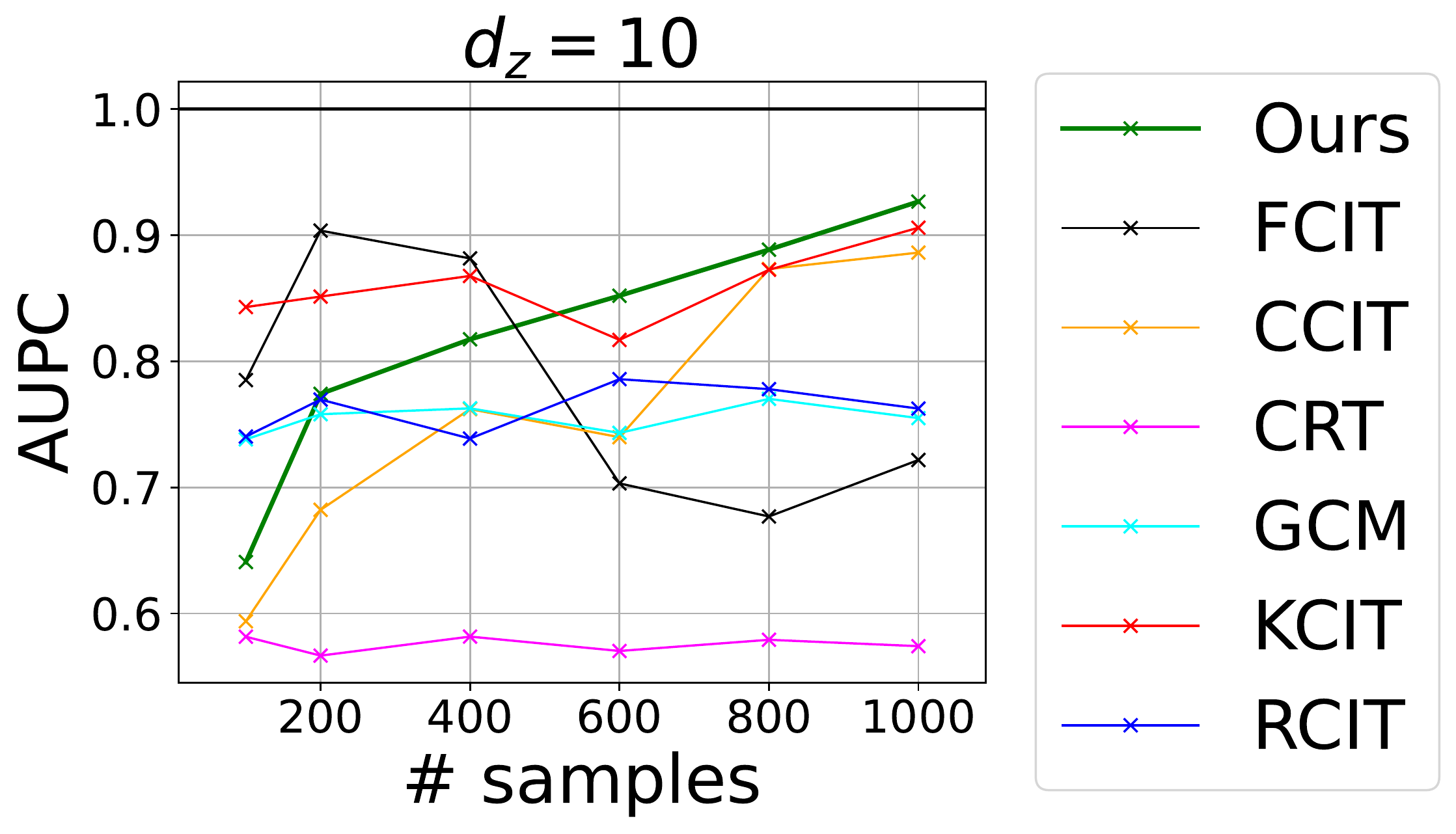} 
\end{tabular}
\caption{Comparison of the KS statistic (lower is better) and the AUPC (higher is better) of our testing procedure with other SoTA tests on the two problems presented in~\eqref{exp-strobl-h0} and~\eqref{exp-strobl-h1}  with Gaussian noises. Each point in the figures is obtained by repeating the experiment for 100 independent trials. (\emph{Left, middle-left}): the KS and AUPC obtained by each test when varying the dimension $d_z$ from 1 to 10, while fixing the number of samples $n$ to $1000$. (\emph{Middle-right, right}): the KS and AUPC obtained by each test when varying the number of samples $n$ from 100 to 1000, while fixing the dimension $d_z$ to $10$. 
\label{fig-exp-strobl-ks-supp}}
\vspace{-0.3cm}
\end{figure*}

\begin{figure*}[h]
\begin{tabular}{cccc} 
\includegraphics[height=2.9cm]{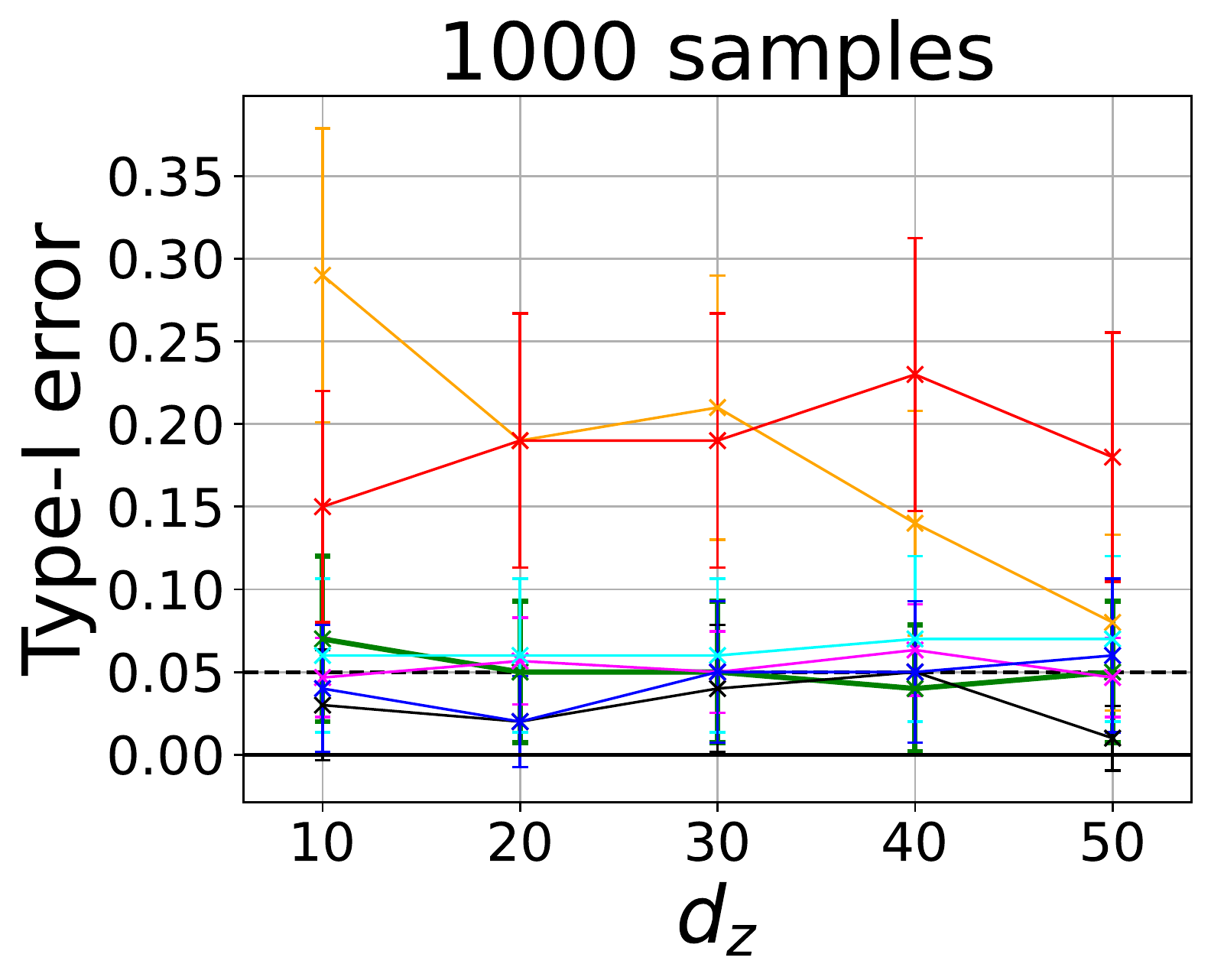}& \includegraphics[height=2.9cm]{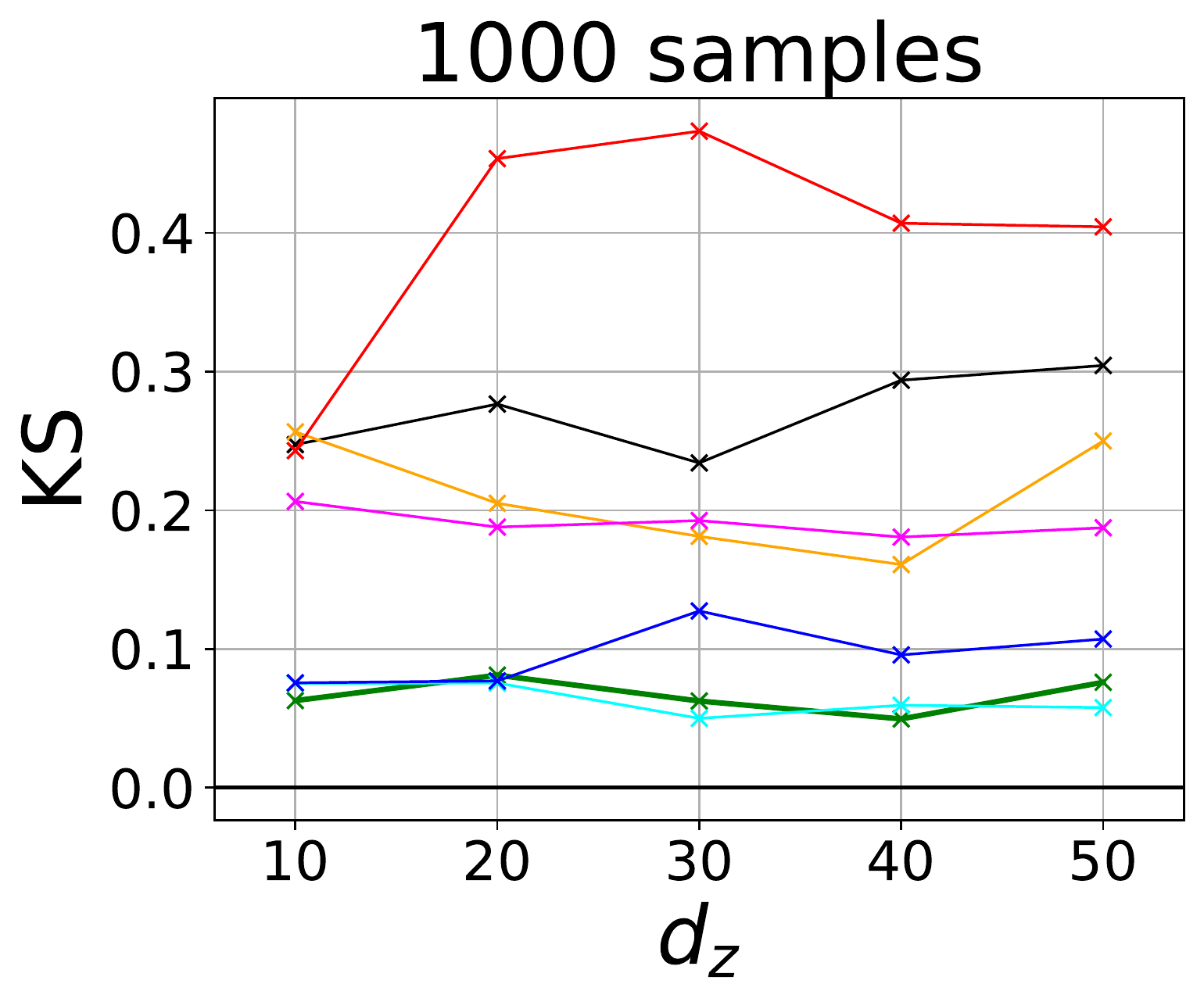} & 
\includegraphics[height=2.9cm]{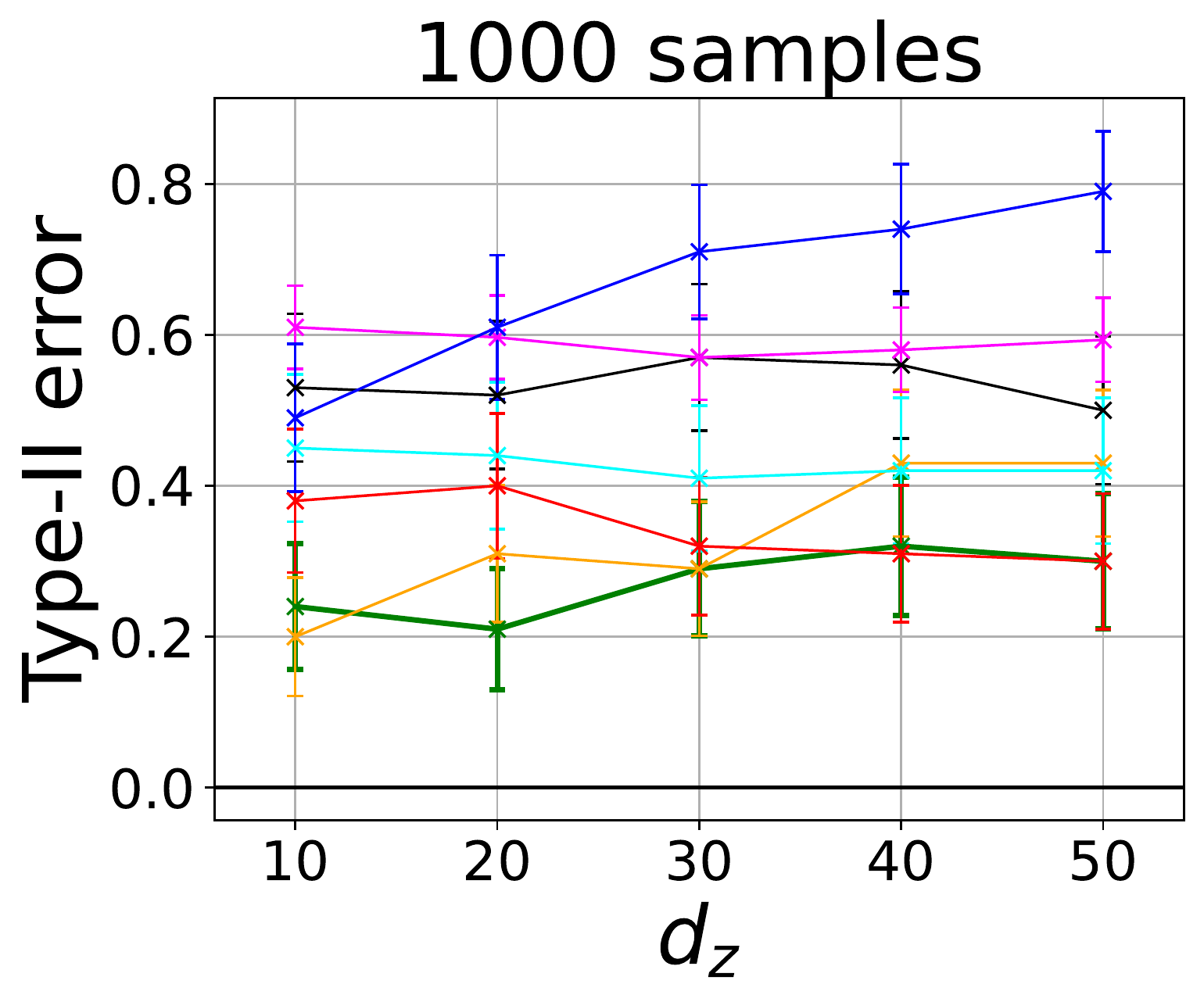}& \includegraphics[height=2.9cm]{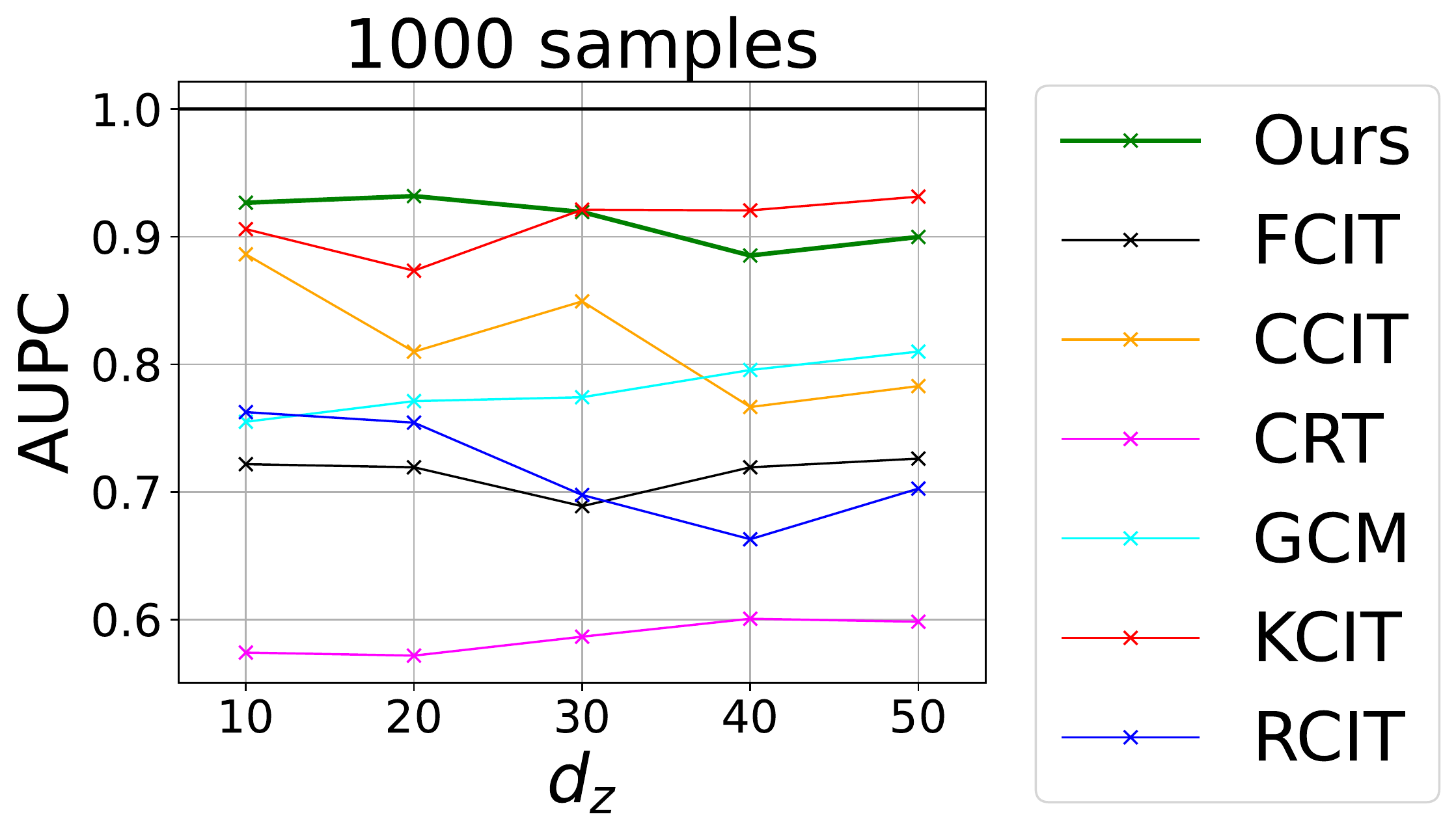}
\end{tabular}
\caption{Comparison of the type-I error at level $\alpha=0.05$ (dashed line), type-II error (lower is better), KS statistic and the AUPC of our testing procedure with other SoTA tests on the two problems presented in Eq.~\eqref{exp-strobl-h0} and Eq.~\eqref{exp-strobl-h1} with Gaussian noises. Each point in the figures is obtained by repeating the experiment for 100 independent trials. In each plot the dimension $d_z$ is varying from 10 to 50; here, the number of samples $n$ is fixed and equals to $1000$. 
\label{fig-exp-strobl-highdim-gaussian-supp}}
\vspace{-0.3cm}
\end{figure*}

\newpage

\subsubsection{Laplace Case}

\begin{figure*}[htb]
\begin{tabular}{cccc} 
\includegraphics[height=2.9cm]{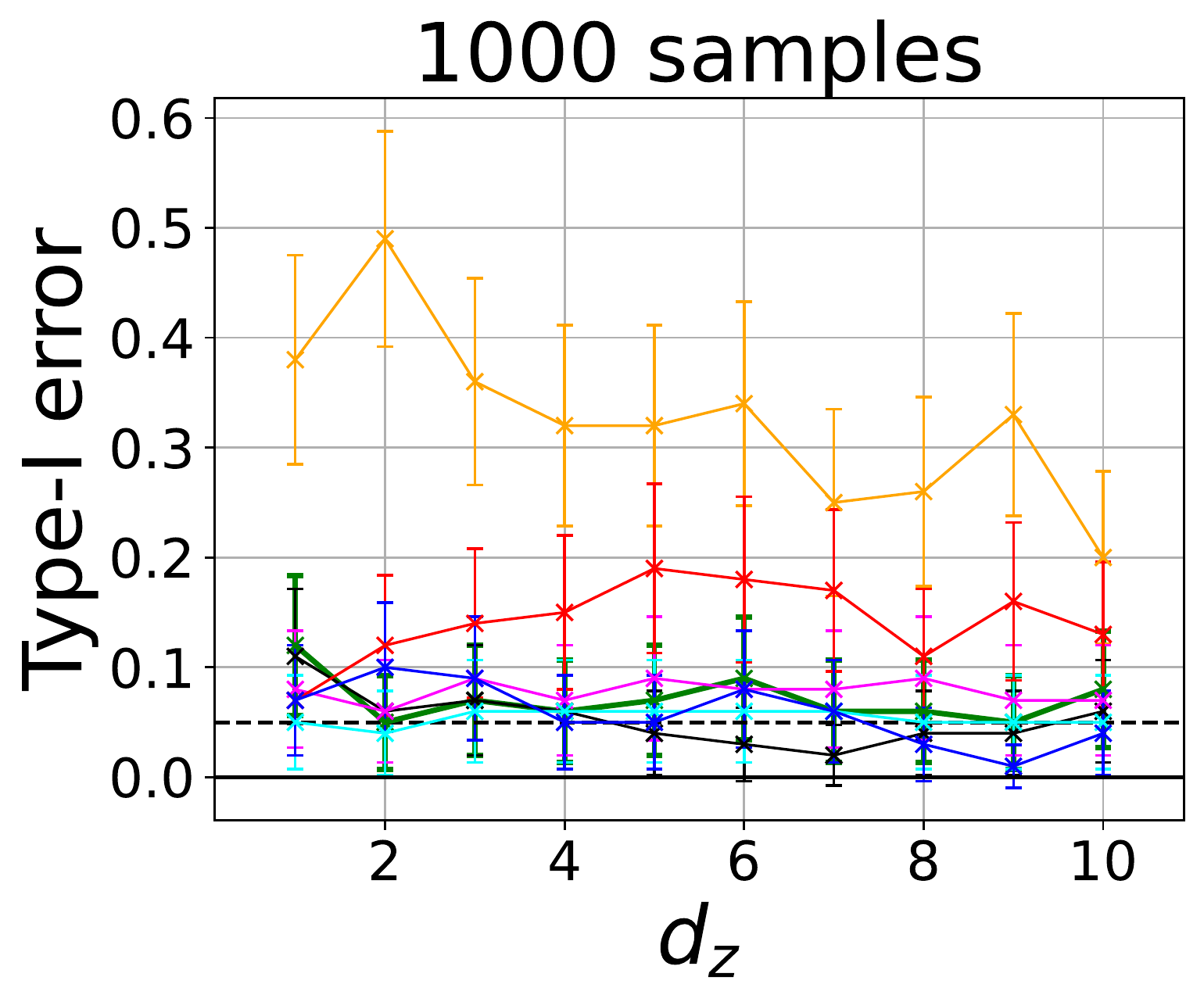}& \includegraphics[height=2.9cm]{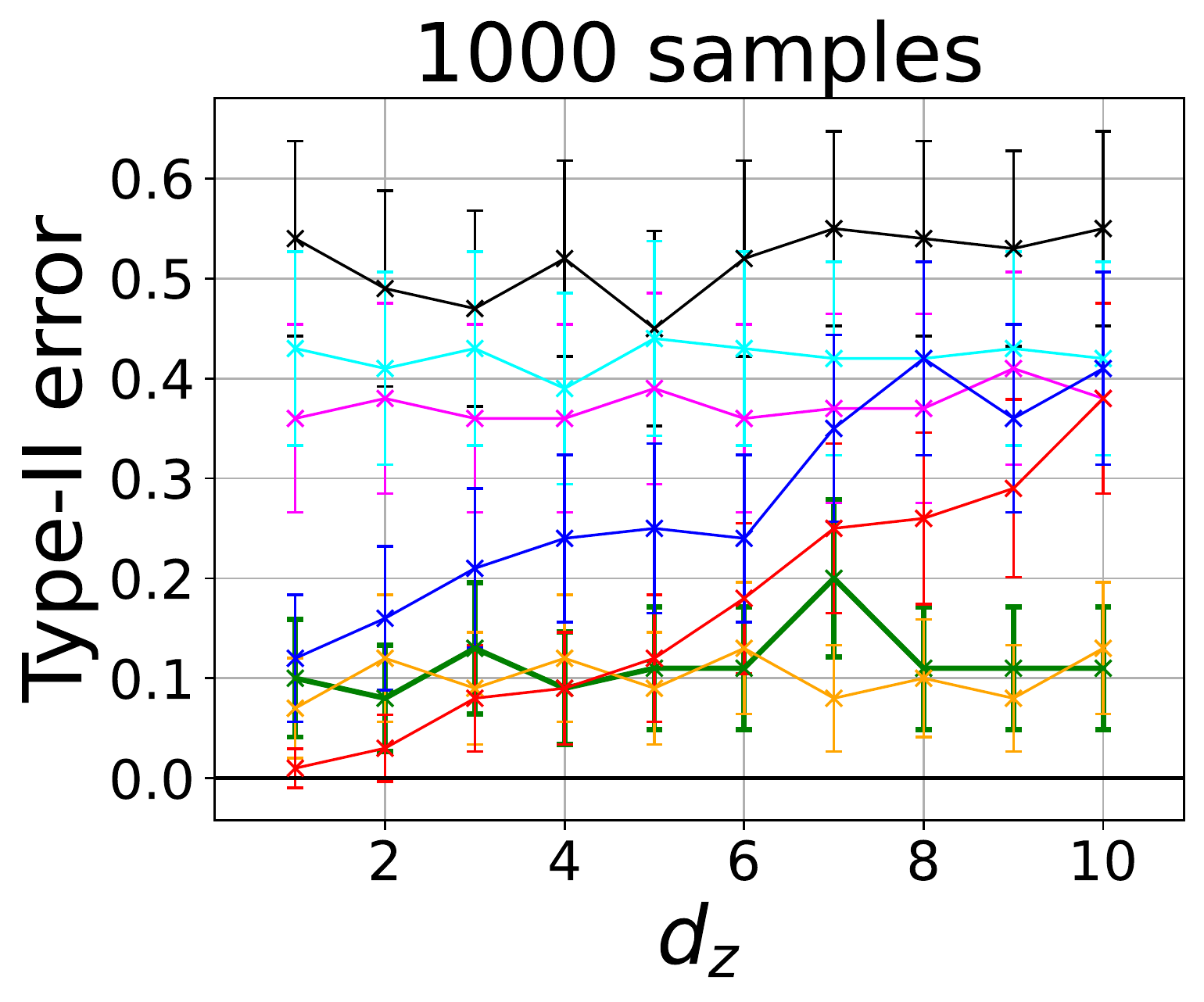} & 
\includegraphics[height=2.9cm]{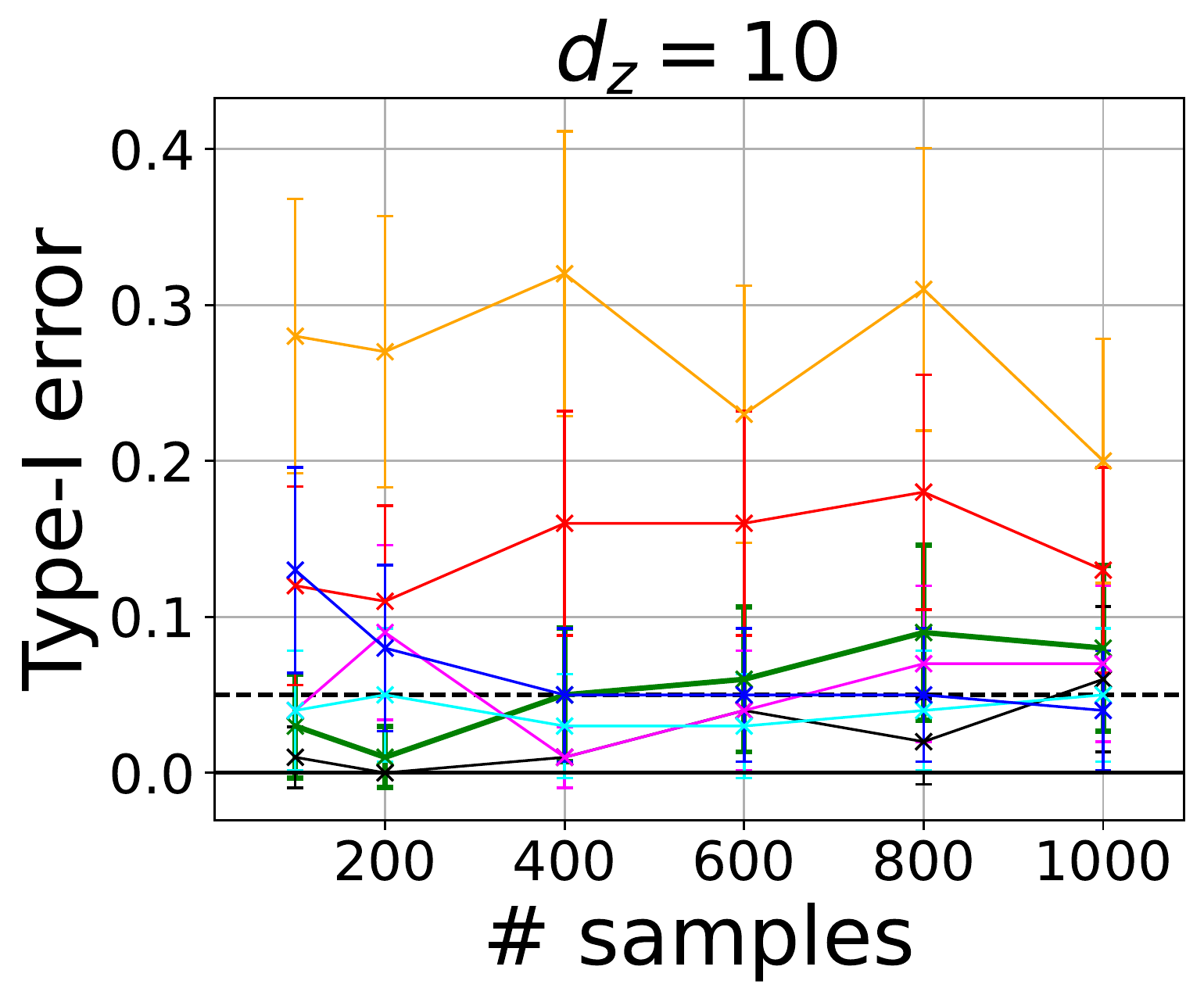}& \includegraphics[height=2.9cm]{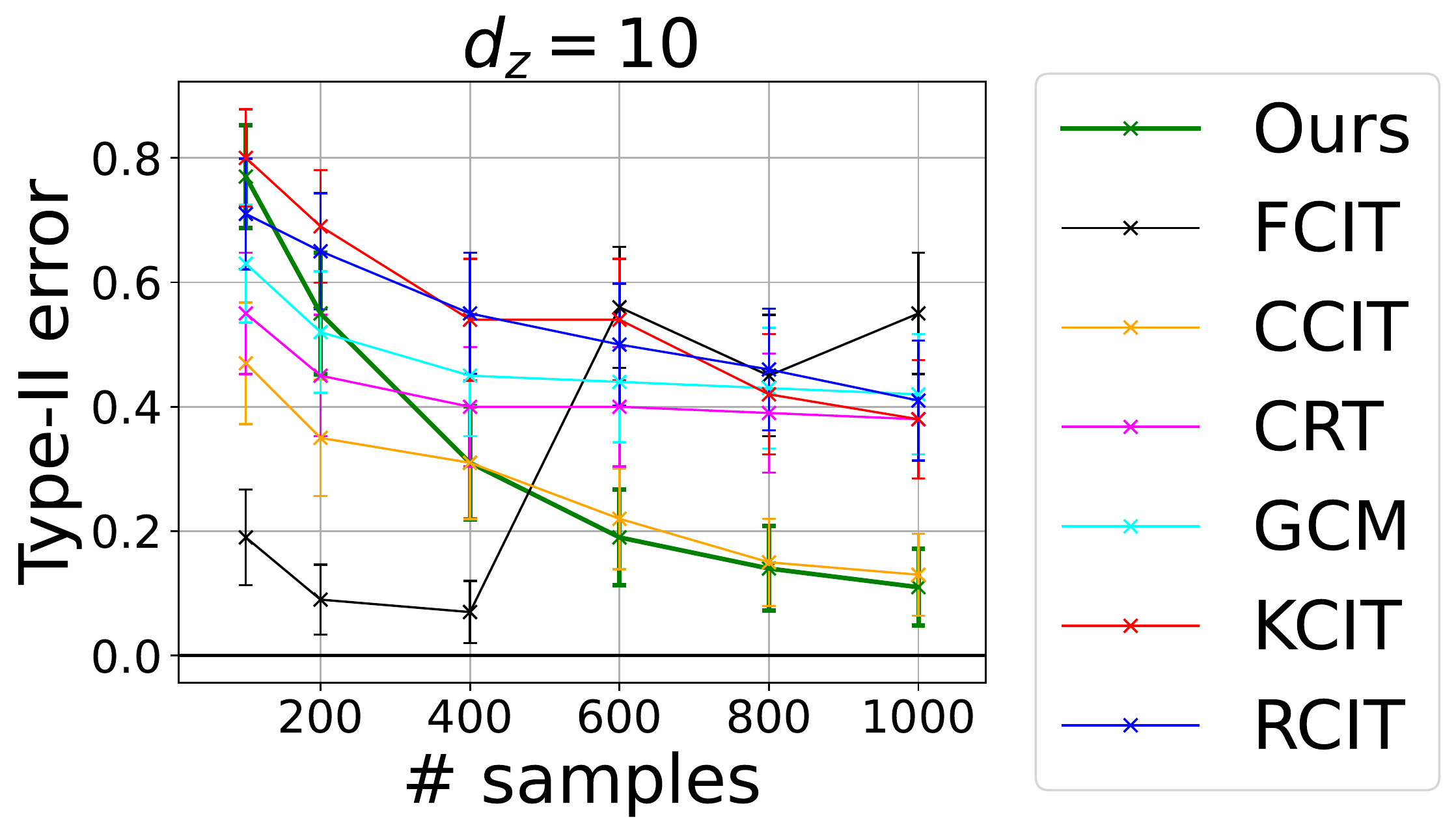} 
\end{tabular}
\caption{Comparison of the type-I error at level $\alpha=0.05$ (dashed line) and the type-II error (lower is better) of our test procedure with other SoTA tests on the two problems presented in~\eqref{exp-strobl-h0} and~\eqref{exp-strobl-h1} with Laplace noises. Each point in the figures is obtained by repeating the experiment for 100 independent trials. (\emph{Left, middle-left}): type-I and type-II errors obtained by each test when varying the dimension $d_z$ from 1 to 10; here, the number of samples $n$ is fixed and equals to $1000$. (\emph{Middle-right, right}): type-I and type-II errors obtained by each test when varying the number of samples $n$ from 100 to 1000; here, the dimension $d_z$ is fixed and equals to $10$.
\label{fig-exp-strobl-laplace-supp}}
\vspace{-0.3cm}
\end{figure*}

\begin{figure*}[htb]
\begin{tabular}{cccc} 
\includegraphics[height=2.9cm]{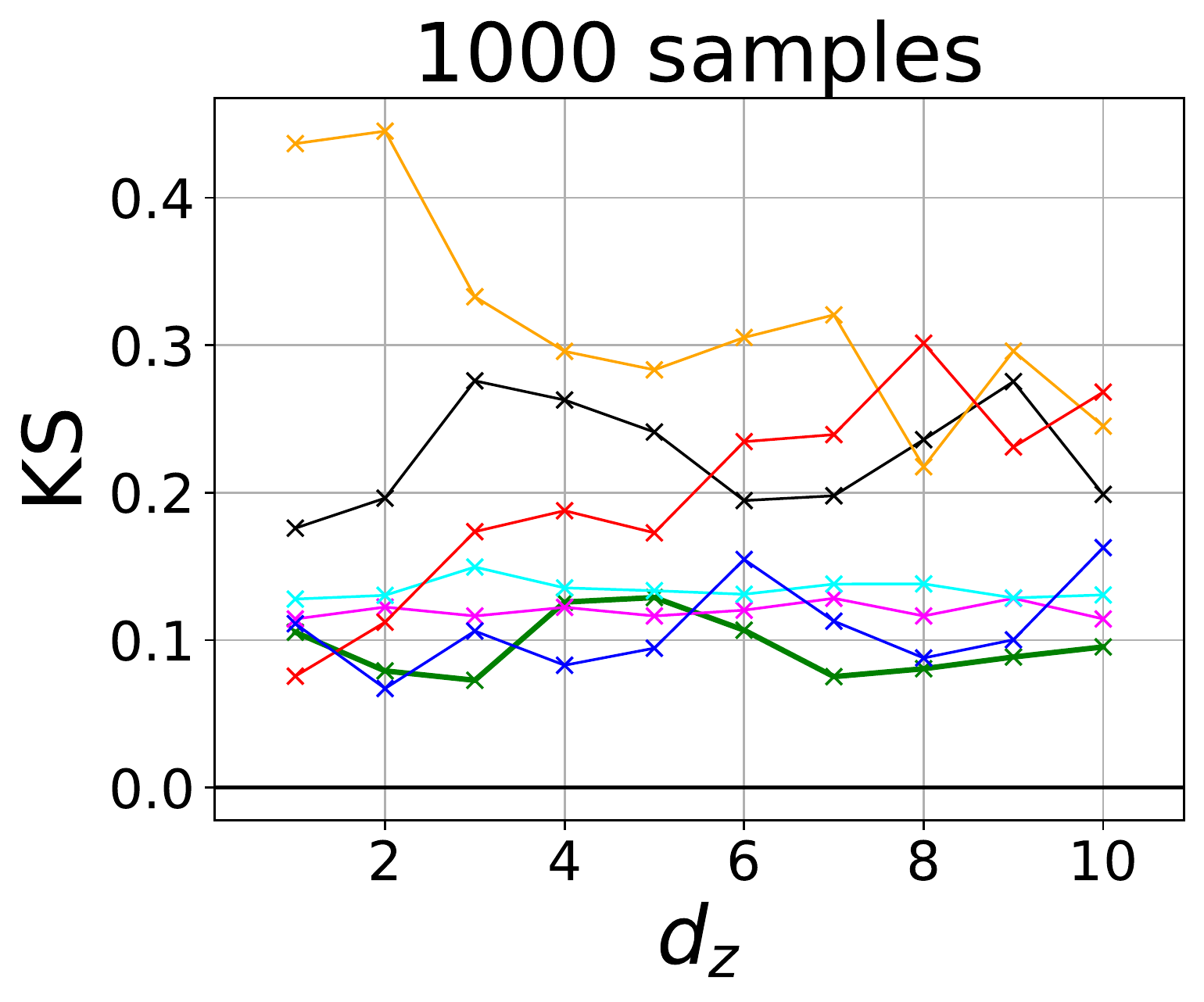}& \includegraphics[height=2.9cm]{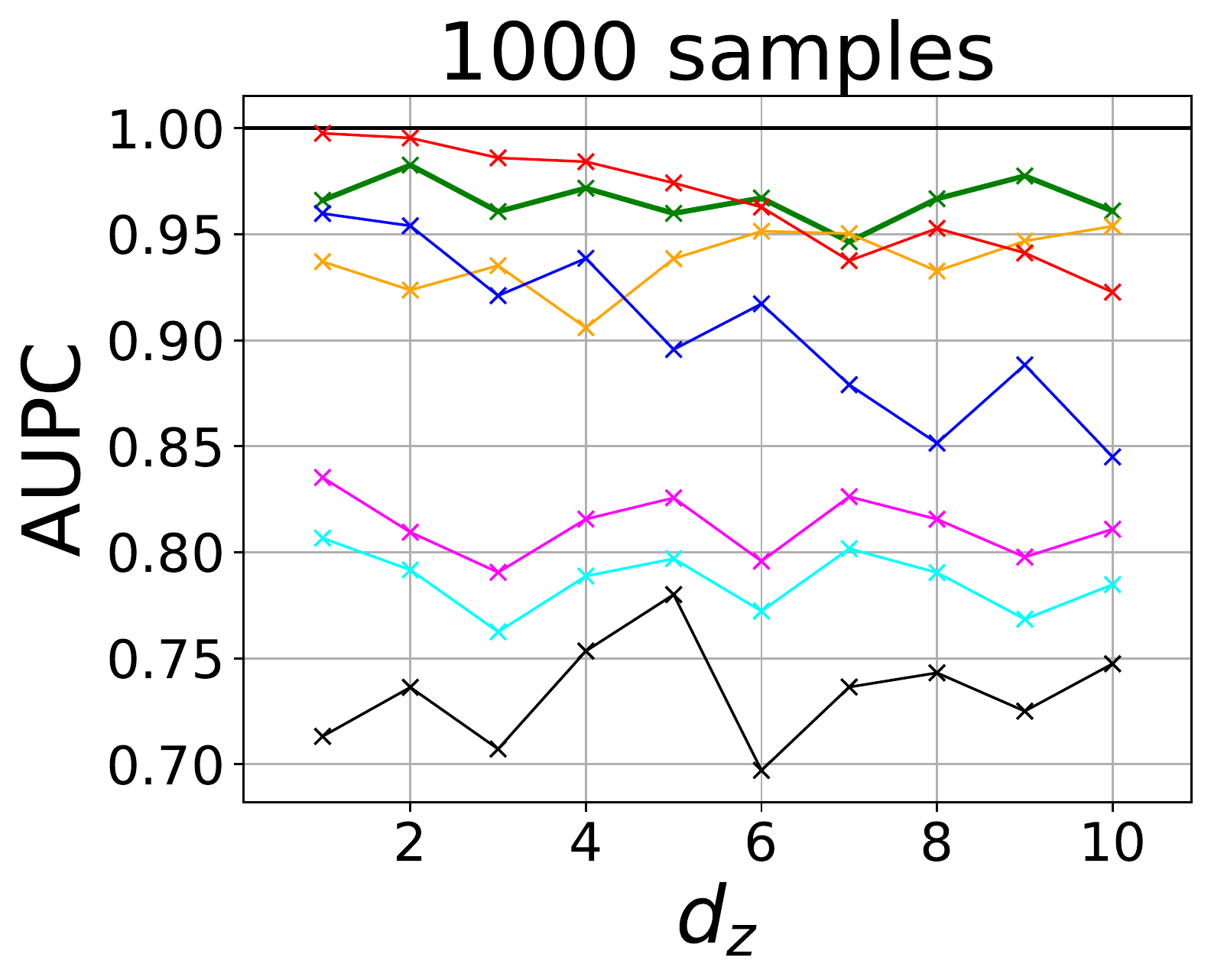} & 
\includegraphics[height=2.9cm]{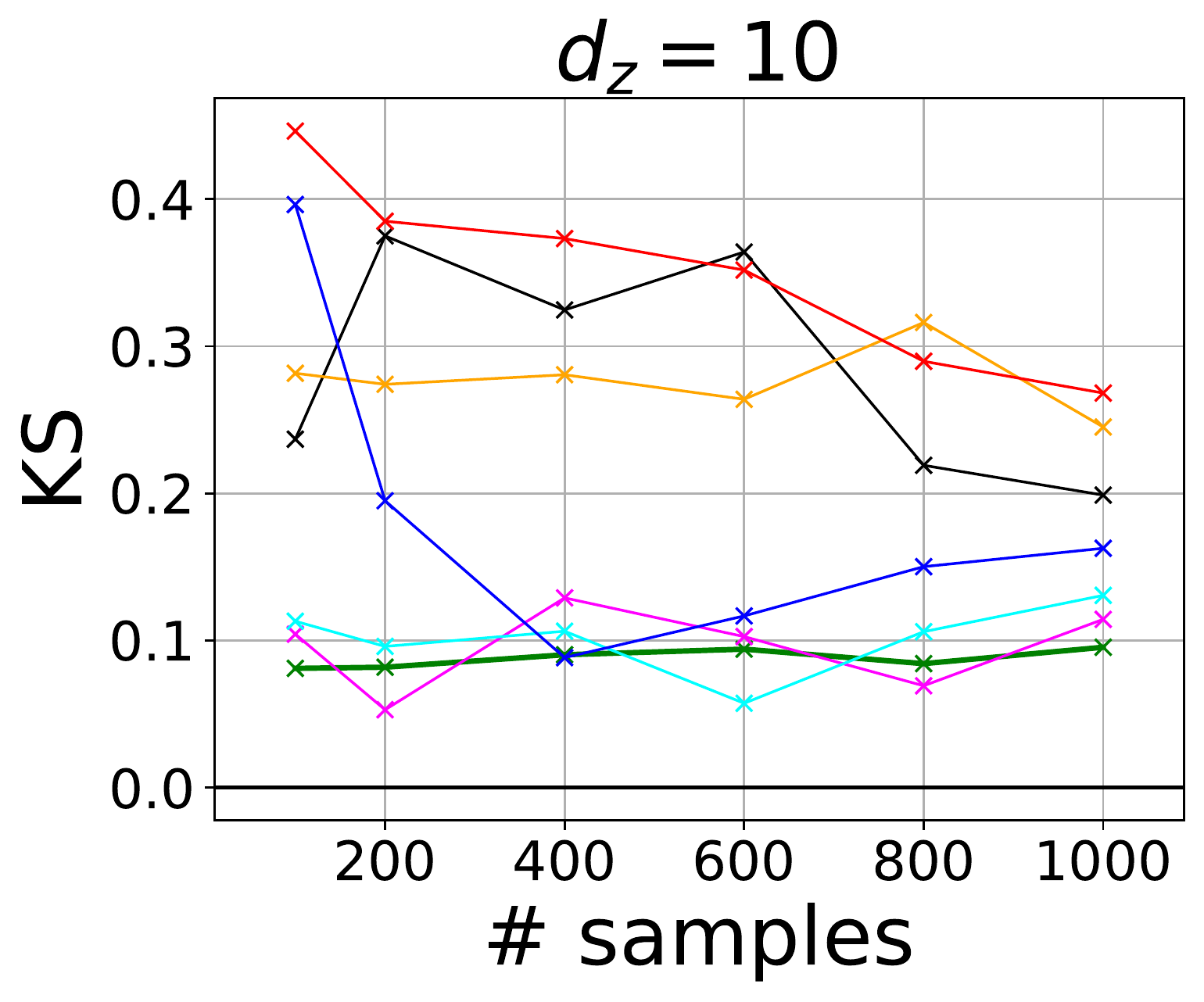}& \includegraphics[height=2.9cm]{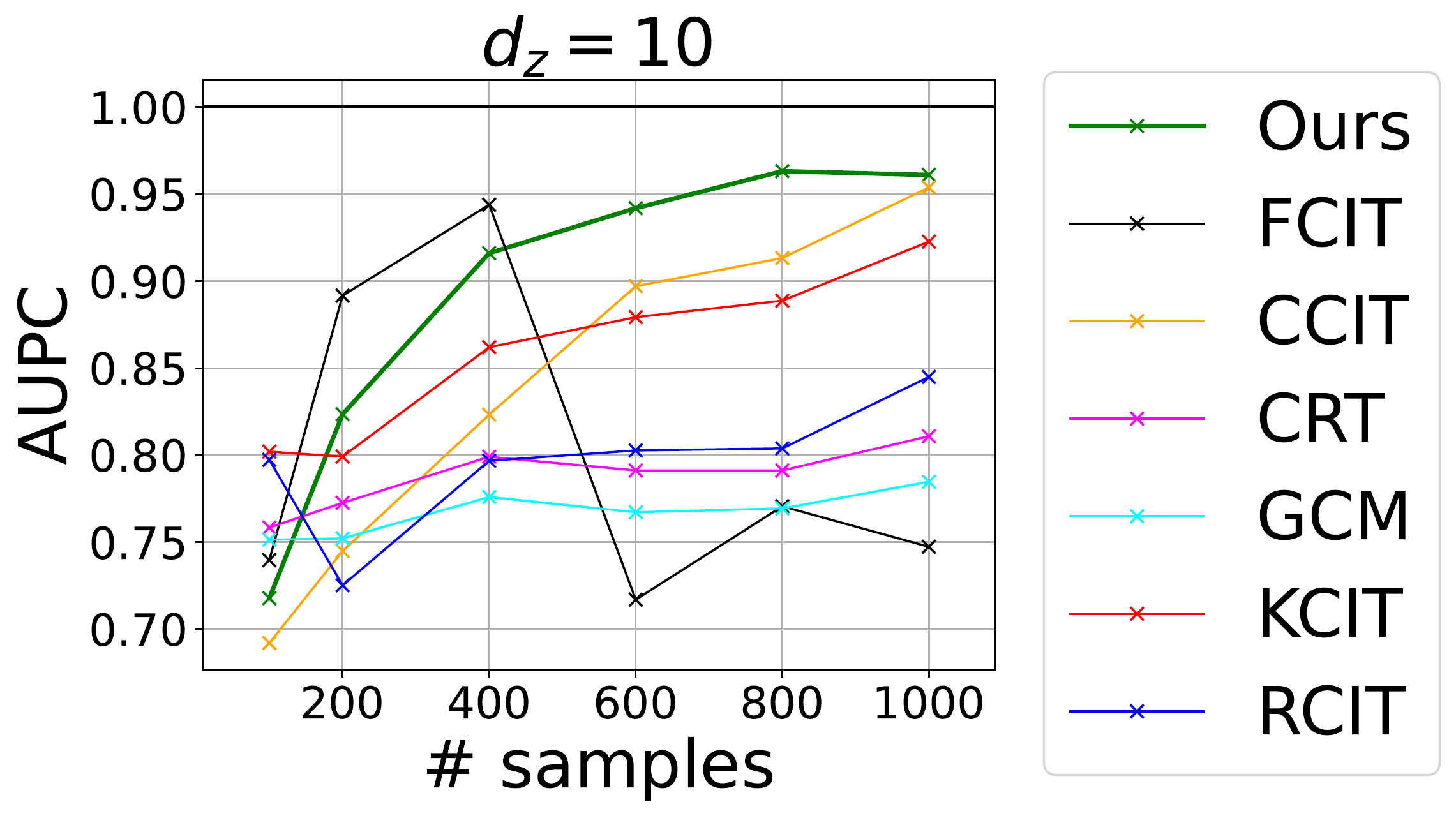} 
\end{tabular}
\caption{Comparison of the KS statistic and the AUPC of our testing procedure with other SoTA tests on the two problems presented in Eq.~\eqref{exp-strobl-h0} and Eq.~\eqref{exp-strobl-h1}  with Laplace noises. Each point in the figures is obtained by repeating the experiment for 100 independent trials. (\emph{Left, middle-left}): the KS statistic and AUPC (respectively) obtained by each test when varying the dimension $d_z$ from 1 to 10; here, the number of samples $n$ is fixed and equals to $1000$. (\emph{Middle-right, right}): the KS and AUPC (respectively), obtained by each test when varying the number of samples $n$ from 100 to 1000; here, the dimension $d_z$ is fixed and equals to $10$.
\label{fig-exp-strobl-ks-laplace-supp}}
\vspace{-0.3cm}
\end{figure*}

\begin{figure*}[ht]
\begin{tabular}{cccc} 
\includegraphics[height=2.9cm]{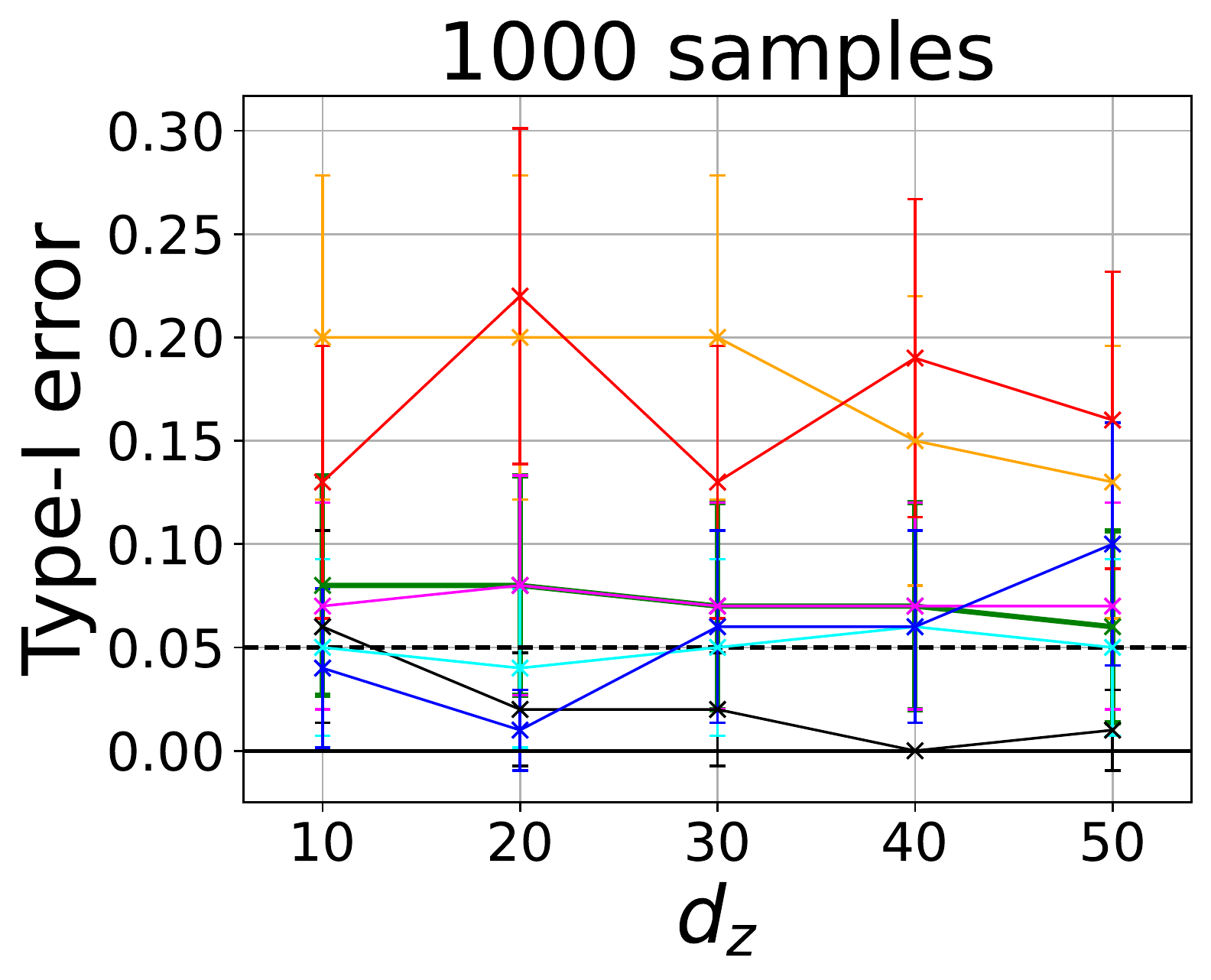}& \includegraphics[height=2.9cm]{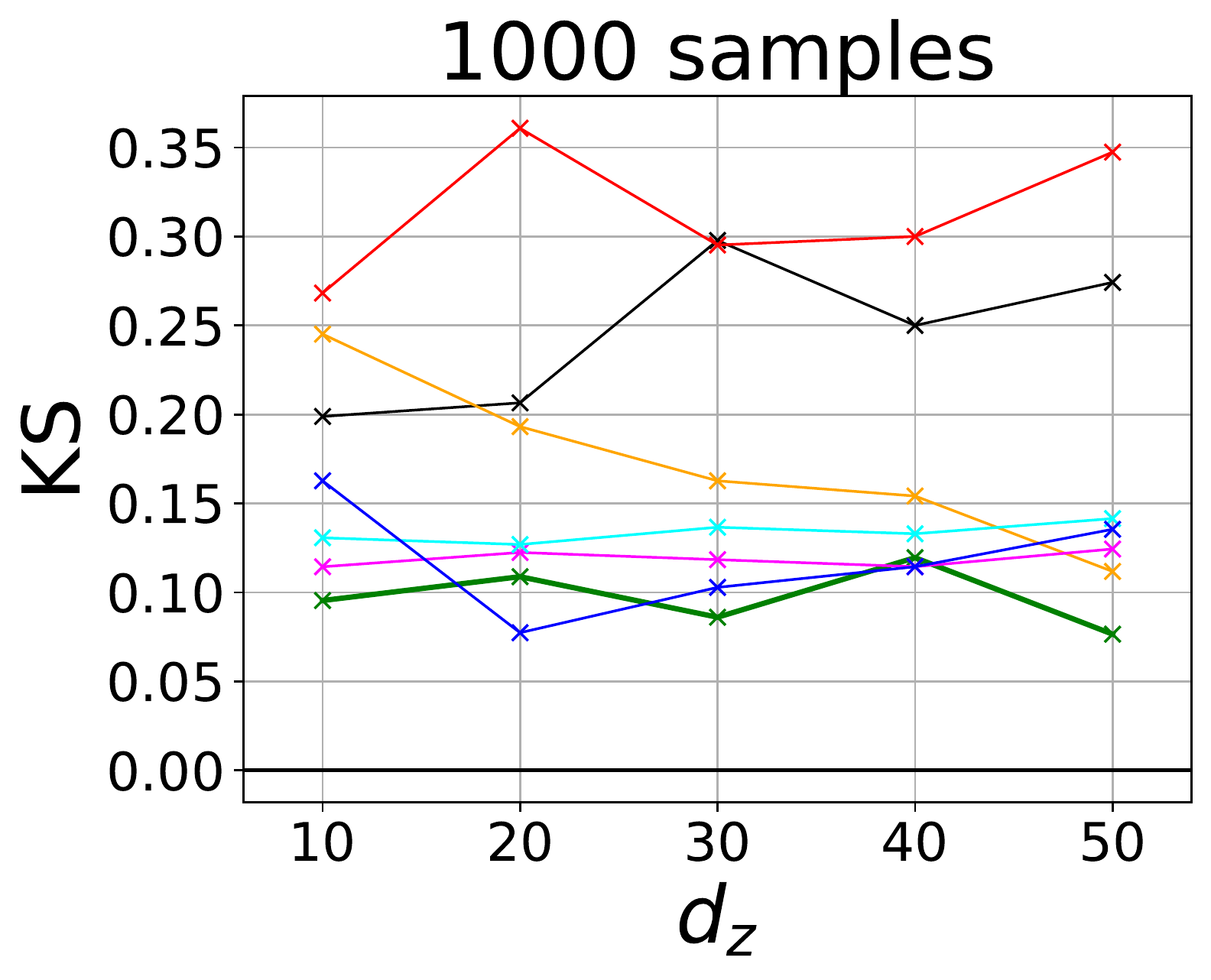} & 
\includegraphics[height=2.9cm]{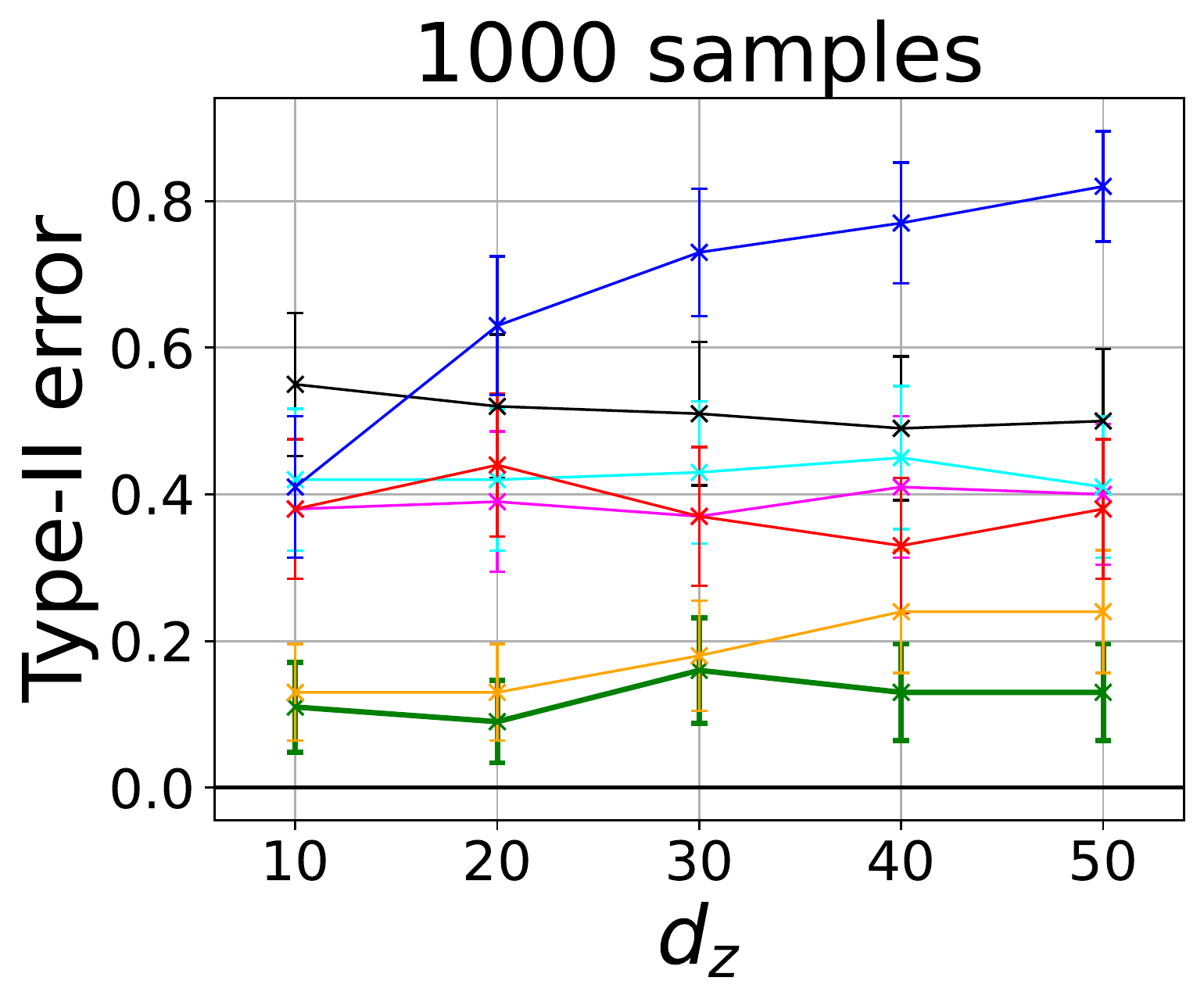}& \includegraphics[height=2.9cm]{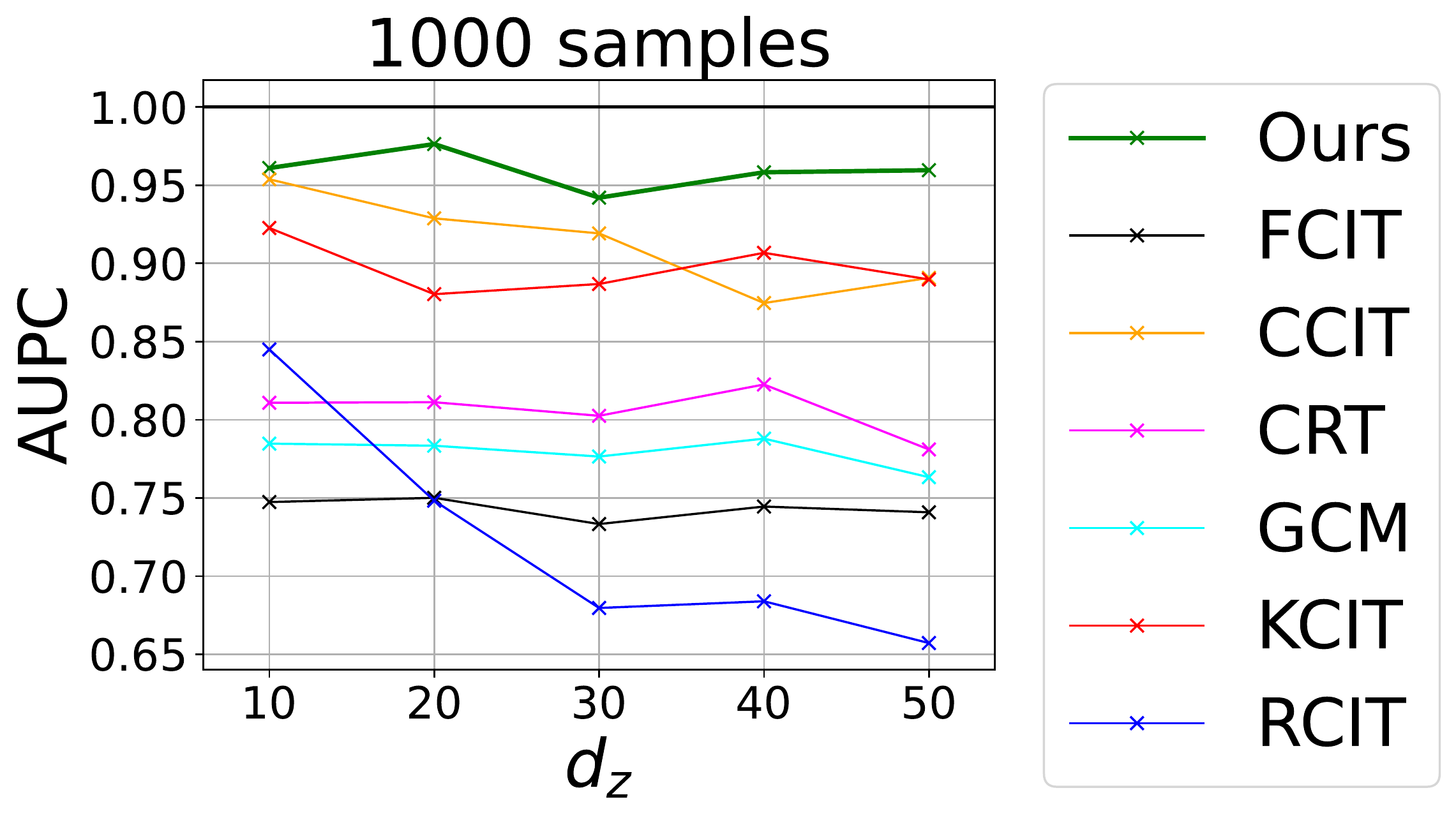}
\end{tabular}
\caption{Comparison of the type-I error at level $\alpha=0.05$ (dashed line), type-II error (lower is better), KS statistic and the AUPC of our testing procedure with other SoTA tests on the two problems presented in Eq.~\eqref{exp-strobl-h0} and Eq.~\eqref{exp-strobl-h1} with Laplace noises. Each point in the figures is obtained by repeating the experiment for 100 independent trials. In each plot the dimension $d_z$ is varying from 10 to 50; here, the number of samples $n$ is fixed and equals to $1000$.
\label{fig-exp-strobl-highdim-laplace-supp}}
\vspace{-0.3cm}
\end{figure*}

\newpage
\subsection{Additional experiments on Problems~\eqref{li-exp-h0} and~\eqref{li-exp-h1}}
\label{sec-exp-li}
\subsubsection{Gaussian Case}


\begin{figure*}[htb]
\begin{tabular}{cccc} 
\includegraphics[height=2.9cm]{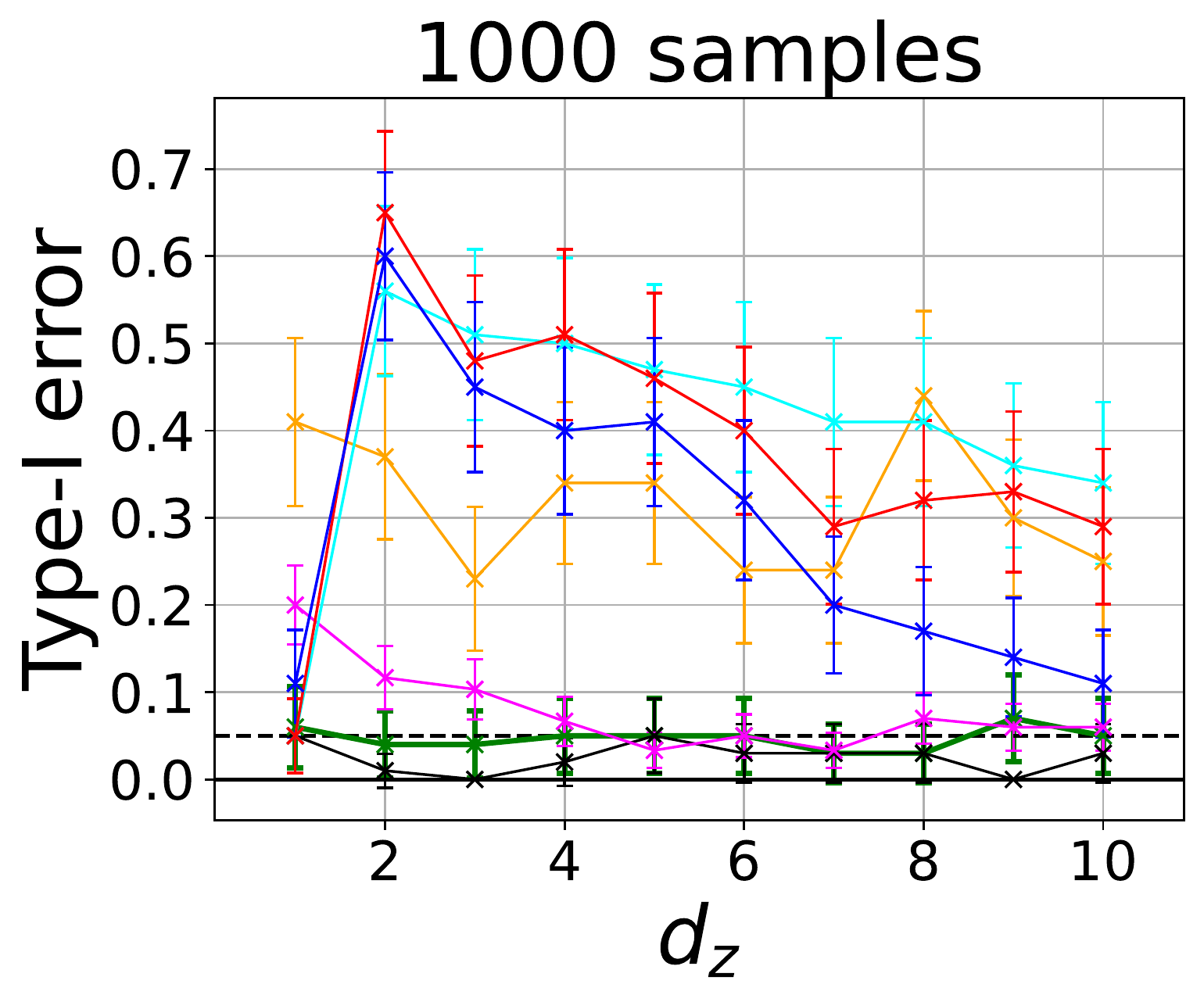}& \includegraphics[height=2.9cm]{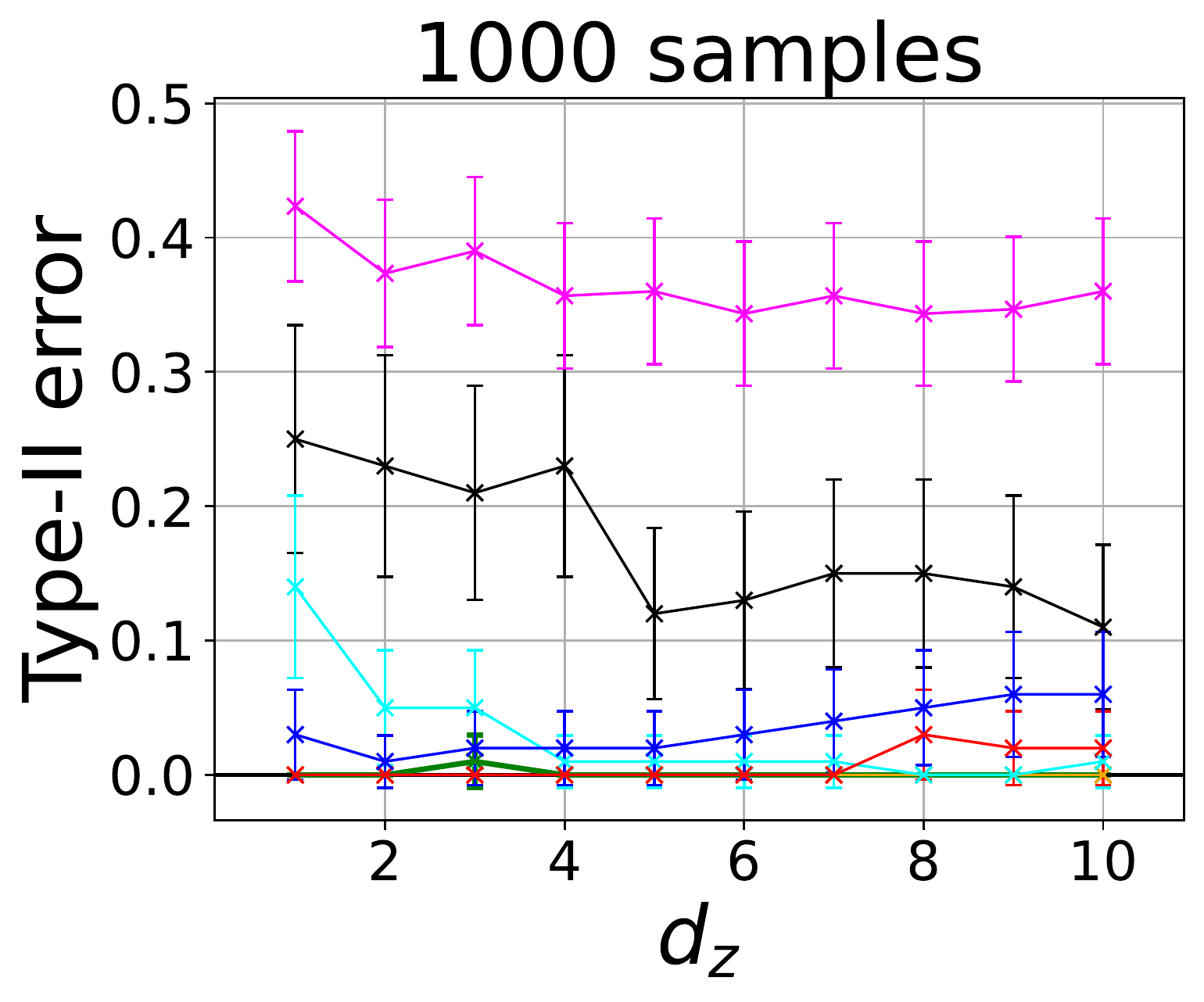} & 
\includegraphics[height=2.9cm]{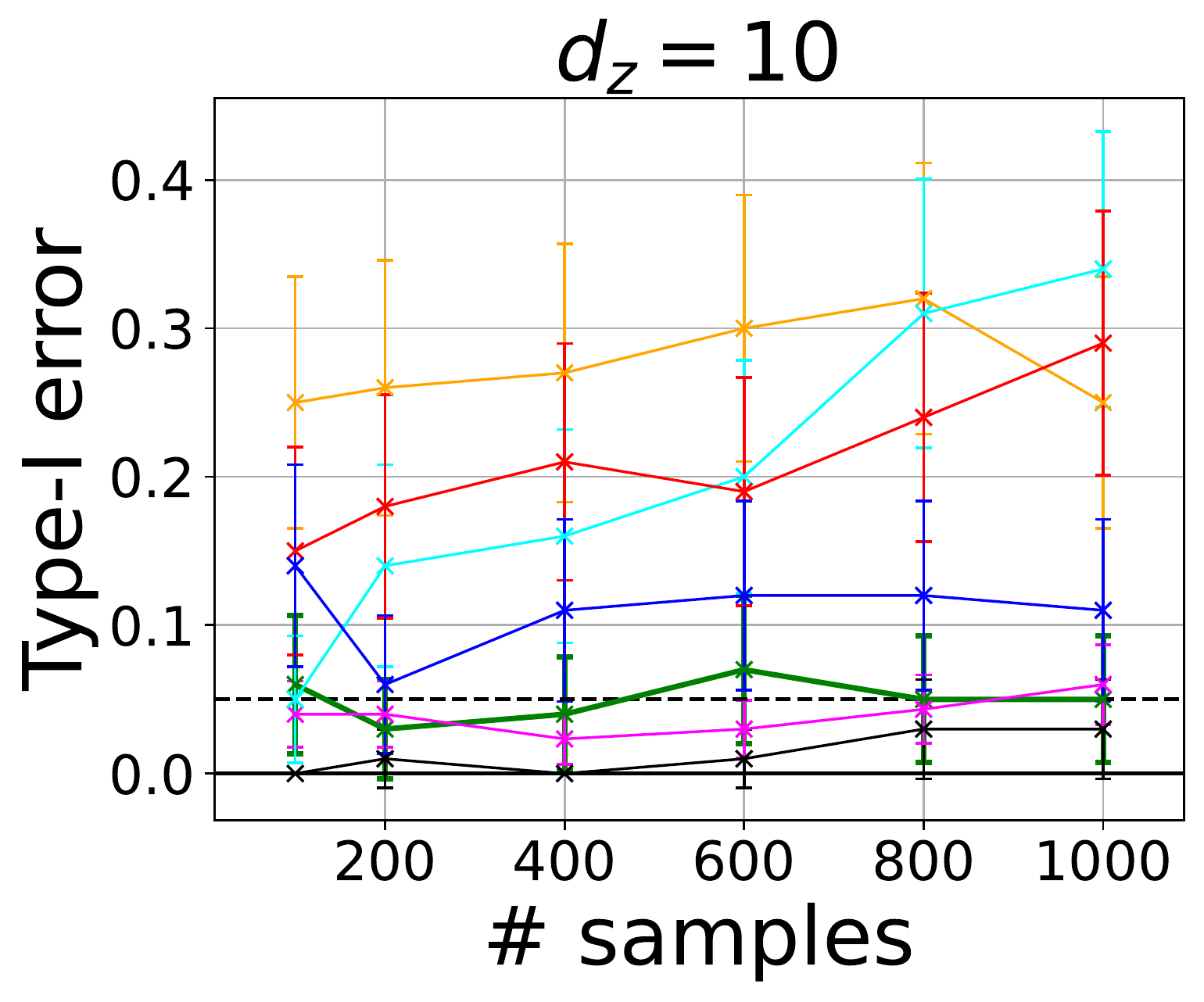}& \includegraphics[height=2.9cm]{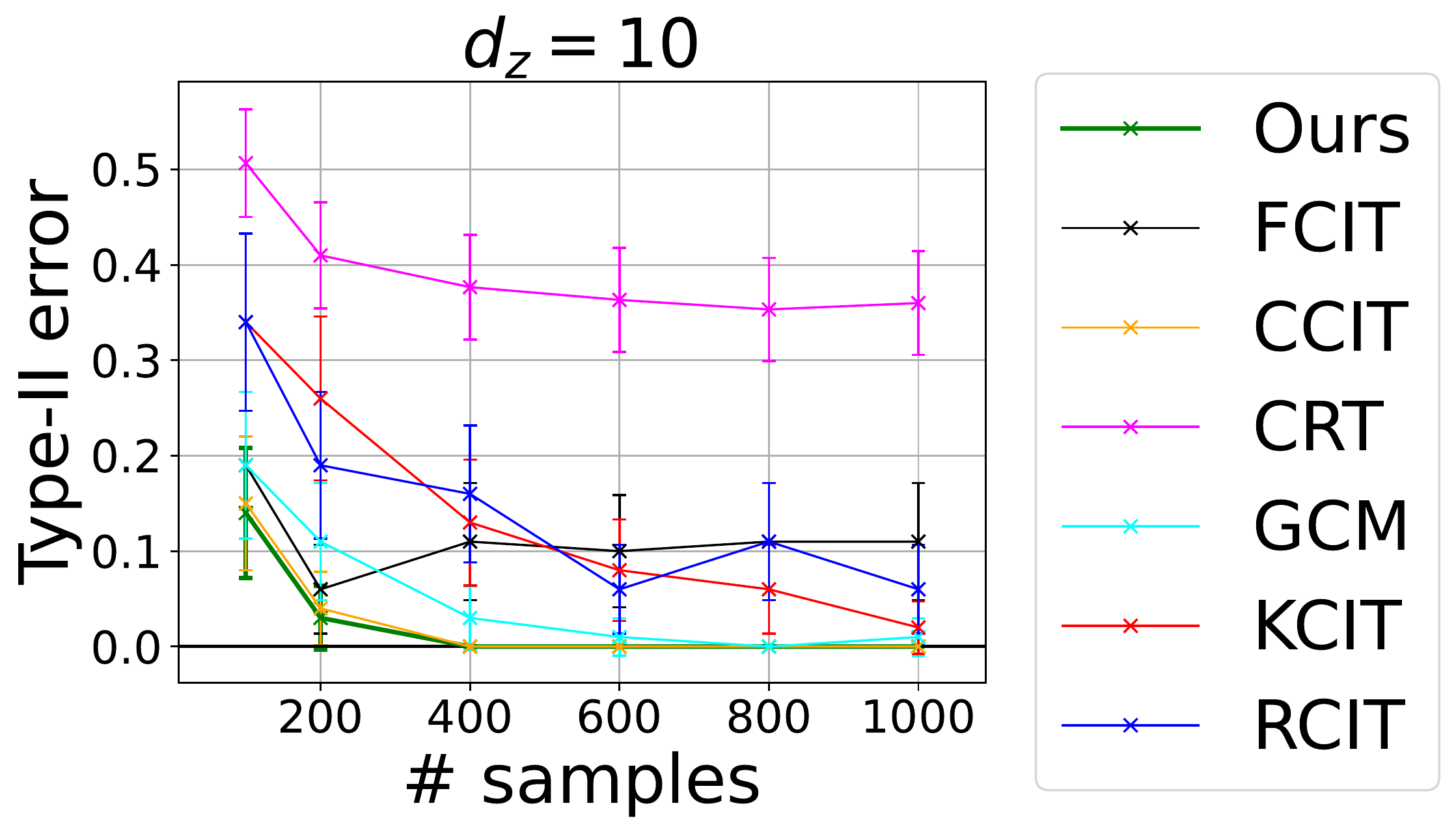} 
\end{tabular}
\caption{Comparison of the type-I error at level $\alpha=0.05$ (dashed line) and the type-II error (lower is better) of our test procedure with other SoTA tests on the two problems presented in~\eqref{li-exp-h0} and~\eqref{li-exp-h1}  with Gaussian noises. Each point in the figures is obtained by repeating the experiment for 100 independent trials. (\emph{Left, middle-left}): type-I and type-II errors obtained by each test when varying the dimension $d_z$ from 1 to 10; here, the number of samples $n$ is fixed and equals to $1000$. (\emph{Middle-right, right}): type-I and type-II errors obtained by each test when varying the number of samples $n$ from 100 to 1000; here, the dimension $d_z$ is fixed and equals to $10$. 
\label{fig-exp-li-type-supp}}
\vspace{-0.3cm}
\end{figure*}

\begin{figure*}[ht]
\begin{tabular}{cccc} 
\includegraphics[height=2.9cm]{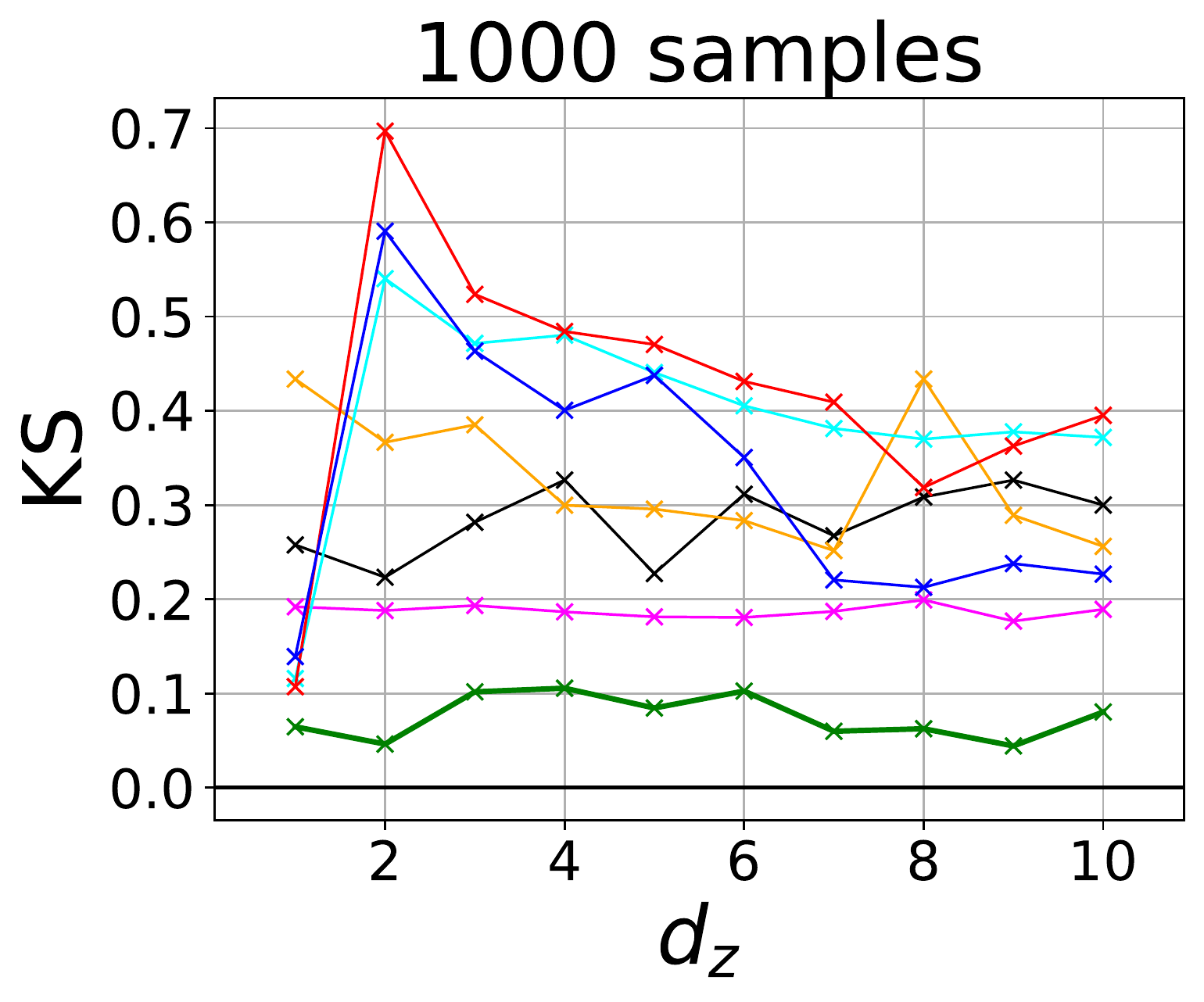}& \includegraphics[height=2.9cm]{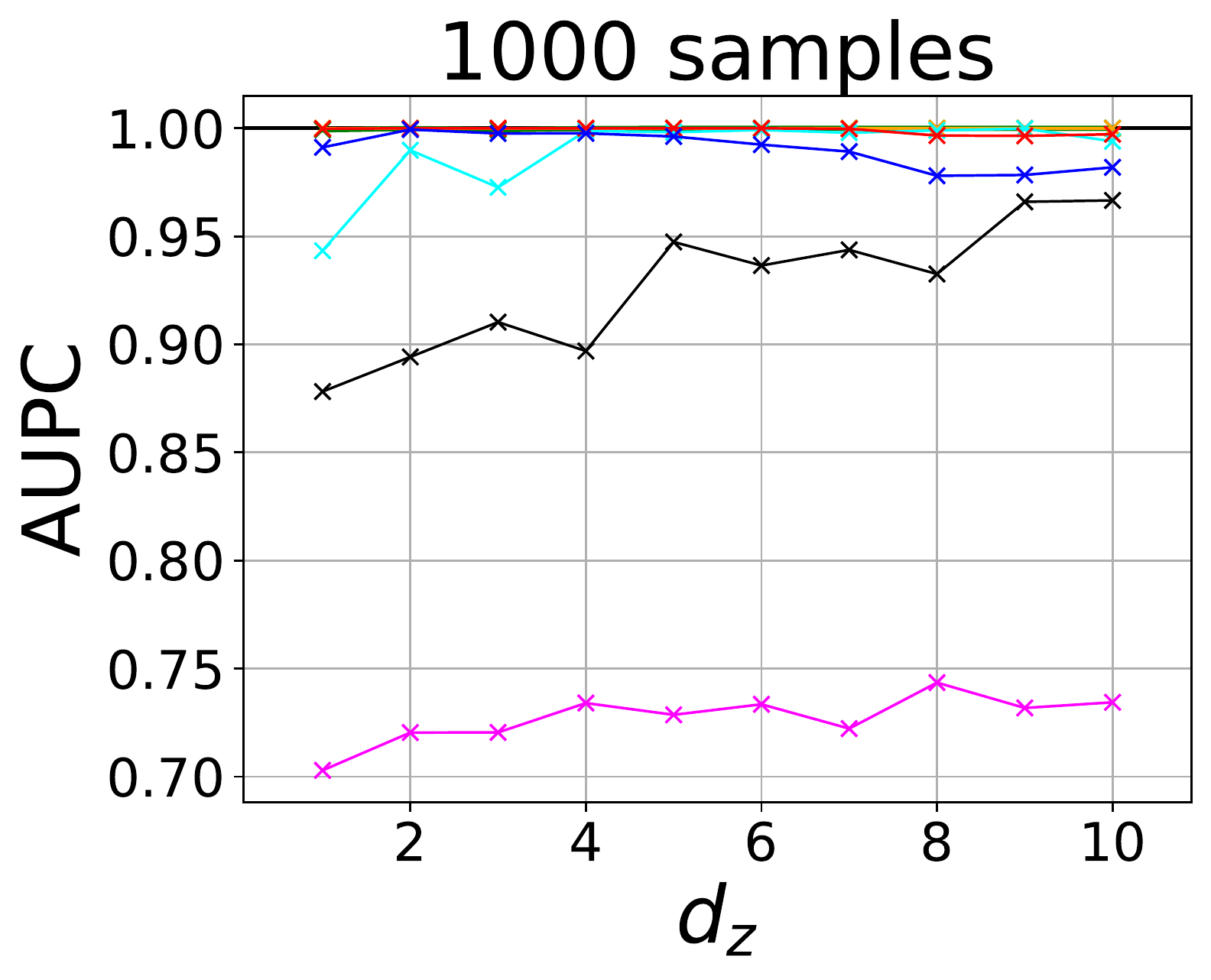} & 
\includegraphics[height=2.9cm]{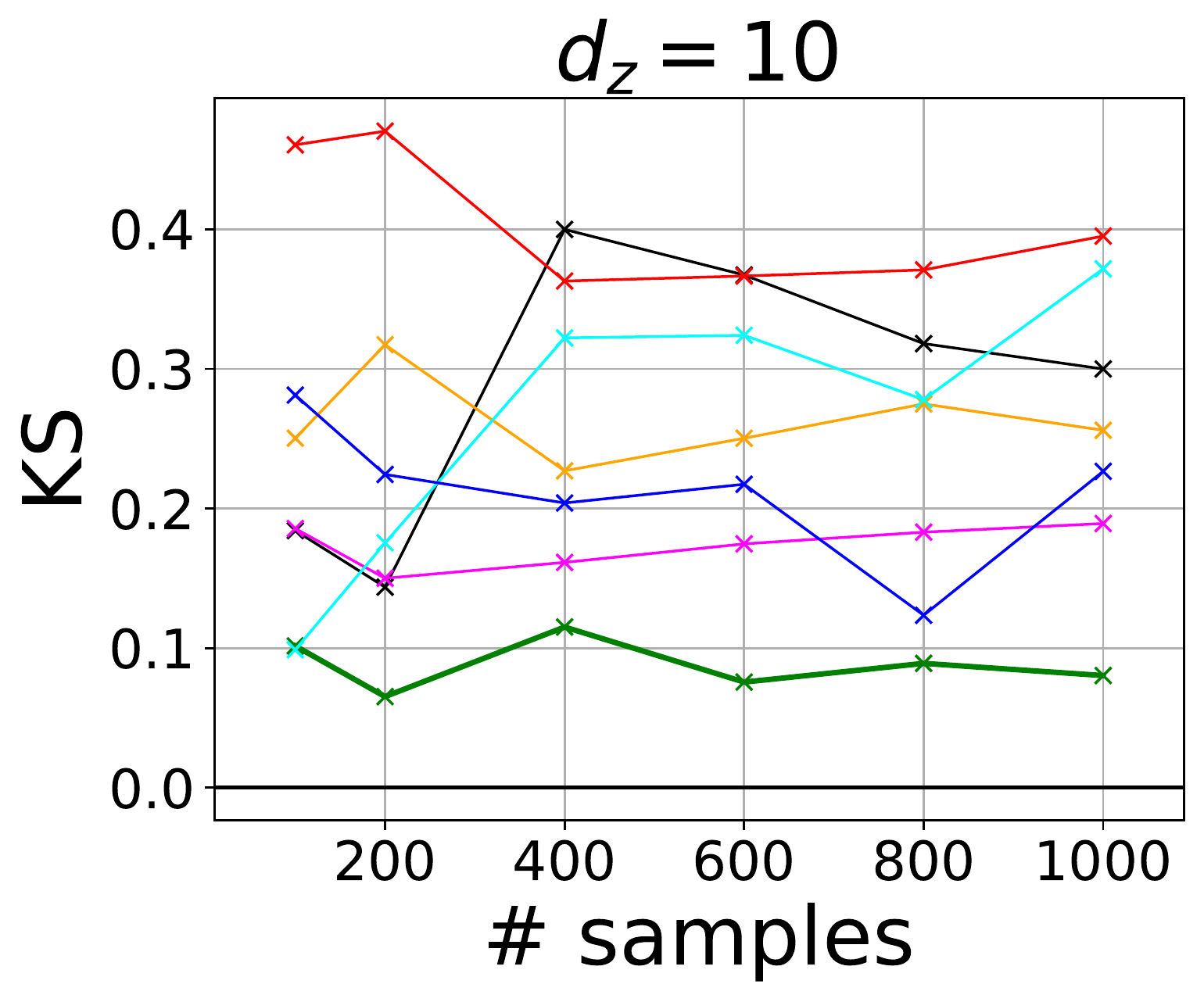}& \includegraphics[height=2.9cm]{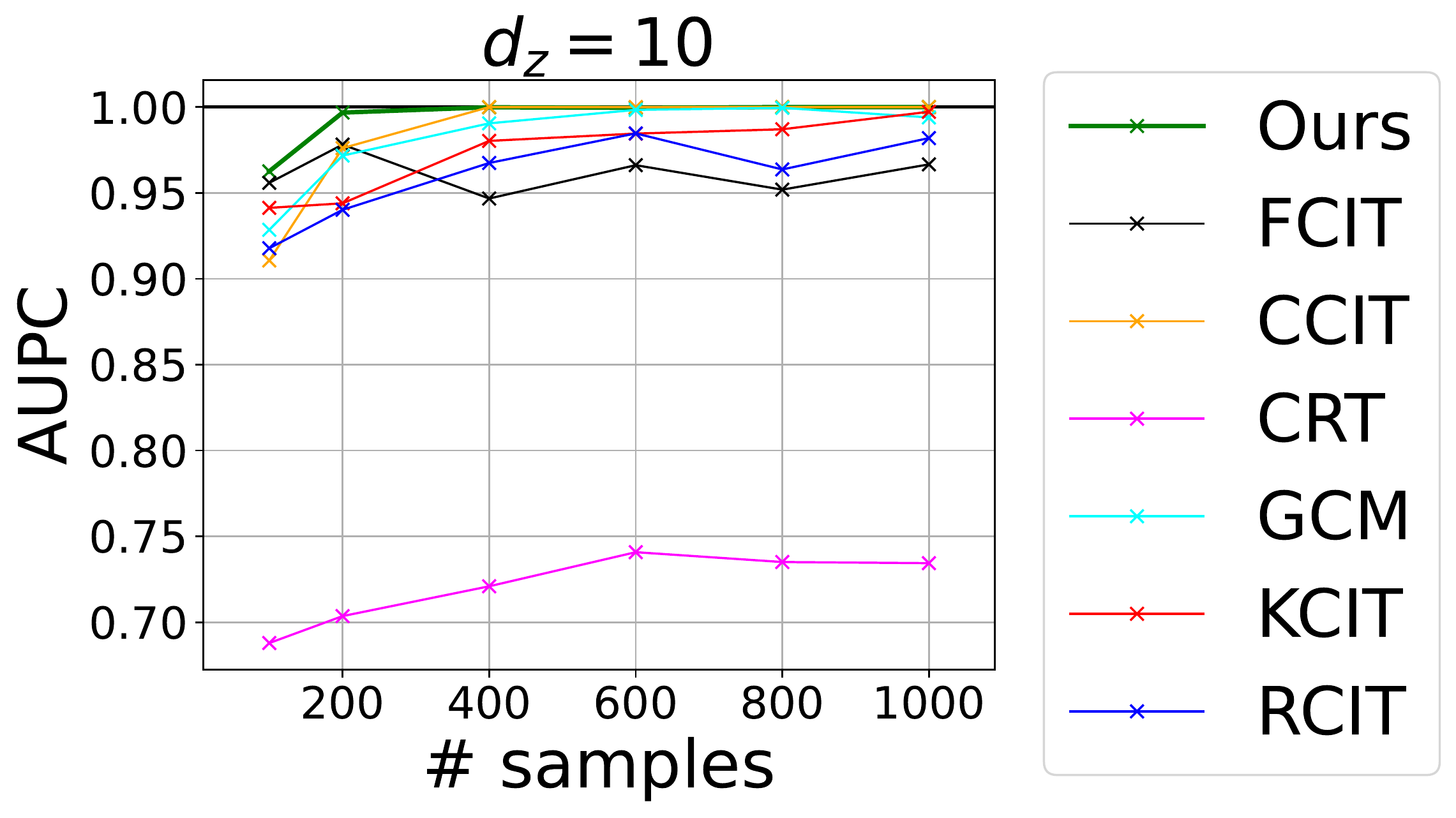}
\end{tabular}
\caption{Comparison of the KS statistic and the AUPC of our testing procedure with other SoTA tests on the two problems presented in Eq.~\eqref{li-exp-h0} and Eq.~\eqref{li-exp-h1} with Gaussian noises. Each point in the figures is obtained by repeating the experiment for 100 independent trials. (\emph{Left, middle-left}): the KS statistic and AUPC (respectively) obtained by each test when varying the dimension $d_z$ from 1 to 10; here, the number of samples $n$ is fixed and equals to $1000$. (\emph{Middle-right, right}): the KS and AUPC (respectively), obtained by each test when varying the number of samples $n$ from 100 to 1000; here, the dimension $d_z$ is fixed and equals to $10$.
\label{fig-exp-li-ks-gauss-supp}}
\vspace{-0.3cm}
\end{figure*}

\begin{figure*}[ht]
\begin{tabular}{cccc} 
\includegraphics[height=2.9cm]{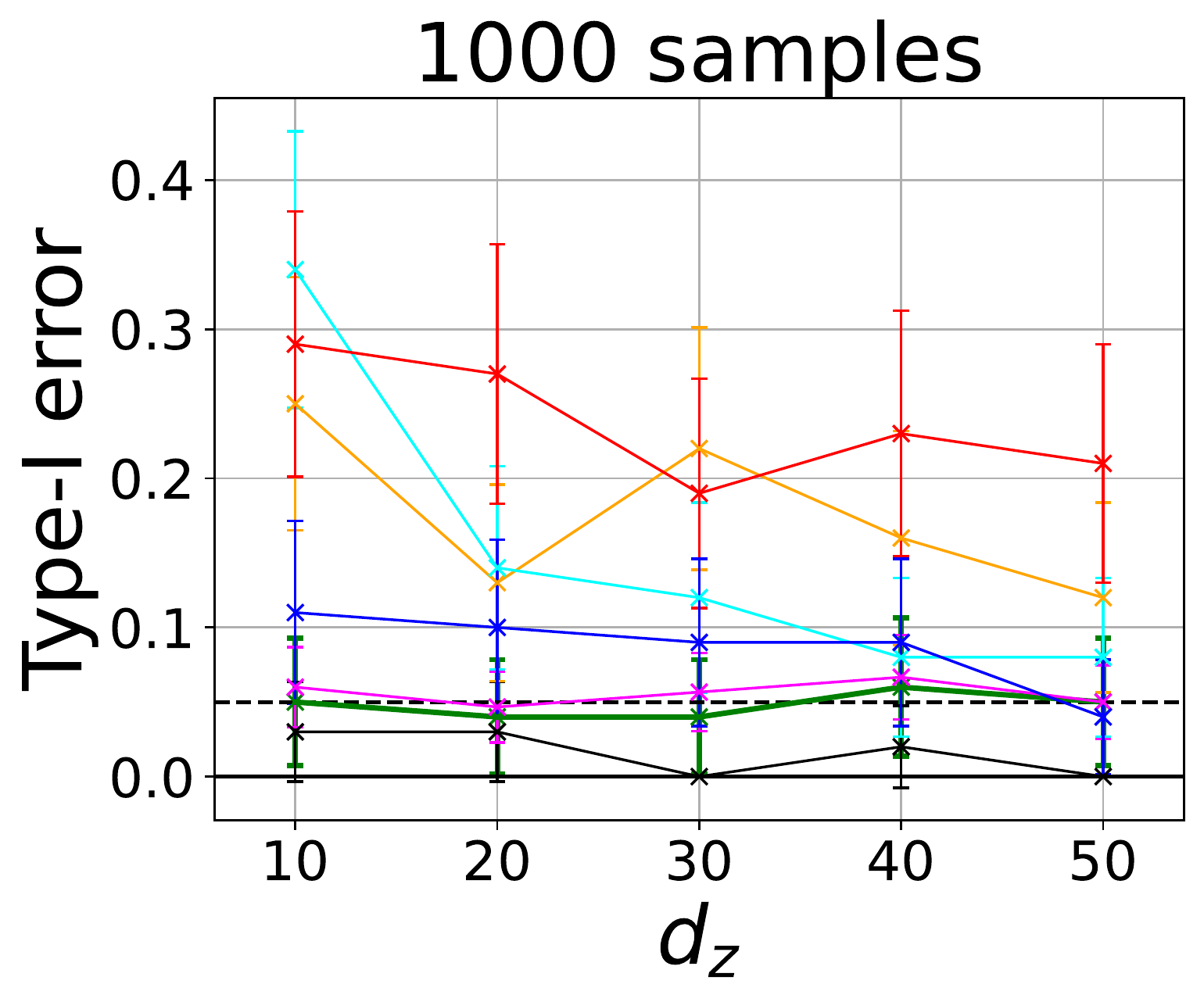}& \includegraphics[height=2.9cm]{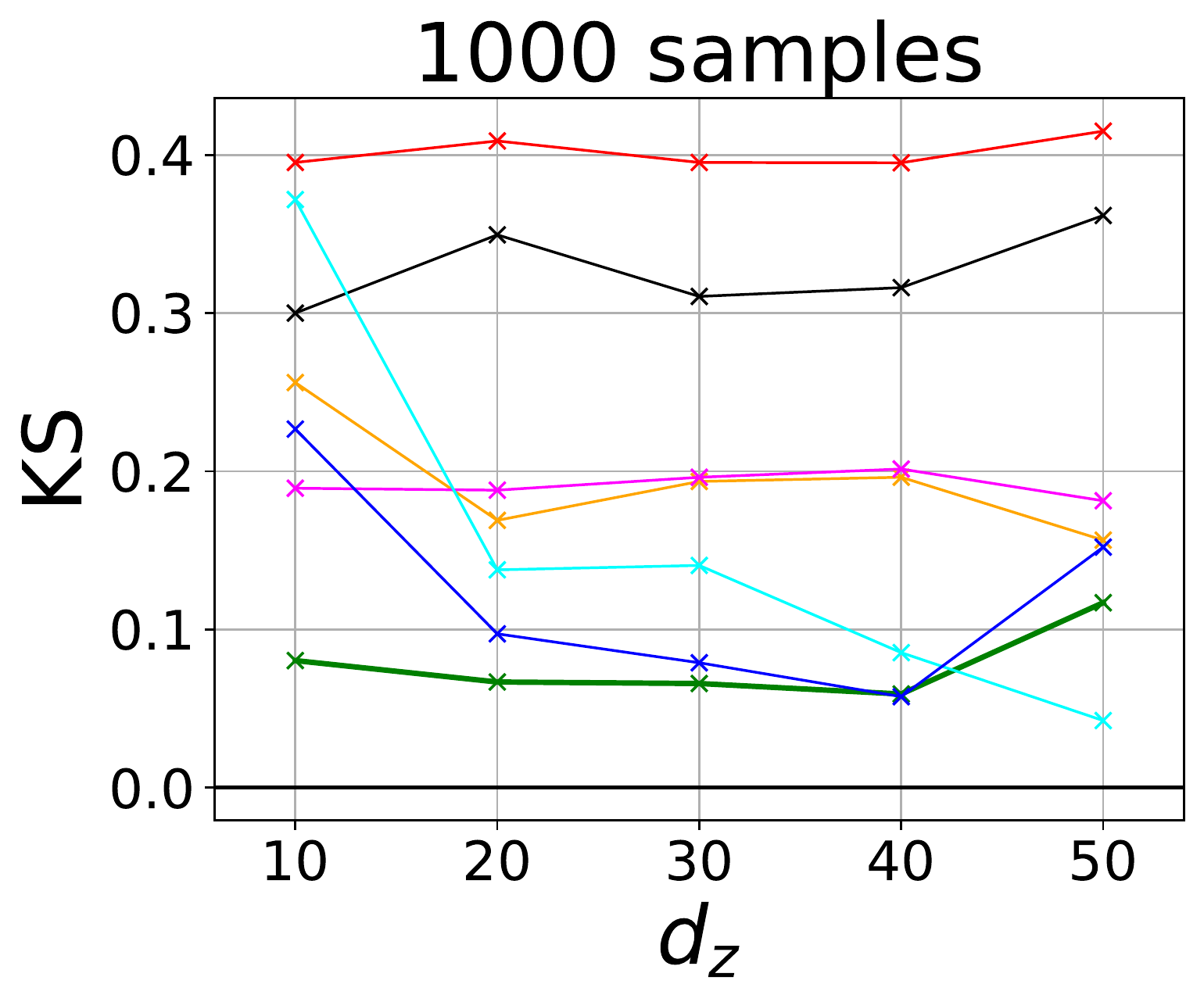} & 
\includegraphics[height=2.9cm]{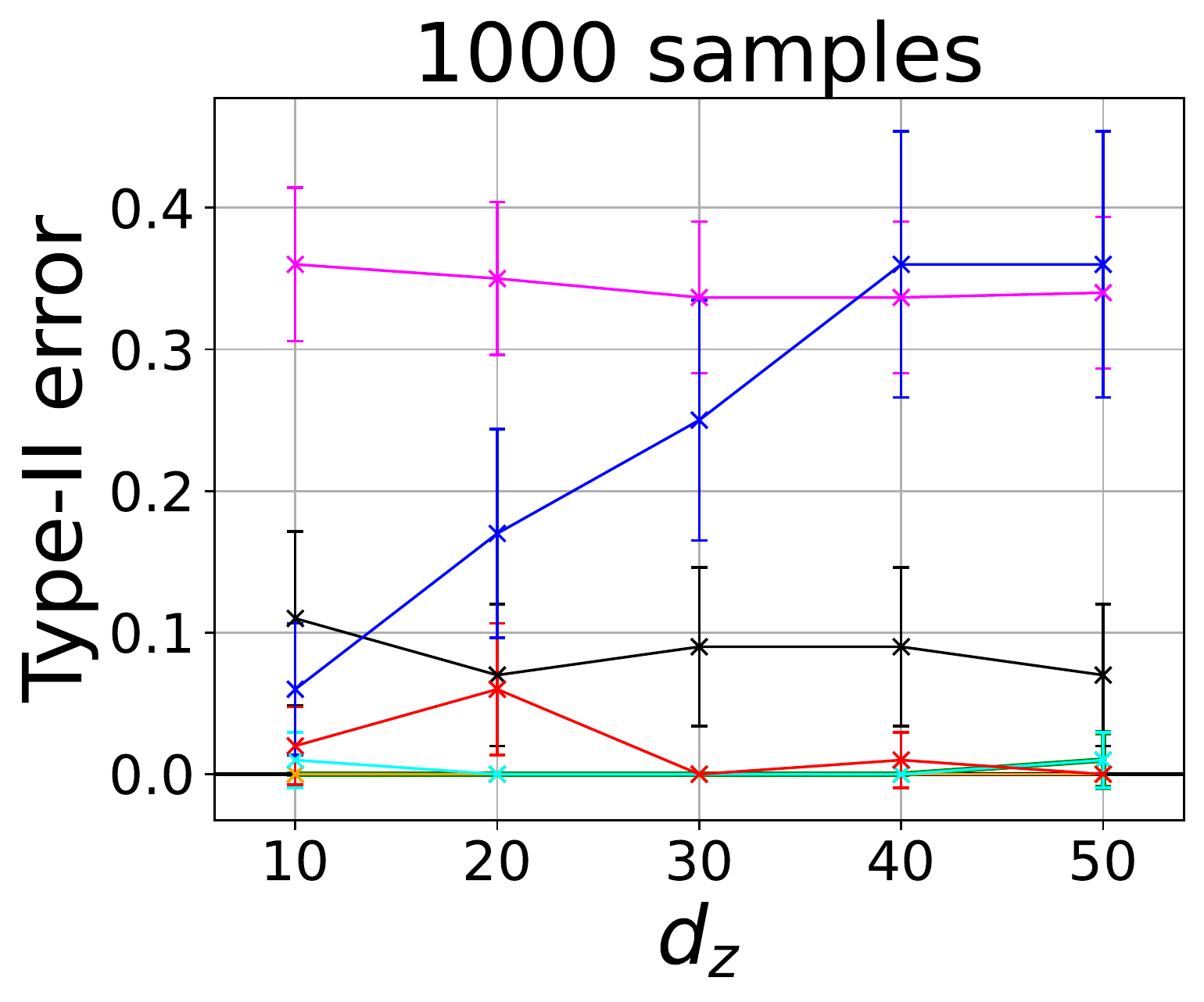}& \includegraphics[height=2.9cm]{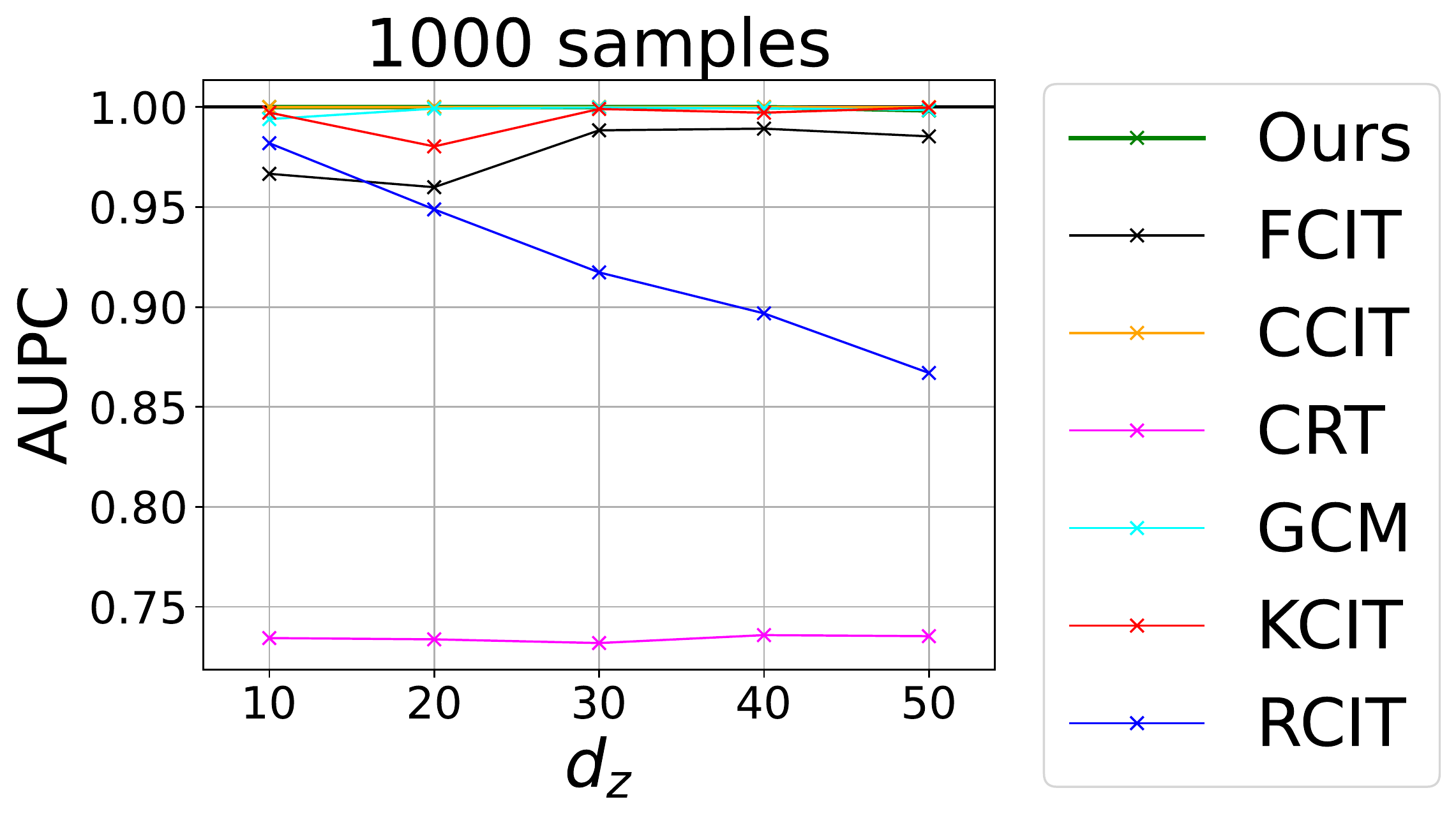}
\end{tabular}
\caption{Comparison of the type-I error at level $\alpha=0.05$ (dashed line), type-II error (lower is better), KS statistic and the AUPC of our testing procedure with other SoTA tests on the two problems presented in Eq.~\eqref{li-exp-h0} and Eq.~\eqref{li-exp-h1} with Gaussian noises. Each point in the figures is obtained by repeating the experiment for 100 independent trials. In each plot the dimension $d_z$ is varying from 10 to 50; here, the number of samples $n$ is fixed and equals to $1000$. 
\label{fig-exp-li-highdim-gauss-supp}}
\vspace{-0.3cm}
\end{figure*}

\newpage
\subsubsection{Laplace Case}

\begin{figure*}[h]
\begin{tabular}{cccc} 
\includegraphics[height=2.9cm]{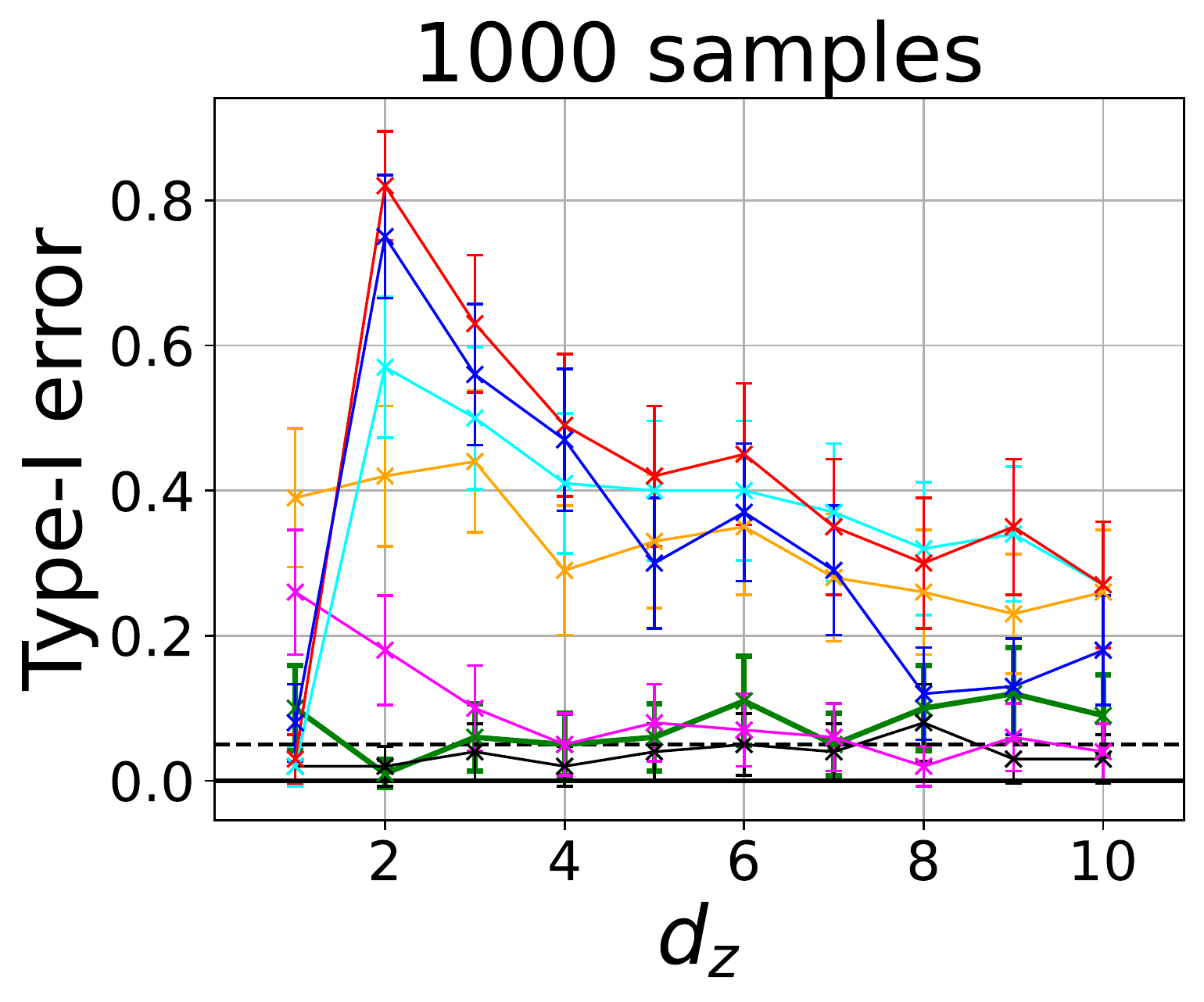}& \includegraphics[height=2.9cm]{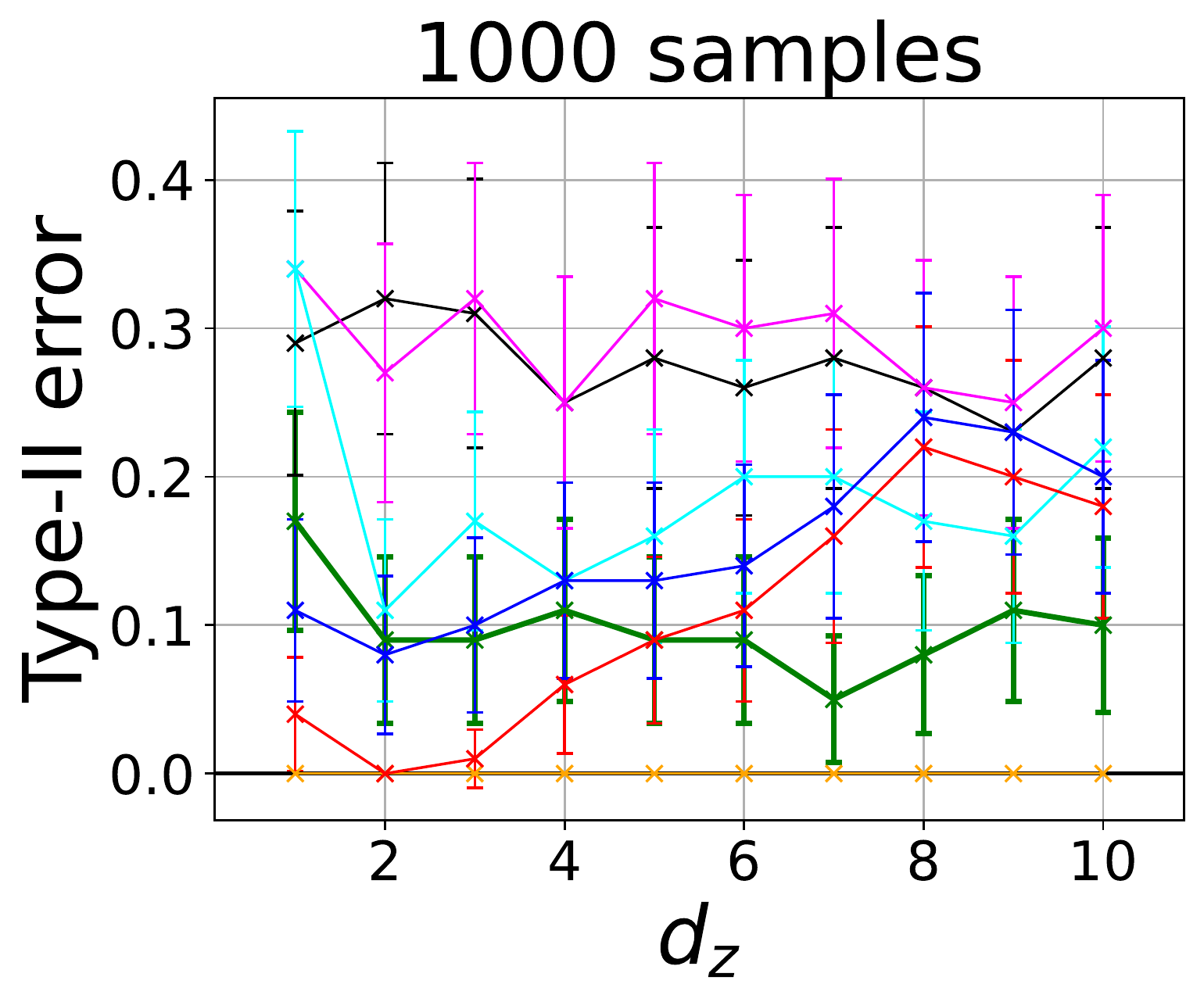} & 
\includegraphics[height=2.9cm]{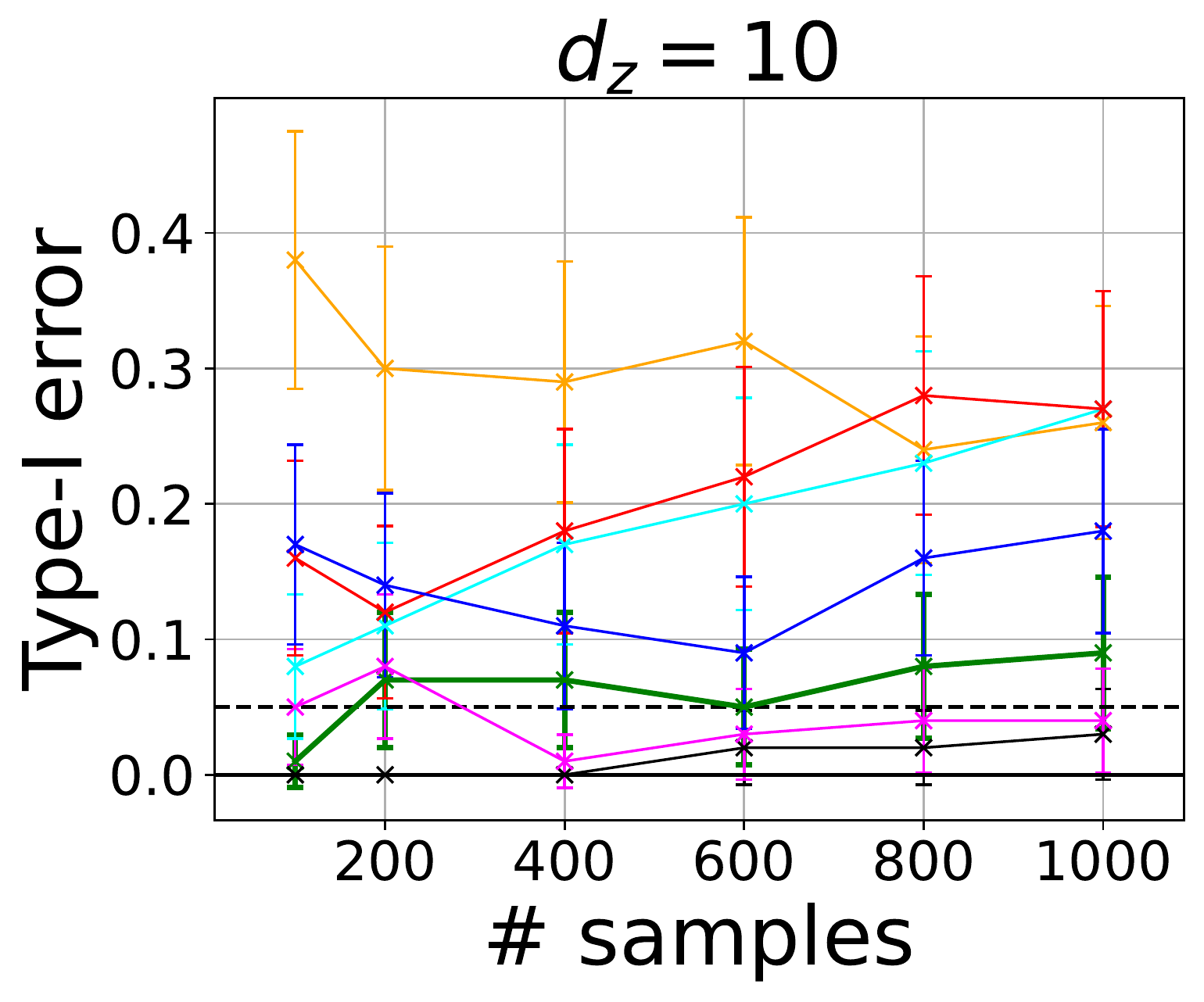}& \includegraphics[height=2.9cm]{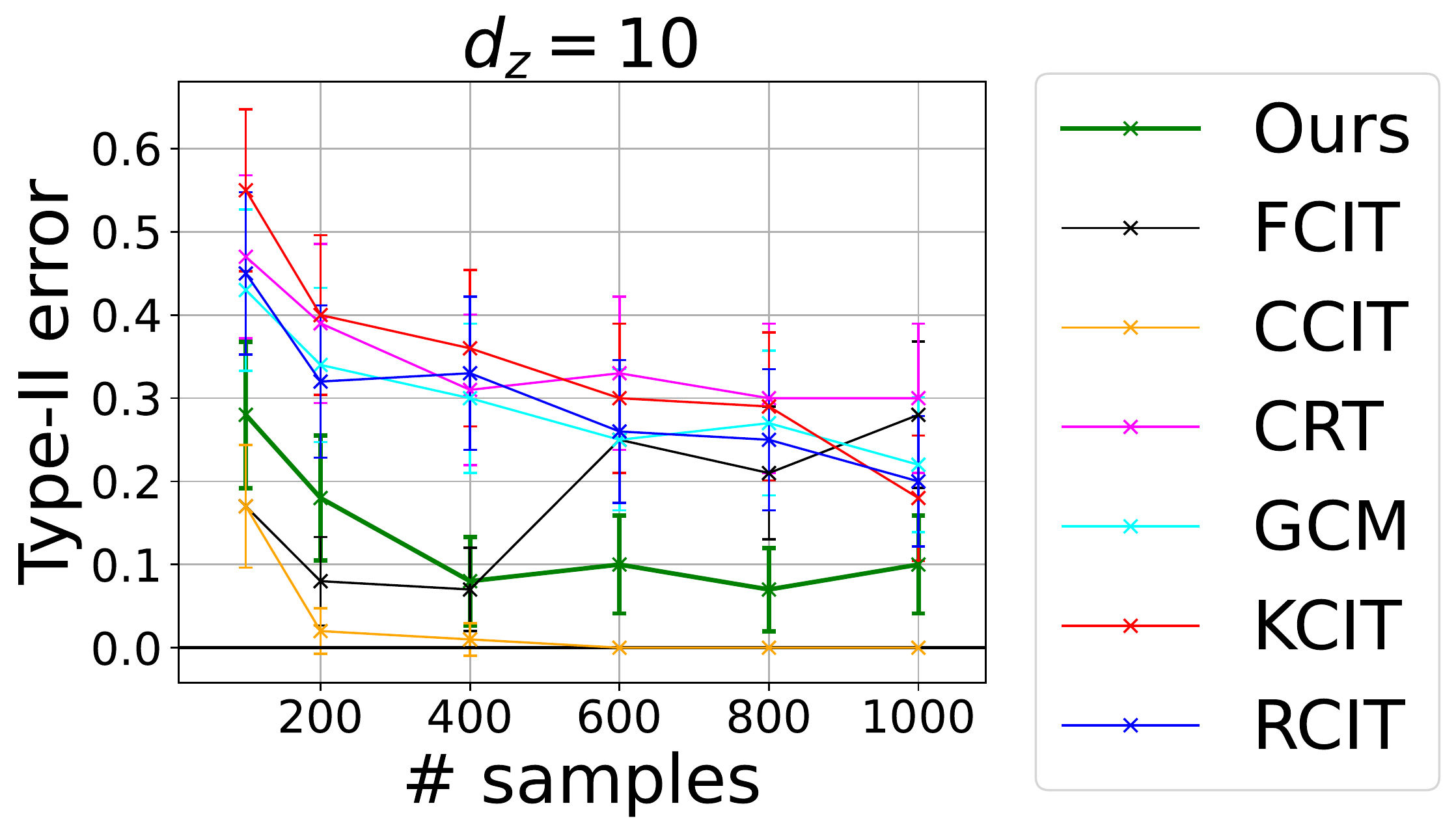} 
\end{tabular}
\caption{Comparison of the type-I error at level $\alpha=0.05$ (dashed line) and the type-II error (lower is better) of our test procedure with other SoTA tests on the two problems presented in~\eqref{li-exp-h0} and~\eqref{li-exp-h1}  with Laplace noises. Each point in the figures is obtained by repeating the experiment for 100 independent trials. (\emph{Left, middle-left}): type-I and type-II errors obtained by each test when varying the dimension $d_z$ from 1 to 10; here, the number of samples $n$ is fixed and equals to $1000$. (\emph{Middle-right, right}): type-I and type-II errors obtained by each test when varying the number of samples $n$ from 100 to 1000; here, the dimension $d_z$ is fixed and equals to $10$. 
\label{fig-exp-li-laplace-supp}}
\vspace{-0.3cm}
\end{figure*}

\begin{figure*}[h]
\begin{tabular}{cccc} 
\includegraphics[height=2.9cm]{new_figures_lap/nsamples_fixed_1000_li_dim_1_10_ks.pdf}& \includegraphics[height=2.9cm]{new_figures_lap/nsamples_fixed_1000_li_dim_1_10_aupc.pdf} & 
\includegraphics[height=2.9cm]{new_figures_lap/dim_fixed_10_li_ks.pdf}& \includegraphics[height=2.9cm]{new_figures_lap/dim_fixed_10_li_aupc.pdf} 
\end{tabular}
\caption{Comparison of the KS statistic and the AUPC of our testing procedure with other SoTA tests on the two problems presented in Eq.~\eqref{li-exp-h0} and Eq.~\eqref{li-exp-h1}  with Laplace noises. Each point in the figures is obtained by repeating the experiment for 100 independent trials. (\emph{Left, middle-left}): the KS statistic and AUPC (respectively) obtained by each test when varying the dimension $d_z$ from 1 to 10; here, the number of samples $n$ is fixed and equals to $1000$. (\emph{Middle-right, right}): the KS and AUPC (respectively), obtained by each test when varying the number of samples $n$ from 100 to 1000; here, the dimension $d_z$ is fixed and equals to $10$.
\label{fig-exp-li-ks-laplace-supp}}
\vspace{-0.3cm}
\end{figure*}

\begin{figure*}[h]
\begin{tabular}{cccc} 
\includegraphics[height=2.9cm]{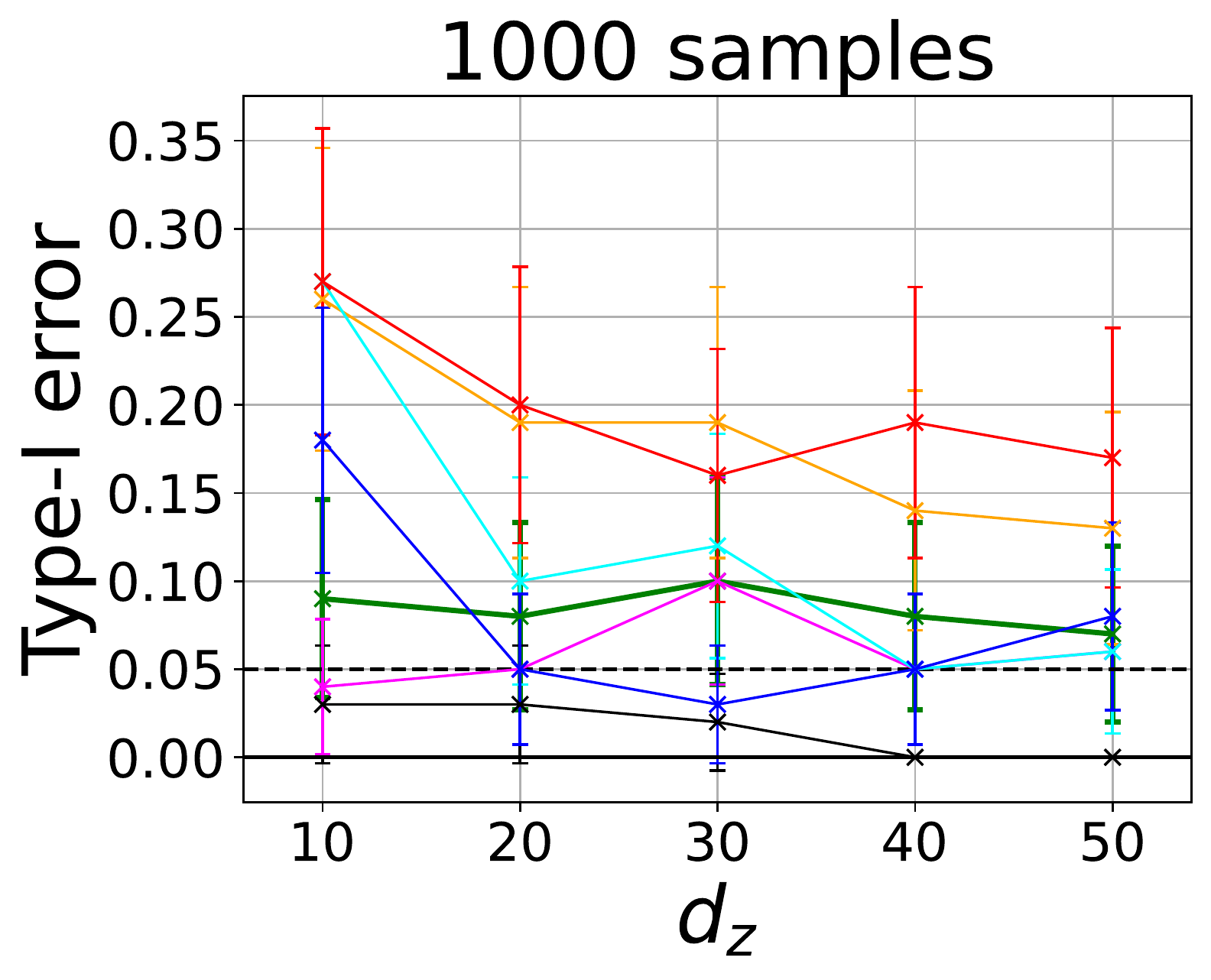}& \includegraphics[height=2.9cm]{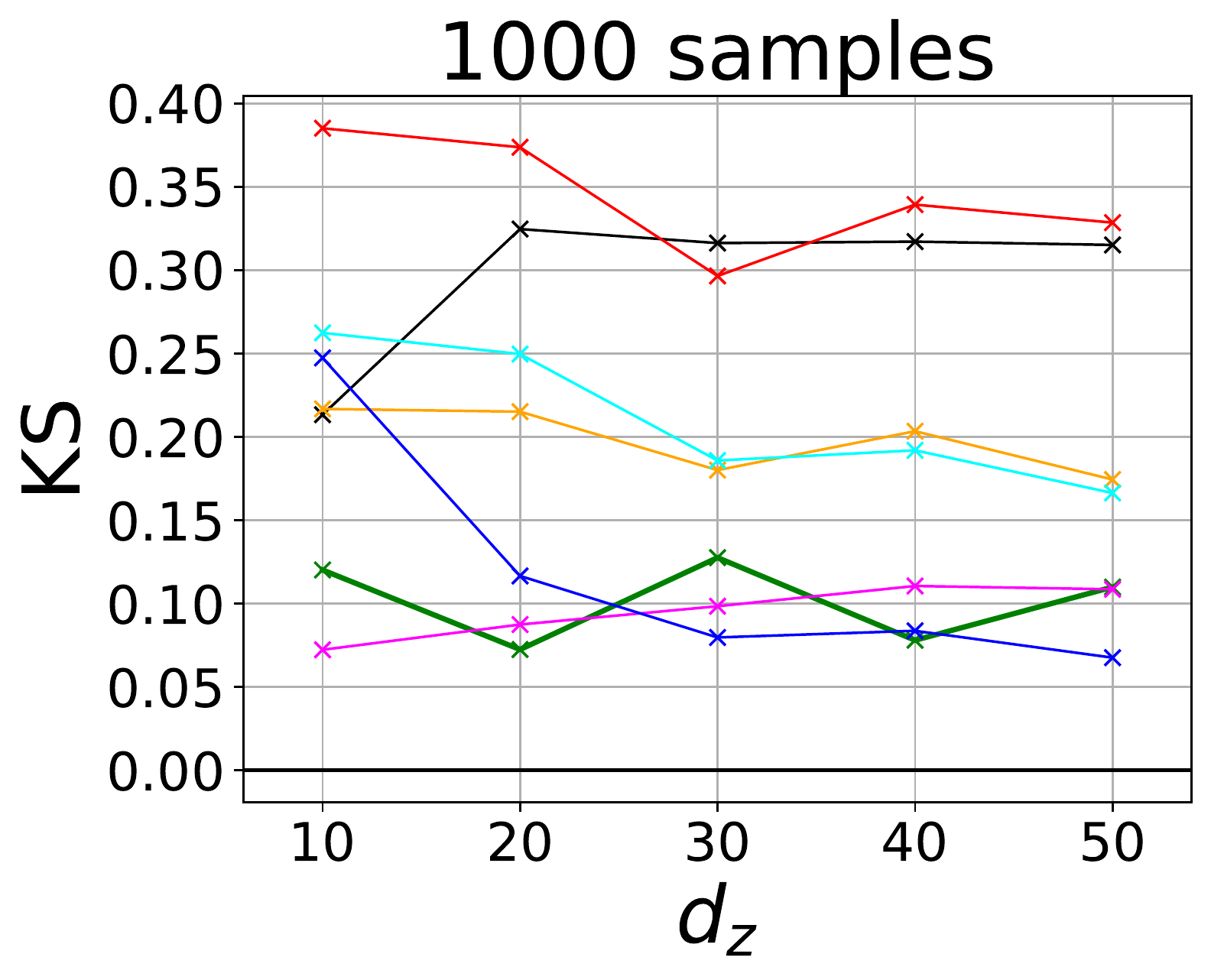} & 
\includegraphics[height=2.9cm]{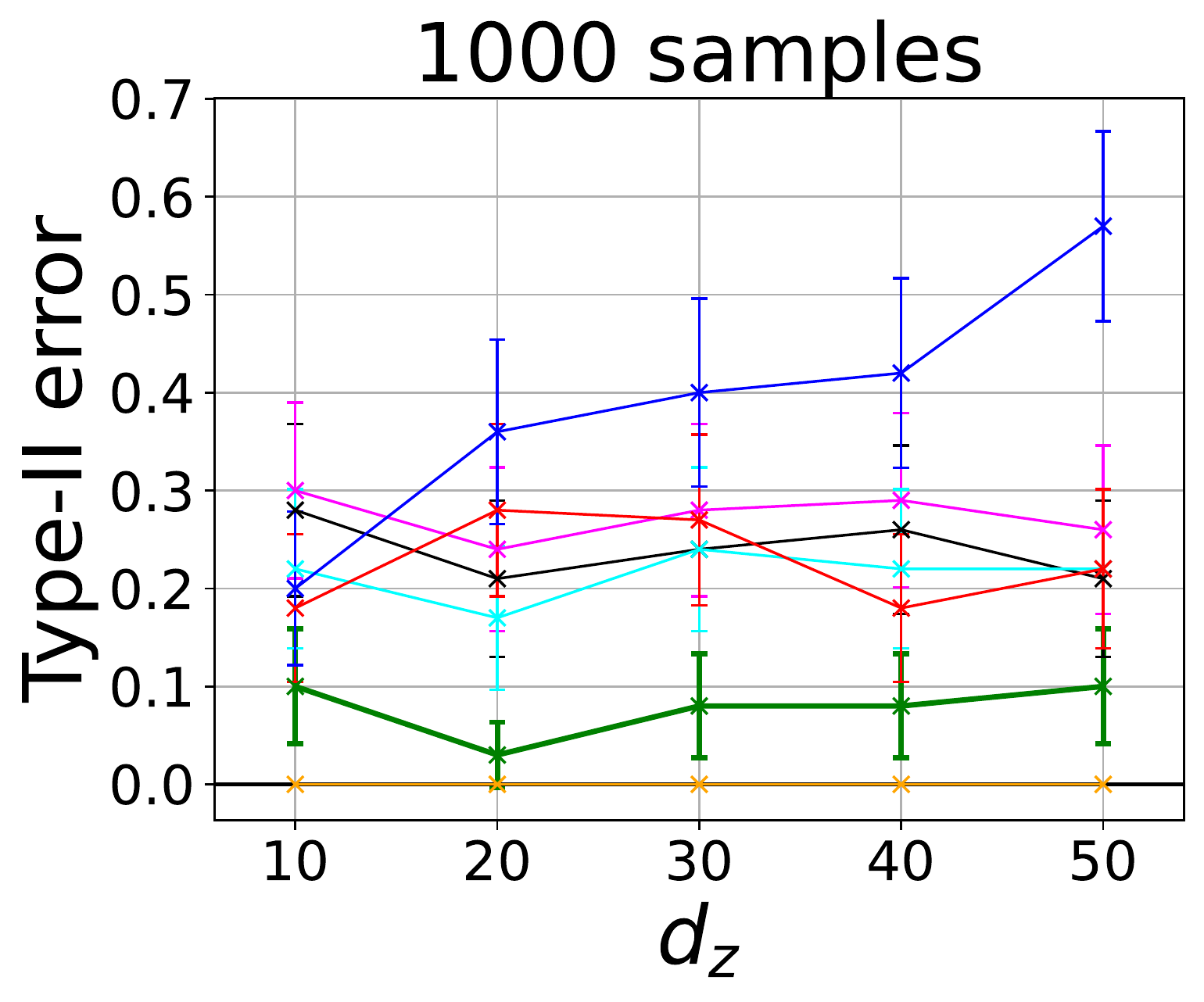}& \includegraphics[height=2.9cm]{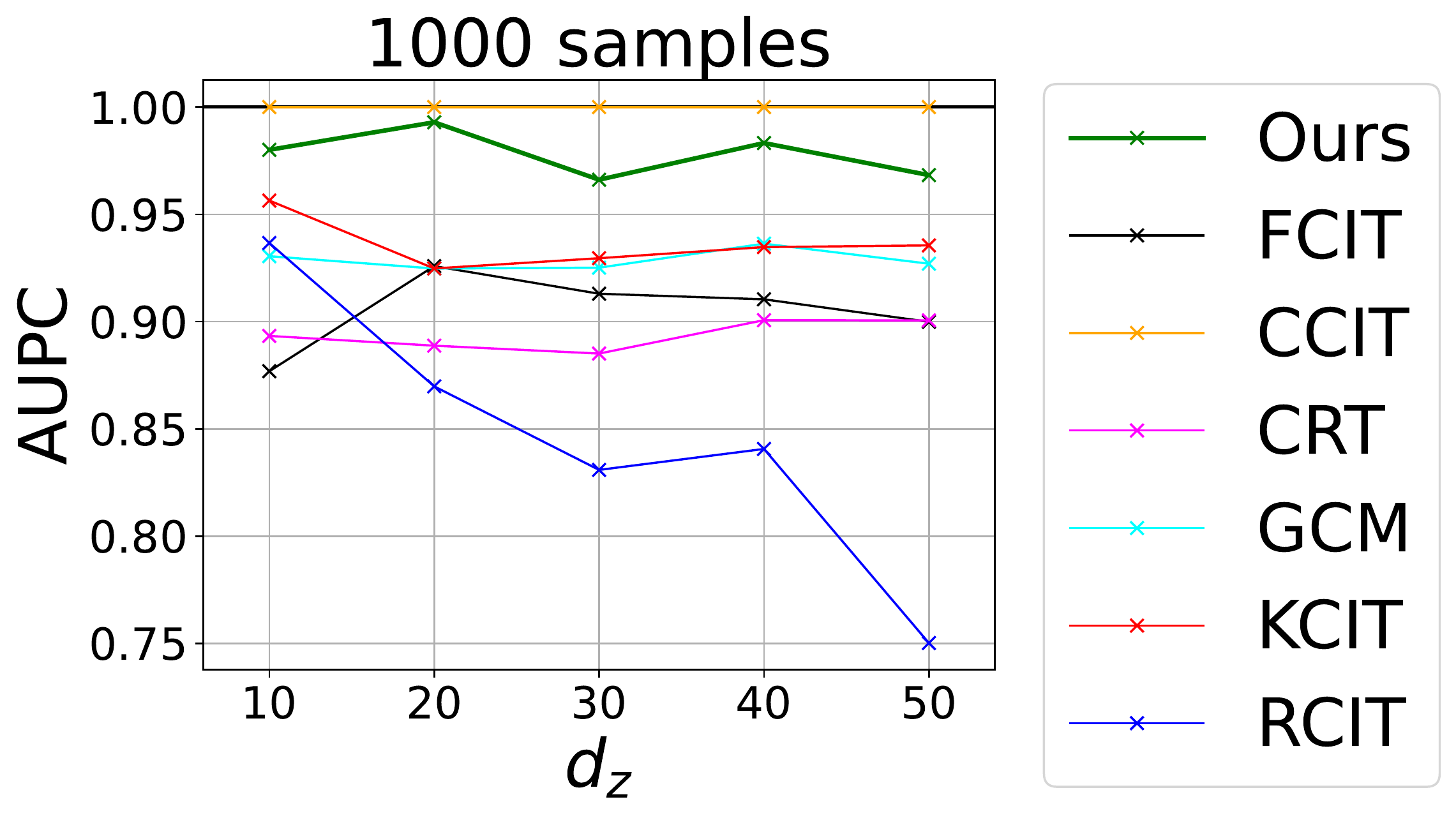}
\end{tabular}
\caption{Comparison of the type-I error at level $\alpha=0.05$ (dashed line), type-II error (lower is better), KS statistic and the AUPC of our testing procedure with other SoTA tests on the two problems presented in Eq.~\eqref{li-exp-h0} and Eq.~\eqref{li-exp-h1} with Laplace noises. Each point in the figures is obtained by repeating the experiment for 100 independent trials. In each plot the dimension $d_z$ is varying from 10 to 50; here, the number of samples $n$ is fixed and equals to $1000$. 
\label{fig-exp-li-highdim-laplace-supp}}
\vspace{-0.3cm}
\end{figure*}

\end{document}